\newtheorem{theorem}{Theorem}
\newtheorem{lemma}[theorem]{Lemma}
\title{Distributed Algorithms for \\Multi-Agent Multi-Armed Bandits with Collision}
\author{
  Daoyuan Zhou\textsuperscript{\rm 1},
  Xuchuang Wang\textsuperscript{\rm 2},
  Lin Yang\textsuperscript{\rm 1},
  Yang Gao\textsuperscript{\rm 1}
}
\begin{document}

\maketitle

\begin{abstract}
We study the stochastic Multiplayer Multi-Armed Bandit (MMAB) problem, where multiple players select arms to maximize their cumulative rewards. Collisions occur when two or more players select the same arm, resulting in no reward, and are observed by the players involved. We consider a distributed setting without central coordination, where each player can only observe their own actions and collision feedback. 
We propose a distributed algorithm with an adaptive, efficient communication protocol. The algorithm achieves near-optimal group and individual regret, with a communication cost of only  $\mathcal{O}(\log\log T)$. Our experiments demonstrate significant performance improvements over existing baselines. Compared to state-of-the-art (SOTA) methods, our approach achieves a notable reduction in individual regret. Finally, we extend our approach to a periodic asynchronous setting, proving the lower bound for this problem and presenting an algorithm that achieves logarithmic regret.
\end{abstract}

\maketitle
\section{Introduction}
Multi-agent multi-armed bandit (MMAB) is a fundamental extension of the canonical multi-armed bandit model~\citep{lai1985asymptotically} in sequential decision-making. A standard MMAB setting consists of $K \in \mathbb{N}^+$ arms and $M \in \mathbb{N}^+$ agents, where each arm yields rewards drawn from an unknown distribution.
Much of the existing literature on MMAB focuses on simplified settings in which collisions are ignored~\citep{szorenyi2013gossip,landgren2016distributed,chakraborty2017coordinated,kolla2018collaborative,martinez2019decentralized,feraud2019decentralized,wang2020optimal,bistritz2020cooperative,chawla2020gossiping,yang2021cooperative,chen2023demand}. That is, simultaneous pulls on the same arm yield independent rewards without interference. Under this assumption, the common goal is to maximize the total cumulative reward over a long time horizon $T \in \mathbb{N}^+$.
However, collisions are inevitable in real-world applications, and their effects cannot be ignored. In practice, simultaneous access to shared resources often leads to interference, degrading performance, or causing failure. For example, in cognitive radio (CR) networks~\citep{jouini2009multi}, devices competing for spectrum experience collisions that yield no reward, underscoring the need to explicitly model them.
In addition to collisions, individual regret becomes a crucial metric in decentralized systems. In decentralized systems like CR networks, each device acts independently without coordination. As a result, minimizing individual regret—rather than maximizing collective reward—becomes the primary and more realistic objective.

To improve cooperation efficiency under decentralized constraints, this paper studies the Collision-Sensing Multi-Agent Multi-Armed Bandits (CS-MMAB) model~\citep{besson2018multi}, where agents operate in a shared environment with access to collision feedback. In CS-MMAB, simultaneous pulls on the same arm by multiple agents result in observable collisions, which are the primary source of regret. Therefore, effective collision avoidance is critical for improving overall performance. Instead of simply avoiding collisions, some prior works~\citep{boursier2019sic,wang2020optimal,shi2021heterogeneous} exploit collisions as a medium for information exchange, using bit-encoding techniques to embed messages within collision patterns. This enables agents to transmit partial estimates and arm identities over time, even without explicit communication channels, facilitating collaborative arm identification and knowledge sharing among agents, and thus reducing redundant exploration. We further adopt this model in a distributed setting, where all agents participate equally in cooperation (communication) and learning (arm exploration). Such cooperation suits homogeneous systems like cooperative robotics or wireless networks. The distributed framework allows each agent to collaboratively explore and refine its decision-making to minimize individual regret, a critical performance metric in resource-constrained multi-agent systems.
\begin{table*}[t]
\small
\caption{Comparison of MMAB algorithms under collision settings. }
\centering
\begin{tabular}{|c|c|c|c|c|}
\hline
\textbf{Algo.} & \textbf{Ind. Regret} & \textbf{Grp. Regret} & \textbf{Comm. Regret} \\
\hline
\texttt{DPE1} \cite{wang2020optimal}    & $O(K\log T)$     & $O(K\log T)$     & $O(cK^{5/2}M^3\Delta^{-2})$ \\
\texttt{\texttt{SIC-MMAB}} \cite{boursier2019sic}   & ---                      & $O(K\log T)$     & $O(KM^3 \log^2(\frac{\log T}{\Delta^2}))$ \\
\texttt{SynCD(ours)}     & $O(\frac{K}{M}\log T)$ & $O(K\log T)$     & $O\left(M^3 \left(K \log\left(\frac{1}{\Delta_{\text{min}}}\right)+ \log\log T \right)\right)$ \\
\hline
\end{tabular}

\label{tab:mmab-comparison}
\end{table*}    
Existing works on decentralized Multi-Agent Multi-Armed Bandits (MMAB) with collisions have largely focused on balancing regret minimization and communication cost. A prominent line of work adopts leader-follower frameworks, such as \texttt{DPE1}~\citep{wang2020optimal}, which attain optimal group regret with limited communication. However, these approaches raise fairness concerns, as the leader bears the full learning burden, leading to imbalanced individual regret. Distributed algorithms like \texttt{SIC-MMAB}~\citep{boursier2019sic} address this issue by symmetrically distributing learning across agents. Yet, they still incur high individual regret. Once optimal arms are identified, agents tend to behave greedily and monopolize them, forcing others into suboptimal choices. These algorithms also require frequent communication, which is often impractical. 

Moreover, most existing methods rely on strict synchronization, making them unsuitable for asynchronous settings where agent activity varies over time. In asynchronous MMAB (AMMAB), lack of synchrony complicates coordination and limits the applicability of synchronized methods.

We summarize the key contributions of this paper as follows:

\textbf{(i)} We propose \texttt{SynCD}, a synchronized, distributed algorithm for the collision bandit setting. \texttt{SynCD} achieves near-optimal group regret 
$O\left(\sum_{k > M} \frac{\log T}{\Delta_k} \right),$
near-optimal individual regret 
$O\left( \frac{1}{M} \sum_{k > M} \frac{\log T}{\Delta_k} \right),$
and an extremely low communication cost of \( \mathcal{O}(\log \log T) \) bits. While this is not strictly constant, it remains asymptotically negligible and substantially improves over prior work \cite{boursier2019sic} in both efficiency and scalability. Our algorithmic and theoretical guarantees are summarized in Table~\ref{tab:mmab-comparison}, highlighting key improvements over existing methods.

\textbf{(ii)} We extend our framework to an asynchronous setting where agents have periodic activations. In this setting, we design a Distributed algorithm that adapts to periodic agent schedules and establishes the first known regret lower bound for this variant of asynchronous MMAB.

\textbf{(iii)} We perform extensive simulations comparing \texttt{SynCD} with existing baselines, including \texttt{SIC-MMAB}\citep{boursier2019sic} and \texttt{DPE1}\citep{wang2020optimal}. Results show that \texttt{SynCD} consistently and significantly outperforms \texttt{SIC-MMAB} in all three aspects: group regret, individual regret, and communication efficiency. Moreover, \texttt{SynCD} achieves substantially lower individual regret than \texttt{DPE1}, establishing its empirical superiority under synchronized settings.

\section{Related Work}
\label{sec:related_work}

Many works~\citep{szorenyi2013gossip,landgren2016distributed,chakraborty2017coordinated,kolla2018collaborative,martinez2019decentralized,wang2020optimal,bistritz2020cooperative,yang2021cooperative,chen2023demand} have assumed centralized control or idealized \emph{collision-free} settings. However, collisions are inevitable in practice, and agents must coordinate to minimize interference and avoid suboptimal performance.

To address coordination through collision feedback, early methods in homogeneous settings~\citep{kumar2010early,liu2010distributed} relied on pre-agreements among agents, using fixed rankings to sequentially select arms and avoid collisions. Later, randomized strategies such as Rand Orthogonalization and Musical Chairs introduced probabilistic exploration. The Musical Chairs protocol was further combined with Explore-Then-Commit (ETC) strategies to stabilize collision-free allocations~\citep{nayyar2016regret}. Orthogonalization was also embedded into the UCB framework to balance exploration and exploitation~\citep{Besson2018}.

In heterogeneous environments, achieving optimal matching proved more challenging. Early algorithms~\citep{Avner2019,Darak2019} only guaranteed Pareto efficiency, which motivated the development of communication-based approaches. For example, a leader-follower framework was proposed in~\citep{Kalathil2014}, where a central agent coordinated exploration via real-valued data exchange, thereby reducing communication but introducing potential bottlenecks. In contrast, a distributed Markov-chain mechanism, known as “Game of Thrones,” was introduced in~\citep{Bistritz2020}, enabling agents to dynamically adjust matchings without centralized control.

A significant advancement was achieved by the SIC-MMAB algorithm~\citep{boursier2019sic}, which repurposed collisions as communication signals, allowing agents to share quantized mean estimates and attain near-centralized performance. Building on this idea, DPE1~\citep{wang2020optimal} adopted a hybrid leader-follower design that minimized communication while maintaining asymptotic optimality. In heterogeneous environments, BEACON~\citep{Shi2021} further improved efficiency by combining batched learning with differential signaling, using a leader to synchronize epochs and approximate centralized performance with limited communication.

Recent efforts have moved beyond strong synchronization assumptions towards more realistic, temporally uncoordinated models. These include asynchronous settings~\citep{bonnefoi2017multi,dakdouk2021multi,richard2023asynchronous} where agent actions are intermittent, and dynamic models~\citep{avner2014concurrent,rosenski2016multi,bande2019collaborative,boursier2019sic,darak2019multi} that allow for agent arrival and departure. Although progress has been made in centralized asynchronous settings~\citep{richard2023asynchronous}, achieving robust guarantees in decentralized, asynchronous, and dynamic environments remains a key open challenge.

\section{Model}
\label{sec:model}
We consider a \textit{stochastic multi-agent multi-armed bandit} (MMAB) problem with a set of agents \( \mathcal{M} \coloneqq \{1, 2, \ldots, M\} \) and a set of arms \( \mathcal{K} \coloneqq  \{1, 2, \ldots, K\} \), where \( M < K \). The agents are active over a time horizon of \( T \) rounds.
At each round \( t \in \mathcal{T} := \{1, 2, \ldots, T\} \), each agent \( m \in \mathcal{M} \) independently selects an arm \( k \in \mathcal{K} \) to pull. Each arm \( k \) yields an i.i.d. reward sequence \( (X_t^{(m)}(k))_{t \in [T]} \) to each agent \( m \), where \( X_t^{(m)}k) \in [0, 1] \) and \( \mathbb{E}[X_t^{(m)}(k)] = \mu(k) \). For the sake of simplicity, we assume that the arm means are strictly ordered as $\mu(1)  > \mu(2) > \cdots > \mu(K).$
This assumption does not have any impact on the analysis or the problem formulation.
The instantaneous reward received by agent \( m \) at time \( t \) is defined as
\[
r_t^{(m)}(k) := X_t^{(m)}(k) \cdot (1 - \eta_t(k)),
\]
where \( \eta_t(k) = 1 \) if arm \( k \) is selected by multiple agents at time \( t \), and \( \eta_t(k) = 0 \) otherwise. That is, collisions lead to zero reward for all agents pulling the same arm.

The goal is to minimize the cumulative \textit{pseudo-regret}, defined as
\[
\mathbb{E}[R(T)] = T \sum_{k=1}^{M} \mu(k) - \mathbb{E}_{\mu}\left[\sum_{t=1}^{T} \sum_{m \in \mathcal{M}} r^{(m)}_t\right],
\]
which measures the gap between the optimal assignment of the top-\( M \) arms and the actual rewards obtained.

To capture fairness, we define the individual cumulative regret as
\[
R_{\text{ind}}(T) := \max_{m \in \mathcal{M}} \mathbb{E}_\mu \left[ T \bar{\mu}^* - \sum_{t=1}^T r_t^{(m)} \right],
\]
where \(\bar{\mu}^*\) is the average of the top \(M\) arm means:
$
\bar{\mu}^* = \frac{1}{M} \sum_{k=1}^M \mu(k).
$
For detailed regret analysis, we then decompose the regret into three parts,
\begin{align}
\mathbb{E}[R(T)] = 
R^{\mathrm{init}} + R^{\mathrm{comm}} + R^{\mathrm{explo}}.
\end{align}
\begin{align}
\small
\text{where} \quad
\left\{
\begin{aligned}
R^{\mathrm{init}} &= T_{\mathrm{init}} \sum_{k=1}^{M} \mu(k)
- \mathbb{E}_{\mu} \left[\sum_{t=1}^{T_{\mathrm{init}}} \sum_{j=1}^{M} r^{(j)}_t\right], \\
R^{\mathrm{comm}} &= \mathbb{E}_{\mu} \left[ \sum_{t \in \mathrm{Comm}} \sum_{j=1}^{M} 
\left( \mu(j) - r^{(j)}_t \right) \right], \\
&\text{with Comm the set of communication steps}, \\
R^{\mathrm{explo}} &= \mathbb{E}_{\mu} \left[ \sum_{t \in \mathrm{Explo}} \sum_{j=1}^{M} 
\left( \mu(j) - r^{(j)}_t \right) \right], \\
&\text{with } \mathrm{Explo} = \{ T_{\mathrm{init}} + 1, \ldots, T \} \setminus \mathrm{Comm}.
\end{aligned}
\right.
\end{align}

\section{Algorithms}
\label{sec:synalgo}
In this section, we introduce a Synchronous Communication-Efficient Distributed (\texttt{SynCD}) algorithm that achieves low individual regret and low communication overhead. In synchronized, collision-sensing environments, \textit{forced collisions} can be utilized to transmit messages, enabling centralized-optimal regret. However, prior protocols often incur substantial communication costs, primarily due to redundant communication rounds and large message payloads.

Our key improvement lies in significantly reducing the communication volume per round, particularly under sparse communication regimes. This is achieved via a threshold-based triggering mechanism~\citep{yang2023cooperative}, where each agent maintains an Estimated Confidence Radius (ECR) and initiates communication only when its ECR drops below a fraction \(\beta\) of its previous value. This ensures that communication occurs only when necessary. To mitigate performance degradation due to infrequent communication, agents eliminate suboptimal arms locally using both their own statistics and the partial updates received from others. Unlike \texttt{SIC-MMAB}~\citep{boursier2019sic}, which transmits full arm statistics, we adopt an adaptive differential communication strategy~\citep{shi2021heterogeneous}. Quantization precision is dynamically adjusted based on global estimation accuracy, ensuring consistency while minimizing message size.
This adaptive scheme leads to a low total communication cost and significantly improves efficiency over previous methods.

Moreover, leader-follower strategies or greedy approaches like \texttt{SIC-MMAB} and DPE1 often incur high individual regret due to imbalanced arm usage. In contrast, our algorithm combines arm elimination with uniform exploitation over the set of optimal arms. This ensures fair and collision-free access to top arms, thereby achieving improved individual regret.

\begin{algorithm}[t]
\small
\caption{Distributed Bandit Algorithm for Agent $m$}
\label{alg:SynCD}
\begin{algorithmic}[1]
\State \textbf{Input:} Time horizon $T$, exploration slack $\beta > 1$, arm set $\mathcal{K}$ with $|\mathcal{K}| = K$
\Comment{Initialization}
\State Initialize $T_0 = \lceil K \log T \rceil$, $\mathcal{K}_t = \mathcal{K}$, $K_t = K$
\State Initialize ECR=1, $T_t(k)$, $n_t^{(m)}(k)$, $\tilde{\mu}_t^{(m)}(k) = 0$, $\forall k$
\State $\text{Acc} \gets \emptyset$, $\text{Rej} \gets \emptyset$
\State Initialization $j \gets \textsc{InitPhase}(M,\mathcal{K})$
\Comment{Orthogonalization + Rank Assignment}
\While{$|\text{Acc}| < M$ and $t < T$}
    \Comment{Exploration}
    \If{Acc or Rej changed and no communication}
        \State Pull uniformly for $K_t \cdot M$ rounds
    \Else
        \For{cycle $= 1$ to $K_t$}
            \For{$p = 0$ to $M - 1$}
                \State $t \gets t + 1$ 
                \State \textbf{if} $p \in$ \textsc{ExploitSlot}$(j)$ 
                \textbf{then} $k \gets \text{Acc}[(p - j) \bmod |\text{Acc}|]$ 
                \textbf{else} $k \gets \mathcal{K}_t[(p - (j + M - M_t)) \bmod K_t]$
                \State Pull arm $k$
            \EndFor
        \EndFor
    \EndIf
    \If{Collision occurred}   \Comment{Communication}
        \State \textsc{Comm\_a}$(\text{Rej}, \text{Acc})$ and update $t$
        \State Update $\text{Rej}$ and $\text{Acc}$
    \Else
        \State $n_t^{(m)}(k) \gets n_t^{(m)}(k) + M_t$
        \State $T_t(k) \gets T_t(k) + K_t \cdot M_t$
        \If{$\text{ECR}_t \le \beta \cdot \text{ECR}_{\text{last}}$}
            \State Update $\text{ECR}_{\text{last}}$, $X^{(m)}_{\text{last}}(k)$, $T_{\text{last}}(k)$, $n^{(m)}_{\text{last}}(k)$
            \State \textsc{Comm}$(\delta_t^{(m)}(k))$ and update
            \State Update $\tilde{\mu}^{(m)}(k)$
        \EndIf
        \State Update $\hat{\mu}_t^{(m)}(k)$
        \State Update $\text{Rej}$ and $\text{Acc}$ using Rules (\ref{acc}) and (\ref{rej})
        \EndIf

    \State $\mathcal{K}_t \gets \mathcal{K}_t \setminus (\text{Rej} \cup \text{Acc})$
\EndWhile
\State Agents take turns pulling arms in $\text{Acc}$ via round-robin until $t = T$ \Comment{Exploitation}
\end{algorithmic}
\end{algorithm}
\subsection{Elimination-Based Learning Policy}
To enable effective decentralized learning, an appropriate initialization is necessary to ensure coordination among agents. Before exploration, we adopt the initialization procedure $\textsc{InitPhase}(M,\mathcal{K})$ from \texttt{DPE1}~\citep{wang2020optimal}, which consists of an \textit{orthogonalization phase} followed by a \textit{rank assignment phase}, assigning each agent a unique index $j \in \{0, \dots, M{-}1\}$. This symmetry breaking is essential for structured communication and collaborative learning in a distributed setting. See Appendix for detailed implementation.

Consider \(M\) agents and an active arm set \(\mathcal{K}_t\) at time \(t\), where \(M_t\) agents explore and \(K_t = |\mathcal{K}_t|\) arms are active. Each phase contains \(M K_t\) steps organized into \(K_t\) cycles of length \(M\). In each cycle, every agent exploits for \(M-M_t \) steps and explores for \(M_t\) steps, scheduled cyclically.

The exploitation time slots for agent \(j\), denoted \(\textbf{ExploitSlots}(j)\), are defined as follows: 
\[
\small
\begin{cases}
[j,\ j + M - M_t) \quad \quad \quad \text{if } j + M - M_t \leq M \\
[j,\ M) \cup [0,\ (j + M - M_t) \bmod M) & \text{otherwise}
\end{cases}
\]
During these slots, agent \(j\) pulls arms from the accepted set \(\text{Acc}\) in a round-robin fashion; during other slots, it pulls uniformly from \(\mathcal{K}_t\). This schedule ensures uniform and collision-free access to both accepted and candidate arms.

After identifying the top arms, agents enter a pure exploitation phase, cyclically pulling from the optimal set using the same collision-avoidance schedule to ensure low individual regret.

\paragraph{Elimination and Acceptance.} 
A key goal in our algorithm is to \emph{accept optimal arms} and \emph{reject suboptimal ones} as early as possible. 
These occur in two stages: (1) immediately after communication rounds, when agents jointly update arm statistics and eliminate arms based on global information; (2) locally at the end of each exploration phase, using each agent’s private observations.

Let $\hat{\mu}_{t}^{(m)}(k)$ denote agent $m$’s estimate of the mean reward of arm $k$ at time $t$, and let $\tilde{\mu}^{(m')}(k)$ denote the most recently reconstructed estimate of agent $m'$ for arm $k$, which is updated by cumulatively adding the reward differences transmitted during each communication round.
Let $n_t^{(m')}(k)$ be the total number of times agent $m'$ has pulled arm $k$ up to time $t$, and $n_{\text{last}}^{(m')}(k)$ be the number of pulls prior to the last synchronization. Similarly, let $X^{(m')}_t(k)$ and $X_{\text{last}}^{(m')}(k)$ denote the total accumulated rewards for arm $k$ at the current time and at the last synchronization, respectively.
Define $T_{\text{last}}^{(m')}(k)$ as the estimated number of total pulls (across agents) on arm $k$ prior to the last communication. Then, the updated estimator is given by:
\[
\small
\hat{\mu}_{t}^{(m)}(k) = \frac{\displaystyle \sum_{m'=1}^{M} n_{\text{last}}^{(m')}(k)\,\tilde{\mu}^{(m')}(k) + X^{(m)}_t(k) - X_{\text{last}}^{(m)}(k)}{T_{\text{last}}(k) + n_t^{(m)}(k) - n_{\text{last}}^{(m)}(k)}
\]

Let $N_t(k)$ be the sample count for arm $k$, and define the confidence radius as $R(k,t) = 2\beta\sqrt{\frac{\log(1/\delta)}{2N_t(k)}}$, with confidence level $1 - \delta$.
We define the lower and upper confidence bounds of arm \( k \) for agent \( m \) at time \( t \) as
\(
\mathrm{LCB}_t^{(m)}(k) := \hat{\mu}_t^{(m)}(k) - R(k,t),\quad
\mathrm{UCB}_t^{(m)}(k) := \hat{\mu}_t^{(m)}(k) + R(k,t).
\)

Arm \( k \) is \textbf{accepted} if
\begin{align}
    \label{acc}
    \left| \left\{ i \in \mathcal{K}_t \;\middle|\; \mathrm{LCB}_t^{(m)}(k) \ge \mathrm{UCB}_t^{(m)}(i) \right\} \right| \ge K_t - M_t
\end{align}

Arm \(k \) is \textbf{eliminated} if
\begin{align}
    \label{rej}
    \left| \left\{ i \in \mathcal{K}_t \;\middle|\; \mathrm{UCB}_t^{(m)}(k) \le \mathrm{LCB}_t^{(m)}(i) \right\} \right| \ge M_t.
\end{align}

If any arm is accepted or eliminated and no communication is scheduled, an extra round is triggered to synchronize the active arm sets.

\subsection{Adaptive Communication Protocol}
/Our approach combines a threshold-based triggering mechanism with adaptive quantization and truncation, ensuring accurate synchronization of arm statistics while keeping communication overhead low.

\textbf{Tiggering-Based Mechanism.} To manage communication efficiently, we adopt a threshold-based trigger mechanism inspired by~\citet{yang2023cooperative} to avoid unnecessary communication. Each agent tracks an \textit{Estimated Confidence Radius} $\text{ECR}(k) = \sqrt{\frac{\log(1/\delta)}{2T_t(k)}},
$ for each active arm \(k\)
where \(T_t(k)\) is the number of pulls of arm \(k\) up to time \(t\). Since agents explore arms uniformly and synchronously, their ECRs remain aligned. Communication is triggered when \(\text{ECR}(k)\) decreases by a factor of \(\beta\) from its value at the previous communication, ensuring communication only occurs when local estimates may have diverged significantly. Unlike the conventional doubling trick that initializes ECR at zero, we set it to 1, guaranteeing a constant number of communication rounds but delaying the first communication to roughly 
$O(\log T)$ pulls.

\textbf{Adaptive Quantization and Truncation.}  
To minimize communication volume, agents do not transmit raw statistics. Instead, they communicate the quantized difference between empirical means computed at two successive communication rounds. For arm \(k\) and agent \(m\), the empirical mean \(\bar{\mu}_t^{(m)}(k)\) is quantized using:
$\tilde{\mu}^{(m)}(k) = \text{Ceil}\left(\bar{\mu}_t^{(m)}(k)\right)$with$  \left\lceil 1 + \frac{\log T_t(k)}{2} \right\rceil$ bits,
where the precision increases with \(T_t(k)\), reflecting improved estimate accuracy over time. Rather than transmitting absolute values, agents compute the difference \(\tilde{\delta}_k^{(m)} = \tilde{\mu}_k^{(m)} - \tilde{\mu}_{k,\text{last}}^{(m)}\), and only transmit its most significant non-zero bits. For instance, the binary string \texttt{000110} is truncated to \texttt{110}, reducing redundancy significantly.

\textbf{Communication Protocol.}
Agents communicate using a forced-collision signaling mechanism, where binary messages are encoded through intentional collisions on a designated arm: pulling the arm signals a bit 1, while staying idle signals a bit 0. To support variable-length messages, communication follows a signal-then-transmit routine and ends when a signal collision is detected. The protocol is summarized in Algorithm~\ref{alg:comm}.
\begin{algorithm}[htbp]
\small
\caption{\textsc{Comm}: Main Communication Protocol}
\label{alg:comm}
\textbf{Input:} $\delta_t^{(m)}(k)$ (local statistic), $j$ (internal rank) \\
\textbf{Output:} $\tilde{\mathbf{S}}$ (quantized statistics of all active players)
\begin{algorithmic}[1]
    \State $\tilde{\delta}_t^{(m)}(k) \gets \texttt{Quantization}(\delta_t^{(m)}(k))$
    \State $E_t := \{(i, l, k) \in [M] \times [M] \times \mathcal{K}_t \mid i \neq l\}$
    \For{$(i, l, k) \in E_t$}
        \If {$i = j$}
            \State \texttt{Send}($j, \tilde{\delta}_t^{(m)}(k), l$)
        \ElsIf{$l = j$}
            \State $\tilde{\mathbf{S}} \gets \texttt{Receive}(l)$
        \Else
            \State \texttt{Wait}($j, l$)
        \EndIf
    \EndFor
\end{algorithmic}
\end{algorithm}

Also, communication may be triggered when structural changes occur in the arm set, such as the acceptance or elimination of an arm. To initiate communication, the sender pulls a designated arm during its exploration window. If another agent simultaneously explores the same arm, a collision occurs, triggering a synchronization protocol. Upon detecting the collision, all agents pause until the current exploration ends, after which they collectively process the received updates. This procedure ensures that all agents maintain a consistent view of the active arm set before entering the next phase, thereby preserving the coherence and correctness of distributed decisions.
See details in algorithm ~\ref{comm_a}.
\begin{algorithm}[htbp]
\small
\caption{\textsc{Comm\_a}: Arm Synchronization via Collision}
\label{comm_a}
\textbf{Input:} \texttt{Acc}, \texttt{Rej}, $j$ (own internal rank) \\
\textbf{Output:} Updated \texttt{Acc}, \texttt{Rej}
\begin{algorithmic}[1]
\State $E_t := \{(i, l) \in [M] \times [M] \mid i \neq l\}$
\For{$(i, l) \in E_t$}
    \For{$\texttt{mode} \in \{\texttt{Acc}, \texttt{Rej}\}$}
        \If{$i = j$} \Comment{Sender}
            \For{$k \in \mathcal{K}_t$}
               \State Pull $l$ if $k$ newly marked in \texttt{mode}, else $j$
            \EndFor
        \ElsIf{$l = j$} \Comment{Receiver}
            \For{$k \in \mathcal{K}_t$}
                \State Pull arm $j$
                \If{collision detected}
                    \State Update \texttt{mode} $\gets k$

                \EndIf
            \EndFor
        \Else \Comment{Waiter}
            \For{$k \in \mathcal{K}_t$}
                \State Pull arm $j$
            \EndFor
        \EndIf
    \EndFor
\EndFor
\end{algorithmic}
\end{algorithm}

\subsection{Theoretical Analysis}
In this section, we present theoretical guarantees for the proposed learning and communication framework. We provide upper bounds on group and individual regret, as well as the total communication cost, demonstrating that our algorithm is both learning-efficient and communication-efficient in the distributed multi-agent setting. The main result is summarized in the theorem below.

\begin{theorem}
\label{thm:bandit}
Set $\delta = 1/T^2$, $\beta > 1$. Then Algorithm~\ref{alg:SynCD} achieves the following performance guarantees, under appropriately calibrated confidence intervals and communication control. 
{\small
\begin{itemize}
\item[(i)] \textbf{Group regret:}
\[
\begin{aligned}
&\mathbb{E}[R(T)] \leq
\sum_{k > M}
\left[
\frac{32\beta^2(\beta + 2)\log T}{\Delta(k)} +
MK\Delta(k)
\right]+C
\\
&+ 2M^3 K + 2M^3 \cdot \left(
1 + \frac{1}{2} \log \left( \frac{\log T}{ \beta^{2}} + M K\right)
\right)+\\
&
2M^3 \left(\sum_{k=1}^K \log_{\beta}\left(\frac{8 \beta}{\Delta(k)}\right)-1\right)
\left[
7 + \log_2\left(1 + \beta + \sqrt{\frac{M \ln 2}{2}}\right)
\right]\\
\end{aligned}
\]

\item[(ii)] \textbf{Individual regret:}
\[
\begin{aligned}
&\mathbb{E}[R_{\mathrm{ind}}(T)] \leq
\frac{1}{M}\Bigg(\sum_{k > M}
\left[
\frac{32\beta^2(\beta + 2)\log T}{\Delta(k)} +
MK\Delta(k)
\right]+C
\\
&+ 2M^3 K + 2M^3 \cdot \left(
1 + \frac{1}{2} \log \left( \frac{\log T}{ \beta^{2}} + M K\right)
\right)+\\
&
2M^3 \left(\sum_{k=1}^K \log_{\beta}\left(\frac{8 \beta}{\Delta(k)}\right)-1\right)
\left[
7 + \log_2\left(1 + \beta + \sqrt{\frac{M \ln 2}{2}}\right)
\right]\Bigg)\\
\end{aligned}
\]
\end{itemize}
}
\noindent where $\Delta(k) = \mu(k) - \mu(M+1)$ if $k \le M$ and $\Delta(k) = \mu(M) - \mu(k)$ if $k > M$ and C denotes the constant regret during initialization.
\end{theorem}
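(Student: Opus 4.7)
The plan is to follow the regret decomposition $\mathbb{E}[R(T)] = R^{\mathrm{init}} + R^{\mathrm{comm}} + R^{\mathrm{explo}}$ already introduced in the model section, bound each piece separately, and then derive individual regret from the symmetry of the round-robin exploitation/exploration schedule. The initialization term $R^{\mathrm{init}}$ is immediately absorbed into the constant $C$ because $\textsc{InitPhase}$ runs for a length independent of $T$.

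For $R^{\mathrm{explo}}$, I would first establish a clean ``good event'' $\mathcal{E}$ on which every empirical mean $\hat{\mu}_t^{(m)}(k)$ used by the algorithm stays within its confidence radius $R(k,t) = 2\beta\sqrt{\log(1/\delta)/(2N_t(k))}$ of the true mean $\mu(k)$. Setting $\delta = 1/T^2$ and using Hoeffding together with a union bound over agents, arms, and rounds gives $\mathbb{P}(\mathcal{E}^c) = O(1/T)$, whose contribution is absorbed into $MK\Delta(k)$-type terms. On $\mathcal{E}$, the acceptance/rejection rules~\eqref{acc}, \eqref{rej} fire as soon as $2R(k,t) \le \Delta(k)$; the extra factor $\beta$ inside $R(k,t)$ reflects the slack introduced by the ECR trigger (the true confidence radius can be up to $\beta$ times its value at the last synchronization). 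Inverting this inequality yields that arm $k$ is cleared after $O(\beta^2 \log T / \Delta(k)^2)$ joint pulls, and multiplying by $\Delta(k)$ and summing over suboptimal arms produces the $\sum_{k>M} 32\beta^2(\beta+2)\log T/\Delta(k)$ term. The $MK\Delta(k)$ remainder accounts for the one extra synchronization phase triggered whenever $\mathrm{Acc}$ or $\mathrm{Rej}$ changes.

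For $R^{\mathrm{comm}}$, I would count two disjoint sources of communication rounds. (a)~ECR-triggered rounds: because $\mathrm{ECR}_t(k) = \sqrt{\log(1/\delta)/(2T_t(k))}$ starts at $1$ and must shrink by a factor $\beta$ before the next trigger, and because $T_t(k)$ is capped by $O(\log T/\Delta(k)^2)$ (after which the arm is eliminated), the number of triggers for arm $k$ is at most $\log_\beta(8\beta/\Delta(k))$, giving the $\sum_k \log_\beta(8\beta/\Delta(k))$ factor. (b)~Accept/eliminate-triggered rounds: at most $K$ in total. Each communication round costs $M(M-1)$ pairwise sender--receiver slots from the loop over $E_t$ in Algorithm~\ref{alg:comm}, each slot taking at most the quantized message length $\lceil 1 + \tfrac{1}{2}\log T_t(k)\rceil$ bits plus $O(1)$ framing, and each bit costs $M$ in group regret because all $M$ agents are paused---this is where the overall $M^3$ multiplier and the $\log_2(1+\beta+\sqrt{M\ln 2/2})$ truncation bound appear. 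Combining these with the $2M^3 K$ term for accept/eliminate synchronization (Algorithm~\ref{comm_a}, which runs over all $(i,l)\in E_t$ and $k \in \mathcal{K}_t$) reproduces the stated communication terms.

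The individual regret bound follows from the algorithm's explicit round-robin structure: both the \textbf{ExploitSlots} schedule and the cyclic exploration order are constructed so that over any $M$-round cycle each agent plays the same multiset of arms, only shifted by its rank $j$. Consequently, any per-step regret charged to the group (from exploration, communication pauses, and elimination-synchronization phases alike) is incurred symmetrically across agents up to an additive $O(1)$ phase offset, and taking the maximum over $m$ loses only the factor $1/M$ in the worst-case-per-agent regret. The main technical obstacle I expect is the careful bookkeeping in part (b) of the communication analysis: tracking the adaptive bit length of $\tilde{\delta}_t^{(m)}(k)$ after truncation, ensuring the ``signal-then-transmit'' termination of variable-length messages does not blow up the per-round cost, and verifying that the ECR-synchronization across agents (which relies on uniform sampling across active arms) remains valid when arms are being added to $\mathrm{Acc} \cup \mathrm{Rej}$ mid-phase; this is the step that must be executed with the most care to cleanly produce the constants in the stated bound.
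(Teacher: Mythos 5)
Your overall architecture matches the paper's: the same three-way decomposition $R^{\mathrm{init}}+R^{\mathrm{comm}}+R^{\mathrm{explo}}$, the same $2\beta$-inflated confidence radius whose extra factor comes from the ECR trigger slack, the same counting of ECR-triggered versus accept/eliminate-triggered communication rounds with the $M^3$ multiplier and the truncated-differential bit bound, and a symmetry argument for individual regret that is consistent with (indeed, more explicit than) what the paper provides for part (ii).

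The genuine gap is in the exploration regret. You claim that inverting $2R(k,t)\le\Delta(k)$ bounds the pulls of each suboptimal arm by $O(\beta^2\log T/\Delta(k)^2)$ and that ``multiplying by $\Delta(k)$ and summing'' produces the $\sum_{k>M}32\beta^2(\beta+2)\log T/\Delta(k)$ term. That accounting covers only one of the two sources of exploration regret --- the direct cost of pulling suboptimal arms --- and it yields only the factor $32\beta^2$, not $32\beta^2(\beta+2)$. Because the pseudo-regret is measured against $T\sum_{k\le M}\mu(k)$, there is a second, same-order contribution $\sum_{k\le M}(\mu(k)-\mu(M))\bigl(T^{\mathrm{explo}}-T^{\mathrm{explo}}(k)\bigr)$ from rounds in which a top-$M$ arm is \emph{not} pulled: during each exploration cycle an agent devotes $M_t$ of its $M$ slots to the undecided set $\mathcal{K}_t$, so the optimal arms are under-pulled for as long as any suboptimal arm survives. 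Bounding this term is not a per-arm pull count. The paper controls it by writing $T^{\mathrm{explo}}-T^{\mathrm{explo}}(k)\le\sum_{p} N_p M (K_p-M_p)\mathbf{1}_{\hat{t}_k>T_{p-1}}$, exchanging the order of summation, and running a telescoping-sum argument over communication epochs (the $A_j$ computation), which exploits the geometric growth of $T_p$ enforced by the ECR trigger to produce the extra factor $(\beta+1)$; adding the direct term's factor of $1$ gives $(\beta+2)$. Without this piece your sketch cannot recover the stated constant and, more importantly, silently drops a $\Theta(\log T/\Delta)$ contribution to the regret. The remainder of your outline (good event, communication bit accounting, the $1/M$ symmetry for individual regret) would go through essentially as written.
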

\noindent\textbf{Proof Sketch of Theorem~\ref{thm:bandit}.}  
We compute the total regret by decomposing it into three parts, as introduced in Section~\ref{sec:model}.
Following our initialization procedure, we incur only a constant regret.  
Next, we construct confidence intervals for the mean rewards of each arm.  
By applying this to standard multi-armed bandit analysis, we derive an upper bound on the number of pulls for each suboptimal arm.  
Then, using standard techniques such as telescoping sums and appropriate rescaling, we obtain the exploration regret bound.For the regret from communication, we first show that the number of communication rounds is constant. The initial communication round incurs a cost of $\mathcal{O}(\log\log T)$ bits due to full quantization. Subsequent updates require only a constant number of bits using differential quantization.  
Summing across all communication rounds yields the total communication regret.

\noindent\textbf{Remark.} Theorem \ref{thm:bandit} establishes that our algorithm is optimal with respect to regret, simultaneously achieving near-optimal group regret and optimal individual regret. This result matches the information-theoretic lower bound, representing a significant theoretical contribution.

The primary trade-off for this strong guarantee lies in the communication complexity. Our analysis reveals a nuanced scaling behavior. On one hand, the communication cost is remarkably efficient with respect to the time horizon, growing at an exceptionally slow rate of $O(\log\log T)$. On the other hand, the cost scales polynomially with the number of agents and arms as $O(KM^3)$, which can present a scalability bottleneck in systems with large $M$ or $K$. This polynomial overhead is a direct consequence of the forced-collision protocol essential for coordination in a decentralized setting.

\section{Extending to Asynchronous Settings}
Decentralized systems often operate asynchronously, where agents act at irregular intervals. This presents significant challenges for multi-agent bandits, including inconsistent observations, a lack of synchronization, and stale information. Our goal is to extend the efficient \texttt{SynCD} algorithm to these asynchronous environments, preserving its communication efficiency and regret guarantees while operating under minimal structural assumptions.
\subsection{Asynchronous Setups}\label{sec: asych}
We now consider the same MMAB problem under asynchrony, where agents are not simultaneously active at every time step. Following the model in~\citet{bonnefoi2017multi}, each agent \( m \in \mathcal{M} \) becomes active independently at time \(t\) with a fixed probability \( p \).The active agent set at time t is defined as  \( \mathcal{M}(t) \) 
We introduce a simplified model based on \textit{periodic activations}, which captures essential aspects of asynchrony while enabling structured coordination. Specifically, each agent \( m \in \mathcal{M} \) becomes active every \( \theta_m \in \mathbb{N}^+ \) rounds, i.e., at times \( \theta_m, 2\theta_m, \ldots, \left\lfloor T / \theta_m \right\rfloor \cdot \theta_m \). We define the activation frequency of agent \( m \) as \( \omega_m:= 1/\theta_m \). 

When active, an agent selects an arm and receives a reward defined as in the synchronous case. Collisions again yield zero reward. Let \( \mathcal{A}(t) \subseteq \mathcal{K} \) denote the set of arms selected at time \( t \), and \( \mathcal{I}(t) \) denote the top-\( |\mathcal{A}(t)| \) arms. At each round \( t \in [T] \), each active agent \( m \in \mathcal{M}(t) \) selects an arm \( k \in \mathcal{K} \),. Importantly, the size of the set \( \mathcal{M}(t) \) is equal to the size of the set \( \mathcal{I}(t) \), which represents the set of the top-\( |\mathcal{I}(t)| \) arms selected by the agents at time \( t \). Hence, both sets \( \mathcal{I}(t) \) and \( \mathcal{M}(t) \) are of equal size, ensuring that the number of arms selected corresponds to the number of active agents at each time step.
 The cumulative regret in the asynchronous setting is defined as
\[
\small
\mathbb{E}[R(T)] := \sum_{t=1}^{T} \sum_{k \in \mathcal{I}(t)} \mu(k) - \mathbb{E}_{\mu}\left[\sum_{t=1}^{T} \sum_{m \in \mathcal{M}(t)} r_t^{(m)}\right ] 
\]
where \( \mathcal{M}(t) \subseteq \mathcal{M} \) denotes the subset of agents active at time \( t \).
\subsection{Algorithm}
 Under the model defined in Section~\ref{sec: asych}, we propose an algorithm for periodic asynchronous activation that captures the essential challenges of asynchrony while introducing minimal structure.
 Each agent follows a fixed activation cycle in this setting, but agents are not synchronized, making traditional turn-based or globally scheduled strategies inapplicable.

To address this challenge, we model each time step within the global cycle as an independent instance of a Multi-Agent Multi-Armed Bandit (MMAB) problem. At each such step, only the currently active agents engage in interaction, following a staggered round-robin protocol to achieve collision-free and uniformly distributed exploration. This design facilitates globally balanced yet locally efficient exploration without requiring synchronization. To ensure no regret, each agent must accurately infer the ranking of the optimal arms based on the number of active agents present at its activation time.

A key challenge arises from heterogeneous activation frequencies: faster agents accumulate more information and may eliminate arms earlier than slower ones, leading to inconsistent arm sets. To maintain coordination, we introduce communication at synchronized time steps, i.e., the least common multiple (LCM) of agent cycles, where all agents exchange information via forced collisions. Additionally, fast agents broadcast updates on accepted/rejected arms to help slower agents align their decisions.

To reduce communication overhead, we adopt an arm-state tracking mechanism similar to Algorithm~\ref{alg:SynCD}, achieving low communication costs while ensuring effective information sharing. The full algorithm is provided in the Appendix.

\subsection{Theorems}
In this section, we establish the problem’s lower bound and analyze the theoretical guarantees of our algorithm.

\begin{theorem}
\label{thm:lowerbound}
Consider a periodic multiplayer bandit setting, where the number of active agents may vary over time. Let \( k_1^*, k_2^*, \ldots \) denote the smallest top arms at each corresponding time period, defining the dynamic optimal arm sets. Let \( \hat{\Delta}(k) \) represent the smallest reward gap between arm \( k \) and the corresponding critical arm \( k^* \) at each period. Then, for any uniformly efficient algorithm \( \pi \), the regret satisfies the following asymptotic lower bound:
\[
\small
\liminf_{T \to \infty} \frac{R_T(\pi,v)}{\log T} \ge \sum_{i: \hat{\Delta}(i) > 0} \frac{1}{\hat{\Delta}(i)},
\]
where the gap \( \hat{\Delta}(k) \) is defined as:
\[
\small
\hat{\Delta}(k) =
\begin{cases}
\infty, & k \leq k_1^* \\
\mu(k_1^*) - \mu(k), & k_1^* < k \leq k_2^* \\
\ldots \\
\mu(M) - \mu(k), & k > M
\end{cases}.
\]
\end{theorem}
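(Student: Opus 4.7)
I would extend the classical Lai--Robbins change-of-measure lower bound to the periodic asynchronous setting by isolating, for each suboptimal arm, the sub-time at which distinguishing it from the corresponding critical arm is hardest.

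First, let $L = \mathrm{lcm}(\theta_1, \ldots, \theta_M)$ and partition the horizon into super-cycles of length $L$. Within each super-cycle, the active set $\mathcal{M}_\ell$ and its size $M_\ell$ at sub-time $\ell \in \{1, \ldots, L\}$ are deterministic, and the instantaneous optimal set is the top $M_\ell$ arms, with critical arm $k_\ell^* = M_\ell$. The distinct values $M^{(1)} < \cdots < M^{(Q)}$ attained by $\{M_\ell\}$ recover the indices $k_1^*, k_2^*, \ldots$ in the theorem. For each arm $k$ with $\hat\Delta(k) > 0$, let the binding sub-time $\ell^*(k)$ be any $\ell$ realizing the infimum in the definition of $\hat\Delta(k)$, i.e.\ satisfying $k > k_\ell^*$ and $\mu(k_\ell^*) - \mu(k) = \hat\Delta(k)$.

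Next, fix such an arm $k$ and construct an alternative instance $v_k'$ by replacing arm $k$'s reward distribution with one of mean $\mu(k) + \hat\Delta(k) + \epsilon$ for small $\epsilon > 0$, leaving every other arm untouched. Under $v_k'$ arm $k$ becomes optimal at sub-time $\ell^*(k)$, so any uniformly efficient policy applied to $v_k'$ must pull it there $\Theta(T/L)$ times. Writing $N_k^{(\ell)}(T)$ for the pulls of arm $k$ by all agents at rounds $t \equiv \ell \pmod L$ up to time $T$, I would apply the Kaufmann--Capp\'e--Garivier divergence decomposition with event $A = \{N_k^{(\ell^*(k))}(T) \le T/(2L)\}$, forcing $\mathrm{kl}(\mathbb{P}_v(A), \mathbb{P}_{v_k'}(A)) \ge (1-o(1))\log T$ via uniform efficiency on both instances. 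Combined with the upgraded K--C--G bound justified below, and the standard $\mathrm{KL}(\nu_k, \nu_{k_{\ell^*(k)}^*}) \le \hat\Delta(k)^2$ estimate valid for the (sub-)Gaussian family implicit in the reward model, letting $\epsilon \downarrow 0$ yields
\[
\mathbb{E}_v\!\left[N_k^{(\ell^*(k))}(T)\right] \;\ge\; \frac{(1 - o(1))\log T}{\hat\Delta(k)^2}.
\]
Each such pull costs regret at least $\hat\Delta(k)$, so arm $k$ contributes at least $(1 - o(1))\log T/\hat\Delta(k)$ to $R_T$; summing over all arms with $\hat\Delta(k) > 0$ gives the claim.

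The main obstacle is the gap between the global pull count $N_k(T)$ that the naive K--C--G inequality bounds and the binding-sub-time count $N_k^{(\ell^*(k))}(T)$ that actually governs regret: an arm $k \le M$ may be optimal at several sub-times and can be pulled freely there, so a global bound on $N_k(T)$ does not immediately translate into a regret bound. The fix is to apply K--C--G against a non-stationary alternative that perturbs arm $k$ only at rounds $t \equiv \ell^*(k) \pmod L$; the log-likelihood ratio then concentrates precisely on pulls at the binding sub-time, and the right-hand side of K--C--G becomes $\mathbb{E}_v[N_k^{(\ell^*(k))}(T)] \cdot \mathrm{KL}(\nu_k, \nu_k')$ as used above. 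Showing that a uniformly efficient policy must also perform well against this restricted non-stationary alternative---so that the information-theoretic forcing survives localization---is the delicate ingredient, and it is precisely what promotes the aggregate Lai--Robbins bound into the per-arm $1/\hat\Delta(k)$ contribution claimed.
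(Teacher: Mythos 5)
Your route is genuinely different from the paper's. The paper first decomposes the regret over the distinct activity levels $I$, argues that the terms involving $\mathcal{T}(I)-\mathbb{E}_\mu[N_k(\mathcal{T}_{(I)})]$, the collision counts, and the gaps $\mu(I^*)-\mu(k^*)$ are nonnegative, and keeps only $\sum_{k}\hat\Delta(k)\,\mathbb{E}_\mu[N_k(T)]$; it then runs the standard Lai--Robbins / Bretagnolle--Huber change of measure with a \emph{stationary} alternative ($P_i$ replaced by $P_i'$ with $\mu(P_i')>\mu(k_i^*)$, event $\{N_i(T)>\sum\omega T/2\}$) to obtain $\liminf_T \mathbb{E}_{\nu\pi}[N_i(T)]/\log T\ge 1/d_i$, and finally specializes to Gaussian arms so that $d_i=\hat\Delta(i)^2$. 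The paper never localizes to sub-times: it lower-bounds the \emph{total} pull count and multiplies by $\hat\Delta(i)$.

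You correctly identify the weak point of that global strategy: for an arm $k$ with $k_1^*<k\le M$, the total count $N_k(T)$ includes pulls made at sub-times where $k$ is optimal, which cost nothing, so $\hat\Delta(k)\,\mathbb{E}[N_k(T)]$ does not lower-bound that arm's regret contribution (the paper's displayed decomposition writes $\mathbb{E}_\mu[N_k(T)]$ where only $\sum_{I:\,k\in L_I}\mathbb{E}_\mu[N_k(\mathcal{T}_I)]$ is justified). However, your fix does not close this gap. The localized divergence decomposition requires the alternative $v_k'$ to be \emph{non-stationary} (arm $k$ perturbed only at rounds $t\equiv\ell^*(k)\pmod L$), and the step that makes $\mathbb{P}_{v_k'}(A)$ polynomially small invokes uniform efficiency of $\pi$ \emph{on $v_k'$}. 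But $v_k'$ lies outside the class of stationary instances over which uniform efficiency is assumed (and over which $d_{\inf}$ ranges), so nothing forces $\pi$ to have small regret there. Worse, the obstruction is not merely technical: for any $k\le M$ there is a sub-time (e.g.\ the LCM round, where all $M$ agents are active) at which arm $k$ is optimal and is pulled $\Theta(T)$ times for free; a policy can use those samples to certify $\mu(k)<\mu(k_{\ell^*(k)}^*)$ and never pull $k$ at the binding sub-time, so against stationary alternatives the localized count $N_k^{(\ell^*(k))}(T)$ cannot be forced to grow like $\log T/\hat\Delta(k)^2$. Your argument is sound for arms $k>M$, which are suboptimal at every sub-time; for $k_1^*<k\le M$ the ``delicate ingredient'' you defer is, in the stationary model, not provable --- which also casts doubt on those terms in the stated bound and in the paper's own derivation.
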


\begin{theorem}
\label{periodic}
Consider a decentralized multi-agent bandit system where agents are activated periodically over a cycle of length \(l\), denoting the least common multiple of the agents' activation periods. For each time slot, let \(I\) denote the number of active agents, and let \(A(I)\) represent the empirical frequency that exactly \(I\) agents are active within one cycle. 
Define the suboptimality gap for each arm \(k > M\) as
\(\Delta(k) = \mu(M) - \mu(k)\)
where \(\mu(M)\) is the mean reward of the \(M\)-th best arm. 
For notational simplicity, we set \(\Delta(k) = \Delta_{\min}\) for all \(k \leq M\), where
\(\Delta_{\min} := \min_{k \leq M} \Delta(k)\),
while \(\Delta\) denotes the largest gap among them.

\textbf{(i) Group Regret.} \small
\[
\begin{aligned}
&\mathbb{E}[R(T)] = O\Bigg(
     \sum_I A(I) \Bigg[
        \frac{\Delta}{\Delta_{\min}} \sum_{I < k \leq M} \frac{\log T}{\Delta_{\min}} 
        + \sum_{k > M} \left(  \frac{(2+\beta)\Delta}{\Delta_{\min}} \right) \frac{\log T}{\Delta(k)} 
    \Bigg] \\
    &\quad+ \sum_{k > M} \frac{\log T}{\Delta(k)} + 2M^2 l \sum_{m=1}^{M} \left( \sum_{k=1}^K \log_{\beta}\left( \frac{8\beta}{\Delta(k)} \right) - 1 \right) C  + 2M^3 K l\\
    &\quad+2M^3 l \cdot \left( 1 + \frac{1}{2} \log \left( \frac{\log T}{\beta^2} + MK \right) \right)
\Bigg).
\end{aligned}
\]

\small \noindent where \(\sum_I A(I) = 1\).
\begin{figure*}[t]
\centering
\captionsetup[subfigure]{justification=centering, font=small, skip=2pt}

\begin{subfigure}[t]{0.24\textwidth}
    \centering
    \includegraphics[width=\linewidth]{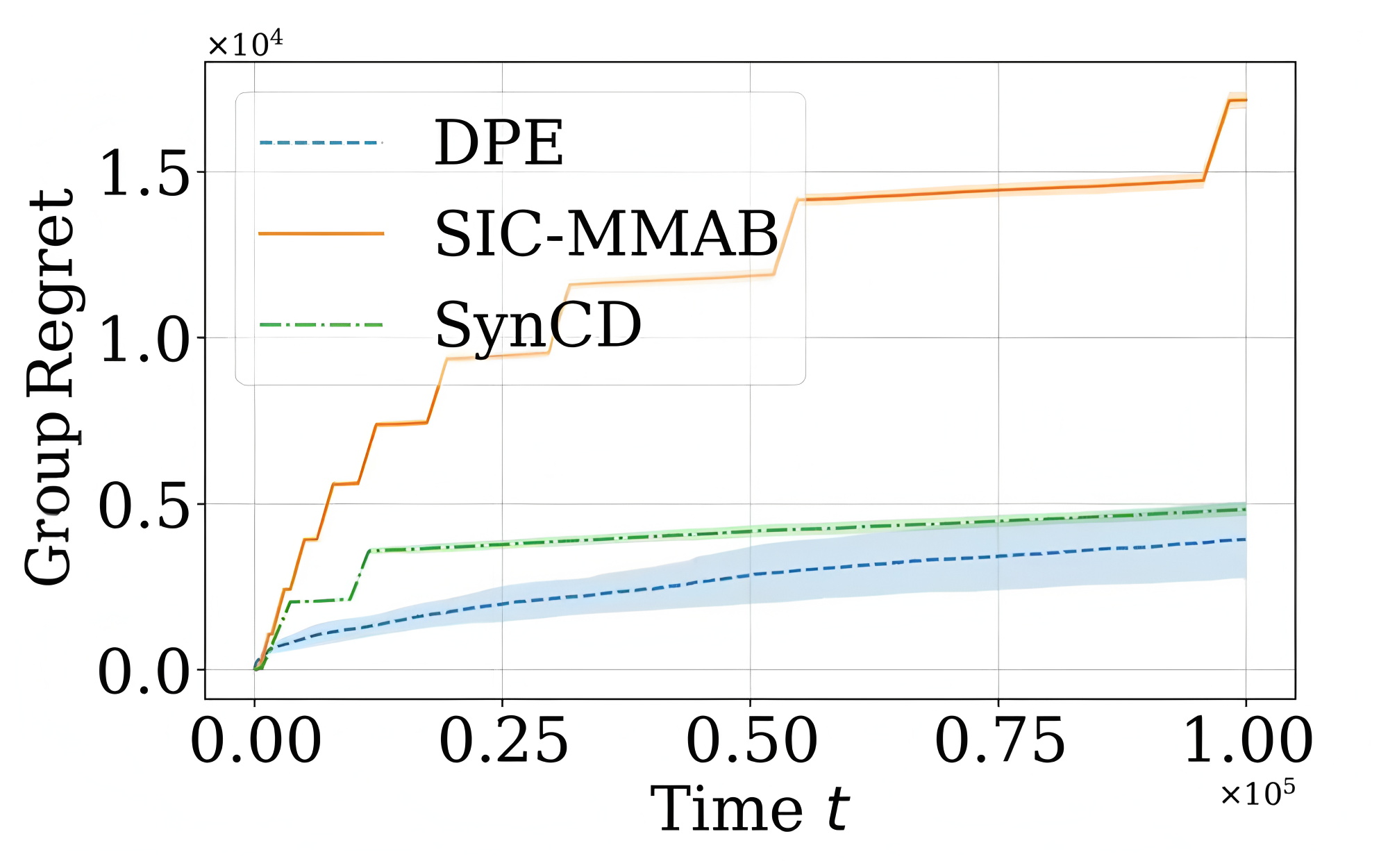}
    \caption{Group regret vs. baseline algorithms.}
    \label{fig:img1}
\end{subfigure}%
\hfill
\begin{subfigure}[t]{0.24\textwidth}
    \centering
    \includegraphics[width=\linewidth]{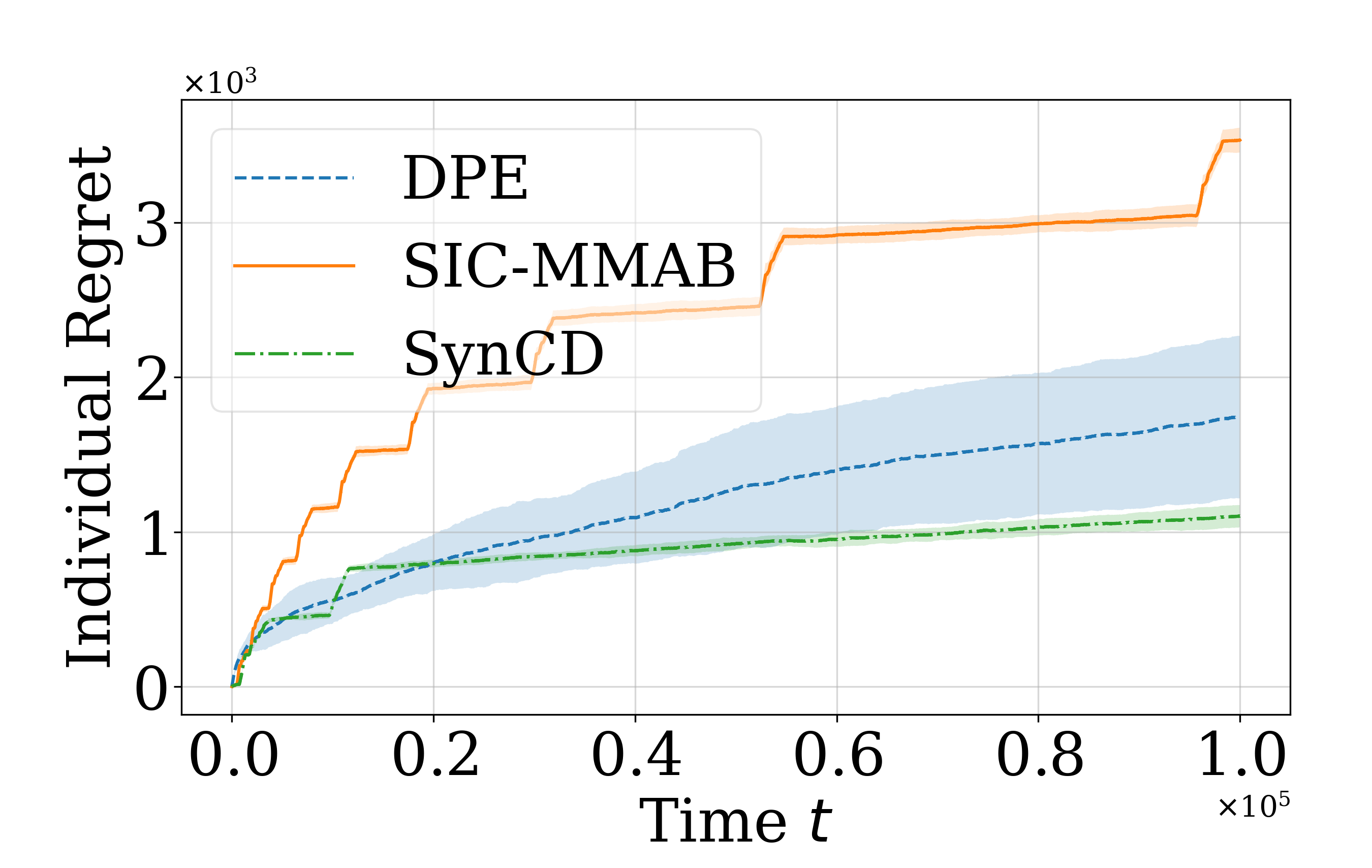}
    \caption{Individual regret vs. baseline algorithms.}
    \label{fig:img2}
\end{subfigure}%
\hfill
\begin{subfigure}[t]{0.24\textwidth}
    \centering
    \includegraphics[width=\linewidth]{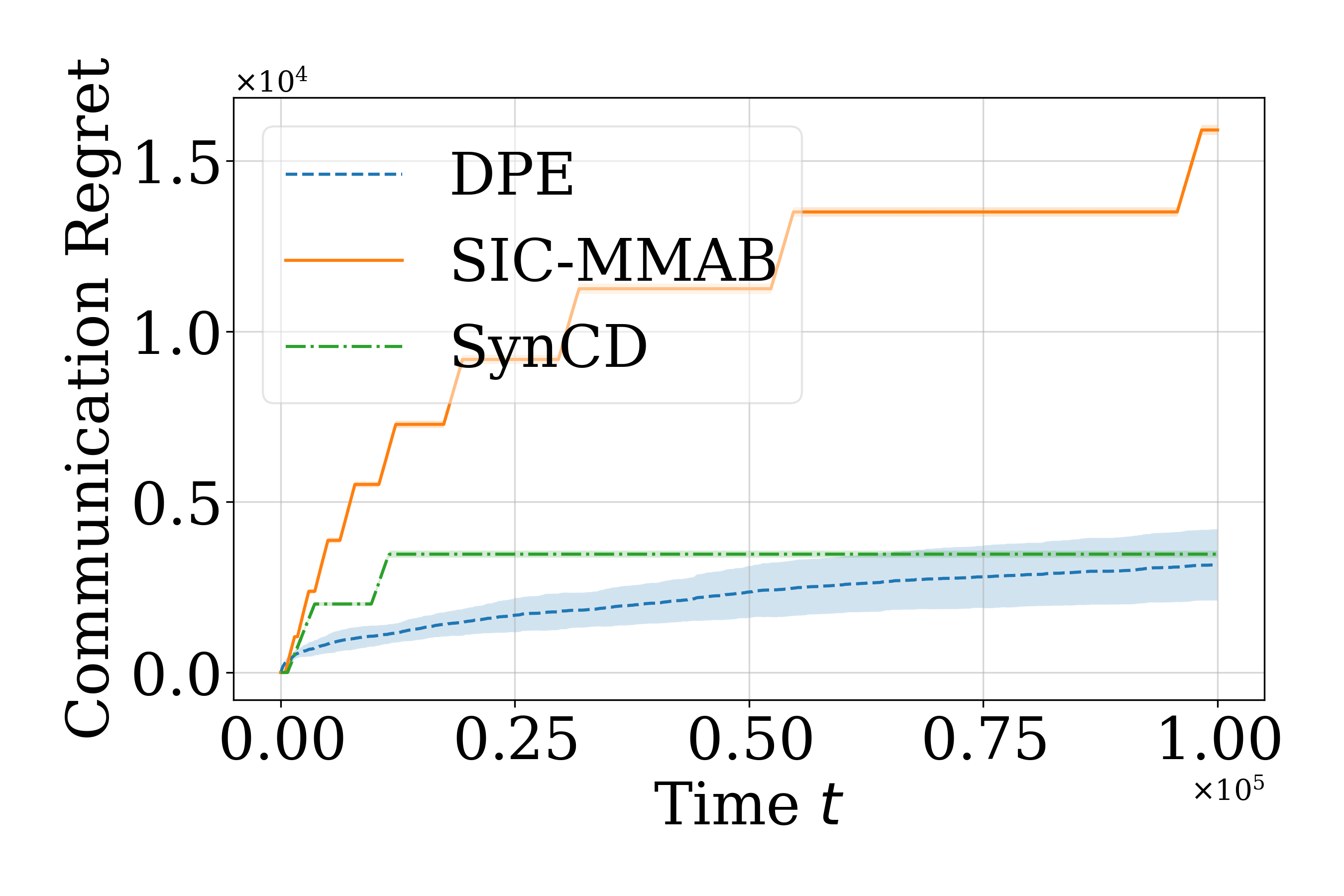}
    \caption{Communication cost (in regret).}
    \label{fig:img3}
\end{subfigure}%
\hfill
\begin{subfigure}[t]{0.24\textwidth}
    \centering
    \includegraphics[width=\linewidth]{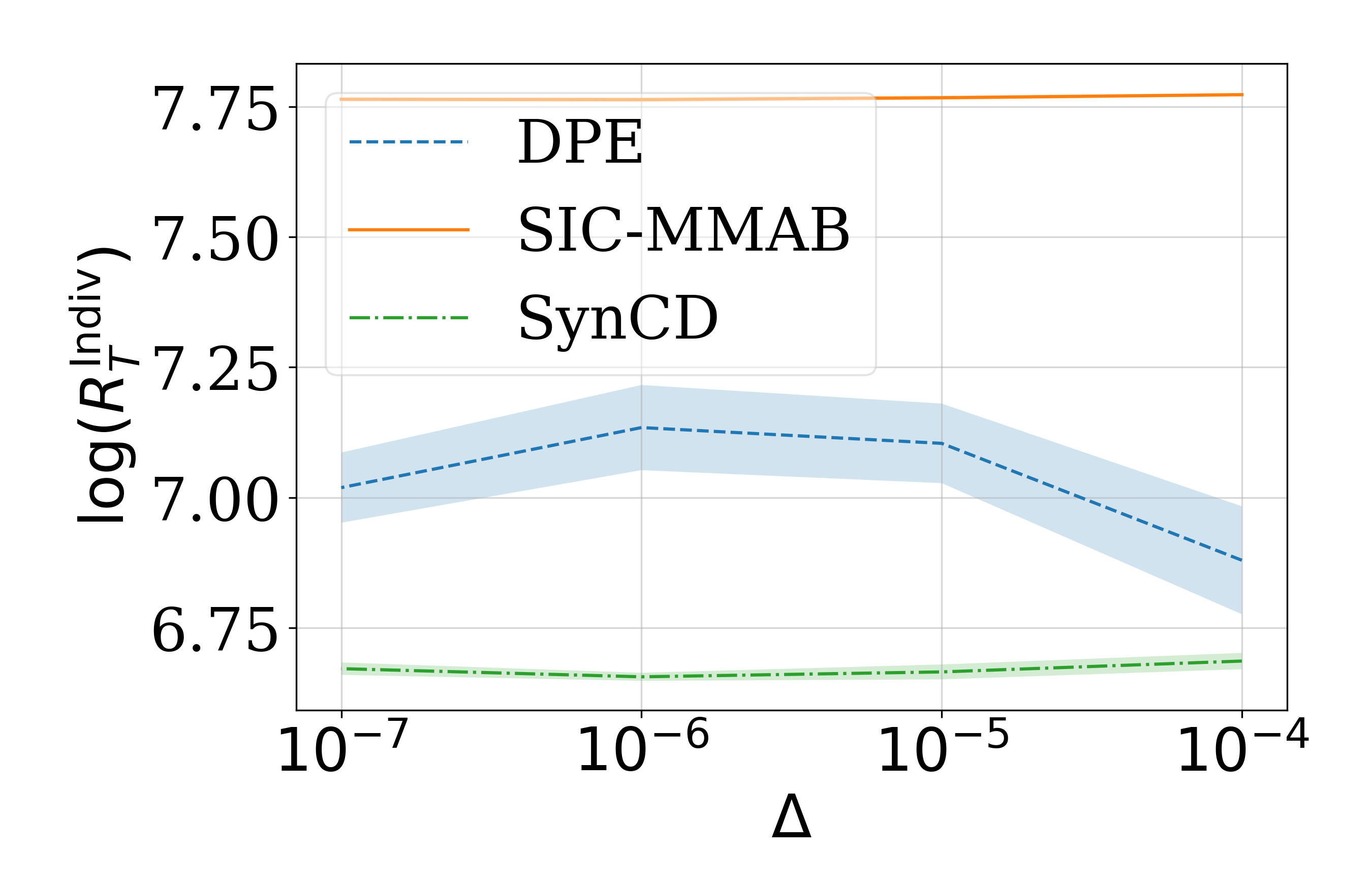}
    \caption{Individual regret under varying $\Delta$.}
    \label{fig:img4}
\end{subfigure}

\caption{
Comparison between our proposed algorithm and baselines listed in Table~\ref{tab:mmab-comparison}. Each subplot shows a different performance metric.
}
\label{fig:}
\end{figure*}

\end{theorem}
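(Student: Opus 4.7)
\noindent\textbf{Proof Sketch of Theorem~\ref{periodic}.}
The plan is to lift the analysis of Theorem~\ref{thm:bandit} to the periodic asynchronous setting by exploiting the structure induced by the LCM cycle length $l$. I would first decompose the regret into the same three components used in Section~\ref{sec:model}: initialization, exploration, and communication. The initialization now spans a full cycle of length $l$ so that every agent, regardless of its activation period $\theta_m$, participates in orthogonalization and rank assignment at least once; this yields the $2M^3 K l$ term. The $O(\log\log T)$ triggering structure from the synchronous analysis is preserved, but each communication round is stretched by a factor of $l$ because forced-collision signaling is only feasible at globally synchronized time steps (multiples of $l$). This accounts for the $2M^3 l \cdot (1 + \tfrac{1}{2}\log(\log T/\beta^2 + MK))$ contribution.

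For the exploration term, I would partition $[T]$ into $\lfloor T/l \rfloor$ cycles and, within each cycle, index the time slots by $s \in [l]$. Each slot $s$ has a deterministic active set $\mathcal{M}_s$ of size $I_s$, and across cycles the dynamics at slot $s$ form an independent synchronous MMAB instance with $I_s$ players on arm set $\mathcal{K}$ with effective horizon $T/l$. At slot $s$ the locally optimal set is the top $I_s$ arms, so both globally suboptimal arms $k > M$ and globally top-$M$ arms with $k \in (I_s, M]$ contribute exploration regret. Applying the per-arm pull bound from the proof of Theorem~\ref{thm:bandit} in each slot and grouping slots by their active-agent count $I$ produces the factor $A(I)$ weighting the bracketed exploration cost. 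The gap inflation $\Delta/\Delta_{\min}$ arises because when a globally top-$M$ arm is treated as suboptimal in a slot with $I < M$, the relevant confidence-interval gap is governed by $\Delta_{\min}$ while the per-pull regret penalty is bounded by $\Delta$; the extra factor $(2+\beta)$ is the slack introduced by the threshold trigger, exactly as in the synchronous proof.

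The communication regret analysis adapts the threshold-triggering and adaptive-quantization arguments directly. Since the ECR of each arm is monotone non-increasing and communication is triggered only when it drops by a factor $\beta$, the total number of triggered rounds per arm remains $O(\log_\beta(8\beta/\Delta(k)))$, each round now costing $O(M^2 l)$ due to the LCM-synchronized signaling schedule, with the quantized differential payload $C$ inherited from Theorem~\ref{thm:bandit}. Summing these contributions over arms and over the $M$ signalers produces the $2 M^2 l \sum_m (\sum_k \log_\beta(\cdot) - 1)\, C$ term.

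The main obstacle I anticipate is reconciling the heterogeneous activation frequencies $\omega_m = 1/\theta_m$ with the shared Acc/Rej sets: agents with small $\theta_m$ accumulate samples faster and may attempt to accept or eliminate arms earlier than agents that are only active in sparse slots, risking inconsistent active sets across slots. Closing this gap cleanly requires showing that, with probability at least $1 - 1/T^2$, the LCM-synchronized communication rounds together with the ``fast agents broadcast Acc/Rej'' mechanism keep every agent's local active arm set consistent with the global one at every slot boundary. Establishing this high-probability coupling is what allows the per-slot MMAB analyses to be stitched together as if each slot were independent, and it is the step that demands the most care in the full argument.
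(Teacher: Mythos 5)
Your overall skeleton matches the paper's: both decompose the regret by the number of active agents $I$, reuse the synchronous confidence-interval and telescoping arguments to get the $(1+\beta)$-type slack, weight the resulting exploration cost by $A(I)$, and charge communication at LCM-synchronized rounds with the same triggering/quantization lemmas, each inflated by a factor of $l$. However, one step as you describe it would fail. You propose treating each slot $s\in[l]$ as an \emph{independent} synchronous MMAB instance with effective horizon $T/l$ and ``applying the per-arm pull bound in each slot.'' If executed literally, every slot (or at least every distinct value of $I$) would need its own budget of $O(\log T/\Delta(k)^2)$ exploratory pulls of arm $k$ before that arm can be accepted or rejected there, and the exploration regret would be multiplied by the number of distinct slot types rather than weighted by coefficients summing to one. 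The paper avoids this by pooling samples globally: the algorithm maintains a single estimator $\hat{\mu}^{(m)}_t(k)$ fed by pulls from \emph{all} slots, so the total number of exploratory pulls of arm $k$ is bounded once (Lemma on acceptance/rejection after $32\beta^2\log\delta^{-1}/\Delta^2$ pulls), and $A(I)=\gamma(I)M_I/\sum_J \gamma(J)M_J$ merely apportions that single global count across $I$-values via the identity $N(I,k)=A(I)N(k)$. The per-slot analyses are therefore not stitched together as independent instances; they share one elimination schedule, which is precisely what makes $\sum_I A(I)=1$ the right normalization. Your writeup needs this pooling identity to be stated explicitly, otherwise the $A(I)$ weighting does not follow.

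Two smaller points. First, you attribute the $2M^3Kl$ term to initialization spanning a full cycle; in the paper it is the cost of the \textsc{Comm\_a} arm-state synchronization (at most $K$ triggered synchronizations, each costing $2M^3$ slots, stretched by $l$), while initialization is absorbed into a separate constant. Second, you correctly flag the consistency of the Acc/Rej sets under heterogeneous activation frequencies as the delicate step; the paper handles this only by asserting that fast agents broadcast updates at LCM boundaries and does not give the high-probability coupling argument you call for, so on that point your proposal is, if anything, more candid about what remains to be proved.
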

\noindent Let \(\alpha_m = \frac{\omega^{(m)}}{\sum_{i=1}^M \omega_i}\). Define the constant
\(C := 7 + \log_2\left(1 + \beta + \beta \sqrt{\frac{\alpha_m \ln 2}{2}}\right)\). 
The regret bound in Theorem~\ref{periodic} exhibits logarithmic dependence on time \(T\), consistent with classical lower bounds in stochastic bandits. The additional factor \(\Delta / \Delta_{\min}\) arises due to the need to sort the top \(M\) arms, where the minimal gap \(\Delta_{\min}\) determines the difficulty of ranking.

To better understand this effect, consider a simplified scenario where all suboptimal arms share the same gap, and \(\Delta_{\min} = \Delta\). Specifically, assume \(\Delta(k) = \Delta\) for all \(k > M\). Under this uniform-gap assumption, the main exploration regret simplifies to:
\[
R^{\text{explo}}[T] \leq c\sum_{I} A(I) (M-I) \frac{2\log T}{\Delta}
+ (K-M) \frac{(3+\beta)c\log T}{\Delta}
\]

In this case, the regret remains logarithmic in \(T\). The second term matches the order of the lower bound we provide, while the first captures the cost of cautious exploration in suboptimal configurations.

\section{Numerical Results}

This section presents numerical experiments that validate the performance of \texttt{SynCD}, highlighting its advantages in individual regrets and communication costs over state-of-the-art baselines.

\textbf{Setups and Baselines.} 
In the \texttt{SynCD} algorithm, we set parameters \(\beta = 4\) and \(\delta_t = 1/T^2\). 
We run \(20\) trials for each experiment under the setting \(K = 10\), \(M = 5\), and a time horizon of \(T = 5\times10^4\). 
Arm rewards are sampled from independent \textit{Bernoulli distributions} with means linearly spaced between 0.9 and 0.89, 
so the gap is approximately \(\Delta(k) = \frac{0.9 - 0.89}{10 - 1} \approx 1.11 \times 10^{-3}\). 
The results are plotted as the means of the trials (as lines) with their standard deviations shown as shaded regions. We note that $\beta$ is a tunable parameter controlling the communication frequency: a larger $\beta$ results in less frequent communication rounds, trading off communication cost and coordination accuracy.

We compare the regret and communication costs of \texttt{SynCD} with two baselines: \texttt{SIC-MMAB}~\citep{boursier2019sic} and \texttt{DPE1}~\citep{wang2020optimal}, as outlined in Table~\ref{tab:mmab-comparison}. All other baseline parameters follow their default choices.

\textbf{Observation 1.} Figures~\ref{fig:img1}--\ref{fig:img3} compare \texttt{SynCD} with baseline algorithms in terms of group regret, individual regret, and communication cost. Overall, \texttt{SynCD} achieves a good balance across all metrics.
In Figure~\ref{fig:img1}, \texttt{SynCD} shows lower group regret than \texttt{SIC\_MMAB} but slightly higher than \texttt{DPE1}, mainly due to its use of arm elimination rather than UCB-style exploration, which typically yields better group performance.
In Figure~\ref{fig:img2}, \texttt{SynCD} achieves the \textbf{lowest maximum individual regret}. Unlike \texttt{DPE1}’s leader-follower setup---which concentrates regret on the leader---\texttt{SynCD}’s distributed design ensures more balanced learning among agents.
Figure~\ref{fig:img3} shows that \texttt{DPE1} has the lowest communication cost, benefiting from its centralized structure where only the leader broadcasts updates. In comparison, \texttt{SynCD} still requires sharing arm-level information among agents, leading to a higher cost. However, \texttt{SynCD} remains significantly more efficient than \texttt{SIC\_MMAB}, while maintaining a fully decentralized design.

\textbf{Observation 2.}
To further evaluate the performance of \texttt{SynCD}, we present in Figure~\ref{fig:img4} the comparison of individual regrets under varying reward gaps~$\Delta$. The y-axis (in logarithmic scale) of the figure corresponds to the individual regrets measured at the end of the time horizon. \texttt{SynCD} achieves lower individual regrets across varying gaps, demonstrating greater robustness and stability with respect to~$\Delta$. Moreover, \texttt{SynCD} exhibits better performance when the reward gap is small, highlighting its effectiveness in more challenging scenarios.

\section{Conclusion}
This paper presents \texttt{SynCD}, a distributed algorithm for multi-agent bandit problems with collisions. \texttt{SynCD} achieves optimal group and individual regret with a low \(O(\log \log T)\) communication overhead. 
We then extend \texttt{SynCD} to address the more challenging asynchronous case, where agents have periodic activations.
To corroborate the effectiveness of the asynchronous \texttt{SynCD}, we also establish a regret lower bound. 
Experiments confirm that \texttt{SynCD} outperforms existing methods, especially in individual regret and communication cost.  For our asynchronous analysis, the proposed algorithm does not yet match the lower bound we proved.  Designing theoretically optimal algorithms under general periodic-asynchronous activation remains an important direction for future work.

\section{Supplementary materials for Algorithms}
\subsection{Detailed initialization phase for SynCD}
We define the initialization procedure as \textsc{InitPhase}.
The initialization procedure adopted in our algorithm follows \texttt{DPE1}~\citep{wang2020optimal}, which consists of two sub-phases: an \emph{orthogonalization phase} and a \emph{rank assignment phase}. This ensures that each agent is assigned a unique internal index $j \in \{0, \dots, M{-}1\}$ before exploration begins.

\paragraph{A. Orthogonalization.}
This sub-phase aims to assign $M$ different arms from the set $\{0, 1, \dots, K{-}1\}$ to the players in a fully decentralized manner. Each player maintains an internal \emph{state} in $\{0, 1, \dots, K{-}1\}$:
\begin{itemize}
    \item A player with state $0$ is unsatisfied and continues searching for a collision-free arm;
    \item A player with state $k \neq 0$ has found arm $k$ without collision and keeps this state for the remainder of the phase.
\end{itemize}

The orthogonalization phase proceeds in blocks of $(K+1)$ rounds:
\begin{itemize}
    \item In the first round of each block, satisfied players select their assigned arm (according to their state), while unsatisfied players randomly choose an arm from $\{0, 1, \dots, K{-}1\}$.
    \item The remaining $K$ rounds are used to detect whether all players are satisfied. During these rounds:
        \begin{itemize}
            \item Each satisfied player in state $k$ selects arm $k$ in all rounds except the $k$-th, where she switches to arm $K$;
            \item Unsatisfied players select arm $K$ throughout the $K$ rounds.
        \end{itemize}
\end{itemize}

If any player is still unsatisfied, collisions will occur at arm $K$ during these $K$ rounds. The absence of any collision signals that all players have found distinct arms, and the sub-phase can terminate. The expected duration of this sub-phase is bounded by $\frac{M(K-1)(K+1)}{K-M}$ rounds.

\paragraph{B. Rank Assignment.}
After orthogonalization, all players have distinct states in $\{1, \dots, K{-}1\}$, and the goal is to deterministically assign each one a unique internal rank.

This phase consists of $2K{-}2$ rounds labeled $t_1, \dots, t_{2K{-}2}$. A player in state $k$ selects arms in the following pattern:
\begin{itemize}
    \item In rounds $t_s$ where $s \in \{1, \dots, 2k\} \cup \{K{+}k, \dots, 2K{-}2\}$, she selects arm $k$;
    \item In rounds $t_s$ where $s \in \{2k{+}1, \dots, K{+}k{-}1\}$, she selects arm $s{-}k$.
\end{itemize}

Under this protocol, each pair of players collides exactly once. A player in state $k$ determines her rank by counting the number $i_k$ of collisions experienced during the first $2k$ rounds. And then we get the final rank.

\subsection{Detailed Communication Protocol for SynCD} 
\label{sec:append_comm}
We present the communication protocol in detail, described from the perspective of a single agent—a natural approach for decentralized algorithms. The protocol comprises three key subroutines: \textbf{Send Protocol}, \textbf{Receive Protocol}, and \textbf{Wait Protocol}, each governing the agent’s behavior based on its role in the communication round.

\begin{itemize}
    \item \textbf{Receive Protocol:} A receiver decodes information sent by another agent by pulling a designated arm and interpreting collision patterns as bits. This allows for decoding statistics of arbitrary precision.
    \item \textbf{Send Protocol:} A sender transmits a quantized statistic to a designated receiver by encoding it as a binary message and transmitting it through collision-based signaling.
    \item \textbf{Wait Protocol:} Agents not involved in the current communication wait without interfering. They pull a fixed arm until the communication concludes and then realign with the schedule.
\end{itemize}

The complete communication procedure is encapsulated in the \textsc{Comm} protocol, which systematically outlines the communication flow and operational details of each subroutine.

\begin{algorithm}[htbp]
\caption{\textsc{Comm}: Main Communication Protocol}
\textbf{Input:} $\delta_t^{(m)}(k)$ (local statistic), $j$ (internal rank) \\
\textbf{Output:} $\tilde{\mathbf{S}}$ (quantized statistics of all active players)
\begin{algorithmic}[1]
    \State $\tilde{\delta}_t^{(m)}(k) \gets \texttt{Quantization}(\delta_t^{(m)}(k))$
    \State $E_t := \{(i, l, k) \in [M] \times [M] \times \mathcal{K}_t \mid i \neq l\}$
    \For{$(i, l, k) \in E_t$}
        \If {$i = j$}
            \State \texttt{Send}($j, \tilde{\delta}_t^{(m)}(k), l$)
        \ElsIf{$l = j$}
            \State $\tilde{\mathbf{S}} \gets \texttt{Receive}(l)$
        \Else
            \State \texttt{Wait}($j, l$)
        \EndIf
    \EndFor
\end{algorithmic}
\end{algorithm}

\begin{algorithm}[htbp]
\caption{Receive Protocol}
\textbf{Input:} $l$ (sender’s internal rank)\\
\textbf{Output:} $\delta$ (decoded statistic)
\begin{algorithmic}[1]
    \State $\delta \gets 0$, $\pi \gets$ index of the $l$-th active arm
    \For{$n = 1, \dots, 2\left\lceil1+\log T(k)\right\rceil$}
        \State Pull $\pi$
        \If{$\eta_\pi(t) = 1$}
            \If{$n \bmod 2 = 0$}
                \State \textbf{break}
            \Else
                \State $\delta \gets \delta + 2^{n/2}$
            \EndIf
        \EndIf
    \EndFor
    \For{$n = 1, \dots, M_p$}
        \State Pull the $l$-th active arm
    \EndFor
    \State \textbf{return} $\delta \times 10^{-\left\lceil1+\log T(k)\right\rceil}$
\end{algorithmic}
\end{algorithm}

\begin{algorithm}[htbp]
\caption{Send Protocol}
\textbf{Input:} $l$ (receiver), $\tilde{\delta}$ (quantized statistic), $j$ (sender’s internal rank)
\begin{algorithmic}[1]
    \State $m \gets$ binary encoding of $\tilde{\delta}$ with sign; $L \gets \text{length}(m)$
    \For{$n = 1, \dots, 2L + |M_p|$}
        \If{$n \bmod 2 = 0$ and $n < 2L$}
            \If{$m_{n/2} = 1$}
                \State Pull the $l$-th active arm
            \Else
                \State Pull the $j$-th active arm
            \EndIf
        \ElsIf{$n \bmod 2 = 1$ and $n < 2L$}
            \State Pull the $j$-th active arm
        \ElsIf{$n = 2L$}
            \State Pull the $l$-th active arm
        \Else
            \State $j \gets (j + 1) \bmod |M_p|$
            \State Pull the $j$-th active arm
        \EndIf
    \EndFor
\end{algorithmic}
\end{algorithm}

\begin{algorithm}[htbp]
\caption{Wait Protocol}
\textbf{Input:} $j$ (own internal rank), $l$ (sender’s internal rank)
\begin{algorithmic}[1]
    \State $\pi \gets$ the $j$-th active arm
    \While{$\eta_\pi(t) \ne 1$}
        \State Pull $\pi$
    \EndWhile
    \For{$n = 1, \dots, |M_p| - j + l$}
        \State Pull $\pi$
    \EndFor
\end{algorithmic}
\end{algorithm}

\vspace{0.5em}
\noindent
Next, we introduce an auxiliary communication protocol, \textsc{Comm\_a}, designed for synchronizing binary arm states among players. It proceeds in two phases, each dedicated to either the \texttt{Acc} or \texttt{Rej} arm sets. In each phase, players signal the inclusion of arms via collision: a collision on a given arm implies it has been newly added to the corresponding set.

\begin{algorithm}[htbp]
\caption{\textsc{Comm\_a}: Arm State Synchronization via Collision}
\textbf{Input:} \texttt{Acc}, \texttt{Rej}, $j$ (own internal rank) \\
\textbf{Output:} Updated \texttt{Acc}, \texttt{Rej}
\begin{algorithmic}[1]
\State $E_t := \{(i, l) \in [M] \times [M] \mid i \neq l\}$
\For{$(i, l) \in E_t$}
    \For{$\text{mode} \in \{\texttt{Acc}, \texttt{Rej}\}$}
        \If{$i = j$} \Comment{Sender}
            \For{$k \in \mathcal{K}_t$}
                \State Pull arm $l$ if $k$ is newly marked in \texttt{mode}; otherwise, pull $j$
            \EndFor
        \ElsIf{$l = j$} \Comment{Receiver}
            \For{$k \in \mathcal{K}_t$}
                \State Pull arm $j$
                \If{collision detected}
                    \State Update \texttt{Acc} or \texttt{Rej} $\gets k$ based on current mode
                \EndIf
            \EndFor
        \Else \Comment{Waiter}
            \For{$k \in \mathcal{K}_t$}
                \State Pull arm $j$
            \EndFor
        \EndIf
    \EndFor
\EndFor
\end{algorithmic}
\end{algorithm}

\subsection{algorithms for asynchronous setting }
\label{sec:append_periodic}
In this section, we supplement the main content by presenting the algorithm and proof sketch for the periodic asynchronous setting, along with remarks on the lower bound and algorithm regret.
Let $n_t^{(m')}(k)$ be the total number of times agent $m'$ has pulled arm $k$ up to time $t$, and $n_{\text{last}}^{(m')}(k)$ be the number of pulls prior to the last synchronization. Similarly, let $X^{(m')}_t(k)$ and $X_{\text{last}}^{(m')}(k)$ denote the total accumulated rewards for arm $k$ at the current time and at the last synchronization, respectively.
Define $T_{\text{last}}^{(m')}(k)$ as the estimated number of total pulls (across agents) on arm $k$ prior to the last communication. Then, the updated estimator is given by:
\[
\hat{\mu}_{t}^{(m)}(k) = \frac{\displaystyle \sum_{m'=1}^{M} n_{\text{last}}^{(m')}(k)\,\tilde{\mu}^{(m')}(k) + X^{(m)}_t(k) - X_{\text{last}}^{(m)}(k)}{T_{\text{last}}(k) + n_t^{(m)}(k) - n_{\text{last}}^{(m)}(k)}
\]

Let $N_t(k)$ be the sample count for arm $k$, and define the confidence radius as $R(k,t) = 2\beta\sqrt{\frac{\log(1/\delta)}{2N_t(k)}}$, with confidence level $1 - \delta$

\begin{algorithm}[htbp]
\caption{Distributed Periodic-Asynchronous Algorithm for Agent $m$}
\begin{algorithmic}[1]
\State \textbf{Input:} total number of agents $M$, arm set $\mathcal{K}$, parameter $\beta > 1$
\State \textbf{Initialize:} For all $k \in \mathcal{K}$, set $n_t^{(m)}(k) \gets 0$, $X^{(m)}(k) \gets 0$, $\hat{\mu}^{(m)}(k) \gets 0$, $T_t(k) \gets 0$;
\State $\text{ECR} \gets 1$, $\text{ECR}_{\text{last}} \gets 1$, $\mathcal{K}_t \gets \mathcal{K}$, $K_t \gets K$, $M_t \gets M$, exploration schedule over $L = \text{lcm} \times K_t$ time steps
\While{Top-$M$ arms are not fully sorted}
    \State \textbf{Exploration Phase:}
    \For{$k = 1$ to $K_t$}
        \For{$\tau = 1$ to $L$}
            \State $t \gets t + 1$
            \If{$m \in \mathcal{A}(\tau)$} \Comment{ $m$ is active at time $\tau$}
                \State Let $j$ be local index of agent $m$ in $\mathcal{A}(\tau)$
                \State Let $|\mathcal{A}(\tau)| = M_\tau$
                \State Pull arm $a = (j + k) \bmod |\mathcal{K}_t|$
                \State Receive reward $r_t(a)$ 
                \State $X^{(m)}(a) \gets X^{(m)}(a) + r_t(a)$
                \State $n_t^{(m)}(a) \gets n_t^{(m)}(a) + 1$
            \EndIf
        \EndFor
    \EndFor
    \State $T_t(k) \gets T_t(k) + M_t$ \Comment{Update total pulls for arms}
    \State \textbf{Communication Phase:}
    \If{$\text{ECR}_t \leq \beta \cdot \text{ECR}_{\text{last}}$}
        \State $\text{ECR}_{\text{last}} \gets \text{ECR}_t$
        \State \texttt{Comm}$(\delta_t^{(m)}(k))$ \Comment{only at the end of each LCM}
        \State $T_{\text{last}}(k) \gets T_t(k)$, $n^{(m)}_{\text{last}}(k) \gets n_t^{(m)}(k)$
    \EndIf
    \State \textbf{Rejection Phase:}
    \[
    \text{Rej} 
     \gets \left\{
    k \in \mathcal{K}_t \ \middle|\ 
    \left|\left\{ i \in \mathcal{K}_t \mid \mathrm{LCB}^{(m)}_t(i) \geq \mathrm{UCB}^{(m)}_t(k) \right\}\right| \geq M_t
\right\}
    \]
    \State Remove rejected arms: $\mathcal{K}_t \gets \mathcal{K}_t \setminus \text{Rej}$
    \If{Rejection set changed and no communication occurred}
        \State \texttt{Comm\_index}$(\text{Rej})$ \Comment{Communicate rejection indices if needed and  only  at the end of each LCM period}
    \EndIf
    \State \textbf{Check Sorting Completion:}
    \State Let $\text{Top}_t = \{k_1, k_2, \dots, k_{M_t}\} \subset \mathcal{K}_t$ be the $M_t$ arms with highest $\hat{\mu}^{(m)}(k)$
    \If{For all $i < j$, $\mathrm{LCB}(k_i,t) > \mathrm{UCB}(k_j,t)$}
        \State Sorting completed; proceed to Exploitation Phase
    \Else
        \State Continue Exploration
    \EndIf
\EndWhile
\State \textbf{Exploitation Phase:}
\While{$t < T$}
    \State Let $\mathcal{A}(t)$ be the set of agents active at time $t$
    \State $m_t \gets |\mathcal{A}(t)|$
    \State Select the top $m_t$ arms: $\text{Top}_t \gets$ first $m_t$ arms in sorted $\text{Top}_t$
    \For{each agent $m \in \mathcal{A}(t)$}
        \State Let $j$ be the local index of agent $m$ in $\mathcal{A}(t)$
        \State Pull arm $\text{Top}_t[j+1]$
    \EndFor
\EndWhile
\end{algorithmic}
\end{algorithm}

This lower bound highlights the intrinsic difficulty introduced by periodic and heterogeneous agent activations. Unlike the classical MMAB lower bound, which only considers a fixed top-$M$ arm set, our formulation accounts for the fact that the set of critical arms varies across time, depending on the number of active agents. As a result, each suboptimal arm must be distinguished from its nearest critical arm, rather than a fixed arm like the M-th arm in traditional MMAB problems. This leads to a more refined and instance-dependent regret lower bound that captures the temporal complexity of the periodic setting.

\textbf{Proof Sketch of Theorem 3}:
We begin by constructing confidence intervals for the mean reward of each arm. The original problem is then reformulated into multiple Multi-Player Multi-Armed Bandit (MMAB) subproblems, each corresponding to a specific agent activation time. The total regret is decomposed into two components: (1) the regret due to not pulling the optimal arms in each MMAB instance, and (2) the regret from pulling suboptimal arms. The second component is further reorganized based on the corresponding MMAB instances, and each part is bounded using scaling and approximation techniques.
The proof for the communication regret follows a similar strategy as in communication regret in synchronous settings. We derive a constant upper bound on regret within a single phase. Due to the periodic nature of activations and we only communicate at the least common multiple of agents' periods. The total regret is then upper bounded by a constant factor times the least common multiple (LCM) of the agent periods.

\subsection{Proof for SynCD}
\begin{lemma}
Assume $M$ agents independently sample arms from an i.i.d. reward process with unknown mean $\mu(k)$. Let $n_t^{(m)}(k)$ be the number of samples that agent $m$ has collected for arm $k$ by time $t$, and define $T_t(k) = \sum_{m=1}^{M} n_t^{(m)}(k)$. Let $\mathrm{CR}_{[0,1]}(n, \delta)$ denote the confidence radius under Hoeffding’s inequality, and define $\mathrm{ECR}_t(k) = \mathrm{CR}_{[0,1]}(T_t(k), \delta)$, where $\delta \in (0,1)$ is a non-increasing sequence. Then, for any $t$, with probability at least $1 -\delta$, the following inequality holds:
\[
\left| \hat{\mu}_t^{(m)}(k) - \mu(k) \right| \leq 2\beta \cdot \mathrm{CR}_{[0,1]}(T_t(k), \delta)
\]
\end{lemma}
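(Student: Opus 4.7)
The strategy is to decompose $|\hat{\mu}^{(m)}_t(k) - \mu(k)|$ into a pure concentration term plus a quantization/staleness term, and then invoke the ECR-triggering rule to reduce both to the claimed scale $\mathrm{CR}_{[0,1]}(T_t(k),\delta)$. The first step is to introduce an \emph{ideal} estimator by replacing every transmitted value $\tilde{\mu}^{(m')}(k)$ with the exact empirical mean $\bar{\mu}^{(m')}_{\mathrm{last}}(k) := X^{(m')}_{\mathrm{last}}(k)/n^{(m')}_{\mathrm{last}}(k)$, giving
\[
\hat{\mu}^{*(m)}_t(k) = \frac{\sum_{m'\neq m} X^{(m')}_{\mathrm{last}}(k) + X^{(m)}_t(k)}{N},
\qquad N := T_{\mathrm{last}}(k) + n^{(m)}_t(k) - n^{(m)}_{\mathrm{last}}(k).
\]
Because all agents draw i.i.d.\ samples of arm $k$ in $[0,1]$ with mean $\mu(k)$, the ideal estimator is a uniform average of $N$ such samples, and Hoeffding's inequality gives $|\hat{\mu}^{*(m)}_t(k) - \mu(k)| \leq \sqrt{\log(2/\delta)/(2N)}$ with probability at least $1-\delta$.

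The second step is to bound the quantization perturbation $|\hat{\mu}^{(m)}_t(k) - \hat{\mu}^{*(m)}_t(k)|$. Each communicated mean uses $\lceil 1 + (\log T_{\mathrm{last}}(k))/2\rceil$ bits, so $|\tilde{\mu}^{(m')}(k) - \bar{\mu}^{(m')}_{\mathrm{last}}(k)| = O(1/\sqrt{T_{\mathrm{last}}(k)})$. A weighted sum over $m'$ followed by division by $N$ yields an overall quantization error of order $1/\sqrt{T_{\mathrm{last}}(k)}$. The third step invokes the triggering condition: communication is forced the instant $\mathrm{ECR}_t(k) \leq \beta \cdot \mathrm{ECR}_{\mathrm{last}}(k)$, which translates to $T_t(k) \leq \beta^2 T_{\mathrm{last}}(k)$. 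Since $N \geq T_{\mathrm{last}}(k) \geq T_t(k)/\beta^2$, both contributions satisfy $1/\sqrt{N}, 1/\sqrt{T_{\mathrm{last}}(k)} \leq \beta/\sqrt{T_t(k)}$. Combining via the triangle inequality and dominating the quantization constant by the leading $\sqrt{\log(1/\delta)}$ factor, the total deviation is bounded by $2\beta\sqrt{\log(1/\delta)/(2T_t(k))}$, which is the claim.

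\textbf{Main obstacle.} The subtle step is handling the differential-transmission scheme, where only the incremental quantized increment $\tilde{\delta} = \tilde{\mu} - \tilde{\mu}_{\mathrm{last}}$ is sent with leading zeros truncated. I would need to argue that this does not accumulate quantization error across rounds: the effective precision at round $t$ is still controlled by $\sqrt{T_{\mathrm{last}}(k)}$ at the \emph{current} last-sync point rather than the cumulative sum over all past sync points. The ECR trigger plays the dual role of bounding the staleness ratio $T_t(k)/T_{\mathrm{last}}(k) \leq \beta^2$ and ensuring the quantization precision keeps pace with the confidence radius, which is precisely why the constant in front of $\mathrm{CR}_{[0,1]}(T_t(k),\delta)$ can be taken to be $2\beta$ rather than a growing function of the number of communication rounds.
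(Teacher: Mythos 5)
Your proposal is correct and follows essentially the same route as the paper's proof: decompose the error around the ideal (unquantized) pooled estimator, apply Hoeffding to the $N = T_{\mathrm{last}}(k) + n^{(m)}_t(k) - n^{(m)}_{\mathrm{last}}(k)$ pooled samples, bound the quantization perturbation by $O(1/\sqrt{T_{\mathrm{last}}(k)})$, and use the ECR trigger (equivalently $T_t(k) \le \beta^2 T_{\mathrm{last}}(k)$) to absorb both terms into $2\beta\,\mathrm{CR}_{[0,1]}(T_t(k),\delta)$. The "main obstacle" you flag about differential transmission is handled the same way in the paper, implicitly, by treating the reconstructed $\tilde{\mu}^{(m')}(k)$ as carrying only the current-round quantization error.
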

\begin{proof}
\begin{align}
&\left| \hat{\mu}_t^{(m)}(k) - \mu(k) \right| \notag \\
&= \left| \frac{ \sum_{m'=1}^{M} n_s^{(m')}(k) \tilde{\mu}^{(m')}(k) + X_t^{(m)}(k) - X_s^{(m)}(k) }{T_s(k) + n_t^{(m)}(k) - n_s^{(m)}(k)} - \mu(k) \right| \displaybreak[0] \notag \\
&= \Bigg| 
\frac{ \sum_{m'=1}^{M} n_s^{(m')}(k) \left( \tilde{\mu}^{(m')}(k) - \bar{\mu}^{(m')}(k) \right) }{T_s(k) + n_t^{(m)}(k) - n_s^{(m)}(k)} \notag \\
&\quad + \frac{ \sum_{m'=1}^{M} n_s^{(m')}(k) \bar{\mu}^{(m')}(k) + X_t^{(m)}(k) - X_s^{(m)}(k) }{T_s(k) + n_t^{(m)}(k) - n_s^{(m)}(k)} - \mu(k) 
\Bigg| \displaybreak[0] \notag \\
&\leq \left| \frac{ \sum_{m'=1}^{M} X_s^{(m')}(k) + X_t^{(m)}(k) - X_s^{(m)}(k) }{T_s(k) + n_t^{(m)}(k) - n_s^{(m)}(k)} - \mu(k) \right| \notag \\
&\quad + \left| \frac{ \sum_{m'=1}^{M} n_s^{(m')}(k) (\tilde{\mu}^{(m')}(k) - \bar{\mu}^{(m')}(k)) }{T_s(k) + n_t^{(m)}(k) - n_s^{(m)}(k)} \right| \displaybreak[0] \notag \\
&\leq \mathrm{CR}_{[0,1]}(T_s(k) + n_t^{(m)}(k) - n_s^{(m)}(k), \delta) + \sqrt{ \frac{1}{T_t(k)} } \displaybreak[0] \notag \\
&\stackrel{(a)}{\leq} \mathrm{CR}_{[0,1]}(T_s(k), \delta) + \sqrt{ \frac{1}{T_t(k)} } \displaybreak[0] \notag \\
&\stackrel{(b)}{\leq} 2\beta \cdot \mathrm{CR}_{[0,1]}(T_t(k), \delta) \notag
\end{align}

where (a) uses the fact that $\mathrm{CR}$ increases as the number of samples decreases, and (b) follows from the assumption that the communication condition is not met at time $t$.
\end{proof}

We decompose the regret as follows:
\begin{align}
\mathbb{E}[R(T)] = 
R_{\mathrm{init}} + R_{\mathrm{comm}} + R_{\mathrm{explo}}.
\end{align}
\begin{align}
\text{where} \quad
\left\{
\begin{aligned}
R^{\mathrm{init}} &= T_{\mathrm{init}} \sum_{k=1}^{M} \mu_{(k)} 
- \mathbb{E}_{\mu} \left[\sum_{t=1}^{T_{\mathrm{init}}} \sum_{j=1}^{M} r^{j}_t\right], \\
&\text{with } T_{\mathrm{init}} = T_0 + 2K, \\
R^{\mathrm{comm}} &= \mathbb{E}_{\mu} \left[ \sum_{t \in \mathrm{Comm}} \sum_{j=1}^{M} 
\left( \mu_{(j)} - r^{j}_t \right) \right], \\
&\text{with Comm the set of communication steps}, \\
R^{\mathrm{explo}} &= \mathbb{E}_{\mu} \left[ \sum_{t \in \mathrm{Explo}} \sum_{j=1}^{M} 
\left( \mu_{(j)} - r^{j}_t \right) \right], \\
&\text{with } \mathrm{Explo} = \{ T_{\mathrm{init}} + 1, \ldots, T \} \setminus \mathrm{Comm}.
\end{aligned}
\right.
\end{align}

Following the same initialization procedure as in \cite{wang2020optimal}, the resulting initialization regret can be directly bounded as a constant. Then we perform regret analysis by separately bounding the two components of the pseudo-regret: the \emph{communication regret} \( R^{\mathrm{comm}} \) and the \emph{exploration regret} \( R^{\mathrm{explo}} \), as defined previously.

\begin{lemma}
 With probability at least $1-\delta-M\exp(-\frac{T_0}{K})$, every optimal arm k is accepted
after at most $\left(\frac{32\beta^2\log\delta^{-1}}{\mu(k)-\mu(M)}+MK\right )$pulls during exploration phases, and every sub-optimal arm k is
rejected after at most $\left (\frac{32\beta^2\log\delta^{-1}}{\mu(M)-\mu(k)}+MK\right )$pulls during exploration phases\\
\end{lemma}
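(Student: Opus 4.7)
The plan is to condition on a high-probability good event and then show that the acceptance rule~(\ref{acc}) and the elimination rule~(\ref{rej}) must trigger once the per-arm sample count crosses a gap-dependent threshold. Define the good event $\mathcal{E}$ as the event that the preceding confidence lemma holds uniformly across every active arm $k$, every agent $m$, and every round $t \leq T$, i.e.\ $|\hat\mu_t^{(m)}(k)-\mu(k)|\leq 2\beta\,\mathrm{CR}_{[0,1]}(T_t(k),\delta)$. A union bound over at most $KT$ such inequalities gives $\Pr(\mathcal{E})\geq 1-\delta$. The residual $M\exp(-T_0/K)$ loss captures the event that the round-robin exploration schedule fails to deliver the baseline $T_0/K$ samples per arm per agent during the first $T_0=\lceil K\log T\rceil$ rounds, which follows from a Chernoff bound on the $\mathrm{Binomial}(T_0,1/K)$ pull count combined with a union bound over the $M$ agents.

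On $\mathcal{E}$, the definitions of $\mathrm{LCB}$ and $\mathrm{UCB}$ together with the lemma give $\mathrm{LCB}_t^{(m)}(k)\geq\mu(k)-4\beta\sqrt{\log\delta^{-1}/(2T_t(k))}$ and $\mathrm{UCB}_t^{(m)}(i)\leq\mu(i)+4\beta\sqrt{\log\delta^{-1}/(2T_t(i))}$ for every surviving arm $i$. For an optimal arm $k\leq M$, I would show that once $T_t(k)$ crosses the stated $\Theta(\beta^2\log\delta^{-1}/(\mu(k)-\mu(M)))$ threshold, $\mathrm{LCB}_t^{(m)}(k)$ dominates $\mathrm{UCB}_t^{(m)}(i)$ for every surviving $i$ with $\mu(i)\leq\mu(M)$; since these are exactly the $K_t-M_t$ worst surviving arms, rule~(\ref{acc}) fires. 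Because the schedule keeps all active pull counts aligned up to a single cycle of length $MK_t\leq MK$, the extra pulls until the current phase boundary contribute the additive $MK$ slack. A symmetric argument handles the elimination of a sub-optimal arm $k>M$: once $T_t(k)$ crosses the threshold $\Theta(\beta^2\log\delta^{-1}/(\mu(M)-\mu(k)))$, the UCB of $k$ falls strictly below the LCB of every surviving arm with mean at least $\mu(M)$, so rule~(\ref{rej}) triggers with the same $MK$ cycle correction.

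The hardest step is ensuring that the confidence inequality of the preceding lemma continues to apply during the long windows between communication triggers, when agent $m$'s estimator mixes its own fresh rewards with quantized, stale means $\tilde\mu^{(m')}$ broadcast by its peers. The $\beta$-slack triggering rule is designed so that any pre-communication sample count $T_s(k)$ satisfies $\mathrm{CR}_{[0,1]}(T_s(k),\delta)\leq\beta\,\mathrm{CR}_{[0,1]}(T_t(k),\delta)$, which is what makes the factor $2\beta$ in the stated radius valid throughout. I therefore need to verify that the triggering schedule, the quantization slack $\sqrt{1/T_t(k)}$, and the cross-agent synchronization cycle interact cleanly enough that a single union bound over $\mathcal{E}$ propagates through to the $MK$ rounding without introducing additional failure events that would inflate the probability budget beyond $\delta + M\exp(-T_0/K)$.
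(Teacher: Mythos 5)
Your proposal matches the paper's argument: both condition on the initialization succeeding (the $M\exp(-T_0/K)$ term) and on the $2\beta\,\mathrm{CR}_{[0,1]}(T_t(k),\delta)$ confidence bounds from the preceding lemma holding (the $\delta$ term), then observe that the acceptance/rejection rules must fire once the shared sample count makes the summed radii smaller than the relevant gap, with the additive $MK$ absorbing the phase granularity. The paper states this as the contrapositive (an arm still active forces $8\beta\,\mathrm{CR}\ge\Delta(k)$, hence $T_t(k)\le 32\beta^2\log\delta^{-1}/\Delta(k)^2 + MK$), but this is the same argument, so your approach is essentially identical.
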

\begin{proof} 
If initialization is successful and the estimation of the mean of reward lies in the 
confidence interval, for any active optimal arm k, we have the following:
\begin{align*}
    2(2\beta \mathrm{CR}_{[0,1]}(T_{t}(k),\delta)+ 2(2\beta \mathrm{CR}_{[0,1]}(T_{t}(k),\delta) \ge \Delta(k)
\end{align*}
$\Delta(k) = \mu(k)-\mu(M+1)$ is the gap between arm k and first suboptimal arm. Otherwise, the arm will be accepted according to the elimination rules.
Then we have:
    \[4(2\beta\mathrm{CR}_{[0,1]}(T_v(k),\delta) \ge \Delta(k)\]
Since in the algorithm, each active agent pulls arm k once in each round without collision, we can prove that the number of pullings before arm i is accepted is upper  bounded by :  
\begin{align*}
     T_v(k) &\le \frac{32\beta^2\log \delta^{-1}}{\Delta(k)^2}+M,\quad \Delta(k)=(\mu(k)-\mu(M+1))
\end{align*}
so the optimal active arm k will be accepted after at most $(\frac{32\beta^2\log \delta^{-1}}{\Delta(k)^2} +  MK)$ pulls, $\Delta(k)=\mu(k)-\mu(M+1)$ and the analysis about when suboptimal arms will be rejected is similar to the one above and we will have $T_v(k)=\left ( \frac {32\beta^2\log \delta^{-1}}{\Delta(k)^2}+MK\right )$,$\Delta(k)=\mu(M)-\mu(k)$.\\
\end{proof}

\begin{lemma}
With probability at least $1- \delta-M\exp(-\frac{T_0}{K})$, the exploration regret is upper bounded by:
\[
R^{\text{explo}}
    \le (\sum_{k>M}\frac{ 32\beta^2(\beta+2)\log\delta^{-1}}{\mu(M)-\mu(k)})+MK(\mu(k)-\mu(M))
\]
\end{lemma}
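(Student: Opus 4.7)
The plan is to charge each unit of exploration regret to a specific suboptimal arm pull and then invoke Lemma 2 to control the total count of such pulls. First, I would write $R^{\mathrm{explo}}$ as a sum of per-round gaps between the idealized top-$M$ reward $\sum_{i=1}^{M}\mu(i)$ and the realized reward. Under the high-probability success event of Lemma 2, the accepted set is always a subset of the true top-$M$ arms, so the exploit slots contribute no regret. The remaining regret comes solely from the exploration slots: each pull of an active arm $k$ during exploration contributes expected regret at most $\mu(M)-\mu(k)=\Delta(k)$ when $k>M$, and at most $0$ when $k$ is a top-$M$ arm (its mean cannot fall below the benchmark $\mu(M)$).

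Second, I would convert this per-round view into a per-arm sum:
\[
R^{\mathrm{explo}} \;\le\; \sum_{k>M} \Delta(k)\cdot \mathbb{E}\bigl[T^{\mathrm{explo}}(k)\bigr],
\]
where $T^{\mathrm{explo}}(k)$ is the total number of times arm $k$ is pulled across all agents during the exploration phases prior to its rejection. By Lemma 2, on the success event of probability at least $1-\delta-M\exp(-T_0/K)$, we have $T^{\mathrm{explo}}(k)\le \frac{32\beta^{2}\log\delta^{-1}}{\Delta(k)^{2}}+MK$. Substituting yields a base per-arm regret of $\frac{32\beta^{2}\log\delta^{-1}}{\Delta(k)}+MK\,\Delta(k)$, which already matches the additive term $MK\,\Delta(k)$ and most of the leading $\log\delta^{-1}$ term in the stated bound.

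Third, the extra multiplicative factor $(\beta+2)$ arises from the threshold-based triggering protocol. Between two consecutive communication rounds the ECR is permitted to shrink by up to a factor $\beta$, so when arm $k$ is finally rejected it may have been sampled up to $\beta$ times more than the information-theoretic minimum; an additional additive slack of order $2$ per arm comes from the cycle-length discretization (round-robin exploration over $\mathcal{K}_t$) and from transitions between successive active-arm sets. I would formalize this by partitioning the pull history of arm $k$ into epochs delimited by communication events, bounding the length of each epoch by the trigger condition $\mathrm{ECR}_{t}\le\beta\cdot\mathrm{ECR}_{\text{last}}$, and then collapsing the sum via a geometric argument to obtain the inflation factor $(\beta+2)$ multiplying $\frac{32\beta^{2}\log\delta^{-1}}{\Delta(k)}$.

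The main obstacle will be justifying the $(\beta+2)$ scaling rigorously without double-counting the pulls already charged to Lemma 2, and in particular showing that the final pre-rejection epoch contributes at most an additional $\beta$-factor over the preceding epochs combined. I would handle this by tracking the ECR at every trigger instant, using the monotone decrease of $\mathrm{CR}_{[0,1]}(\cdot,\delta)$ in the number of samples, and combining the epoch-wise bound with the uniform round-robin schedule to absorb the $+2$ additive cycle overhead. Summing the resulting per-arm contributions over $k>M$ and carrying the failure probability $\delta+M\exp(-T_{0}/K)$ from Lemma 2 completes the proof of the stated high-probability bound on $R^{\mathrm{explo}}$.
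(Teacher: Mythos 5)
There is a genuine gap, and it sits exactly where your proposal tries to be clever. Your step two asserts
\[
R^{\mathrm{explo}} \;\le\; \sum_{k>M}\Delta(k)\,\mathbb{E}\bigl[T^{\mathrm{explo}}(k)\bigr],
\]
on the grounds that pulls of top-$M$ arms contribute ``at most $0$'' regret. This is false for the benchmark used here. The per-round benchmark is $\sum_{i=1}^{M}\mu(i)$, so if in a given round the set $S$ of arms actually pulled (collision-free, $|S|=M$) omits some optimal arm $i<M$, the round's regret is
\[
\sum_{i\le M,\, i\notin S}\bigl(\mu(i)-\mu(M)\bigr) \;+\; \sum_{k\in S,\,k>M}\bigl(\mu(M)-\mu(k)\bigr),
\]
and the first sum is strictly positive whenever a not-yet-accepted optimal arm is under-sampled relative to the centralized schedule. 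The paper's proof makes this explicit by writing $R^{\mathrm{explo}}=\sum_{k>M}(\mu(M)-\mu(k))T^{\mathrm{explo}}(k)+\sum_{k\le M}(\mu(k)-\mu(M))(T^{\mathrm{explo}}-T^{\mathrm{explo}}(k))$ and then spends most of its effort bounding the second term via the quantity $A_j$: the deficit $T^{\mathrm{explo}}-T^{\mathrm{explo}}(k)$ is expressed as $\sum_p N_p M (K_p-M_p)\mathbf{1}_{\hat t_k>T_{p-1}}$, the gaps are bounded by $\sqrt{32\beta^2\log\delta^{-1}/(T_{p-1}-MK)}$, and an Abel/telescoping sum over communication epochs with $\sqrt{T_p/T_{p-1}}\le\beta$ yields $(\beta+1)\cdot\frac{32\beta^2\log\delta^{-1}}{\mu(M)-\mu(j)}$ per suboptimal arm $j$. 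Your decomposition drops this term entirely.

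Consequently your step three, which tries to manufacture the $(\beta+2)$ factor from epoch-wise over-sampling under the ECR trigger, is attributing the constant to the wrong mechanism. The trigger slack is already absorbed into the $32\beta^{2}$ numerator through the inflated confidence radius $2\beta\,\mathrm{CR}_{[0,1]}(T_t(k),\delta)$ used in Lemma 1 and hence in the pull-count bound of Lemma 2; charging it again would double-count, and even then it would produce a multiplicative $\beta^2$-type inflation on the last epoch rather than the additive structure $1+(\beta+1)=\beta+2$ that the statement requires. To repair the proof you must reinstate the missed-optimal-arm term and run the telescoping argument over communication rounds; the direct suboptimal-pull term you do handle correctly, and it supplies the ``$1$'' and the $MK\Delta(k)$ additive piece.
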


\begin{proof}
We first define the $T^{\text{explo}}$ as the total time steps of exploration and exploitation. We denote $T^{\text{explo}}$ as the centralized number of exploration and exploitation of the k-th best arm.
There is no collision during both the exploration and exploitation phases, so the regret can be decomposed as
\begin{align*}
    R^{\text{explo}}=& \displaystyle \sum_{k>M}(\mu(M)-\mu(k))T^{\text{explo}}(k)\\
    & +\displaystyle \sum_{k\le M}(\mu(k)-\mu(M))(T^{\text{explo}}-T^{\text{explo}}(k))\\
\end{align*}
First, we consider $T^{\text{explo}}-T^{\text{explo}}_{(k)} (k\le M)$.
We denote $P$ as the total number of communication rounds. We denote $N_p$ as the total number of exploration phases between the $p$-th and the $p-1$-th communication round. 
We denote $\hat{t}_k$ as the number of exploratory pulls before accepting the arm $k$, and we denote $T_p$ as the number of exploratory pulls at the p-th communication roun,d and we have:\\
\[ T^{\text{explo}}\le T^{\text{explo}}(k) +\sum_{p=1}^{P}N_pM(K_p-M_p)\mathbf{1}_{\hat{t}_k>T_{p-1}}\]
Before the active optimal arm k is accepted by the agent $M$, in each phase it will be pulled $M\cdot M_p$ times. If arm k has been accepted, it will be pulled for $M\cdot K_p$ times in each round, so in every round before $t_p$, the arm is not pulled $M(K_p-M_p)$ times, so we have the inequality above.\\
With the equality below:
\[K_p-M_p=\sum_{j>M}\mathbf{1}_{\hat{t}_j>T_{p-1}}\]
We have:
\begin{align*}
    &\sum_{k\le M}(T^{\text{explo}} - T^{\text{explo}}(k))(\mu(k)-\mu(M))\\
    &\le \sum_{k\le M}(\mu(k)-\mu(M))\sum_{p=1}^{P}N_pM\sum_{j>M}\mathbf{1}_{\hat{t}_j>T_{p-1}}\mathbf{1}_{\hat{t}_k>T_{p-1}}\\
    &\le \sum_{j>M}\sum_{p=1}^{P}\sum_{k\le M} N_pM(\mu(k)-\mu(M))\mathbf{1}_{\min(\hat{t}_j,\hat{t}_k)>T_{p-1}}\\
\end{align*}
Let's define $A_j$ as
\begin{align*}
    A_j &=\sum_{p=1}^{P}\sum_{k\le M}N_pM(\mu(k)-\mu(M))\mathbf{1}_{\min(\hat{t}_j,\hat{t}_k)>T_{p-1}}\\
    &=\sum_{p=1}^{P_j}\sum_{k\le M}N_pM(\mu(k)-\mu(M)\mathbf{1}_{\hat{t}_k>T_{p-1}}
\end{align*}
According to Lemma 4.3, we have:
\begin{align*}
    T_{p-1} \le \frac{32\beta^2 \log\delta^{-1}}{\Delta(k)^2} +  MK,\quad p<N
\end{align*}
then we yield 
\begin{align*}
    \Delta(k) \le \sqrt{\frac{32\beta^2 \log\delta^{-1}}{T_{p-1}-MK}}
\end{align*}
Then it follows:
\begin{align*}
A_j &= \sum_{p=1}^{P_j} \sum_{k \le M} N_p M \left( \mu(k) - \mu(M) \right) 
\mathbf{1}_{\hat{t}_k > T_{p-1}} \\
&\le \sum_{p=1}^{P_j} \sum_{k \le M} 
N_p M \sqrt{ \frac{32\beta^2 \log \delta^{-1}}{T_{p-1} - MK} }
\mathbf{1}_{\hat{t}_k > T_{p-1}} \\
&= \sqrt{32\beta^2 \log \delta^{-1}} 
\sum_{p=1}^{P_j} \sqrt{ \frac{1}{T_{p-1} - MK} } \cdot N_p M \cdot M_p \\[0.5ex]
&= \sqrt{32\beta^2 \log \delta^{-1}} 
\sum_{p=1}^{P_j} \sqrt{ \frac{1}{T_{p-1} - MK} } \cdot (T_p - T_{p-1}) \\[0.5ex]
&= \sqrt{32\beta^2 \log \delta^{-1}} 
\sum_{p=1}^{P_j} (\sqrt{T_p - MK} - \sqrt{T_{p-1} - MK}) \\
&\hspace{4em} \cdot (\sqrt{T_p - MK} + \sqrt{T_{p-1} - MK}) 
\cdot \sqrt{ \frac{1}{T_{p-1} - MK} } \\
&= \sqrt{32\beta^2 \log \delta^{-1}} 
\sum_{p=1}^{P_j} (\sqrt{T_p - MK} - \sqrt{T_{p-1} - MK}) \\
&\hspace{4em} \cdot \left( \sqrt{ \frac{T_p - MK}{T_{p-1} - MK} } + 1 \right) \\
&\le (\beta + 1) \sqrt{32\beta^2 \log \delta^{-1}} 
\sum_{p=1}^{P_j} (\sqrt{T_p - MK} - \sqrt{T_{p-1} - MK}) \\
&\le (\beta + 1) \sqrt{32\beta^2 \log \delta^{-1}} 
\cdot \sqrt{T_{P_j} - MK} \\
&\le (\beta + 1) \sqrt{32\beta^2 \log \delta^{-1} \cdot (\hat{t}_j - MK)}
\end{align*}

as $ \hat{t}_j \le \frac{32\beta^2\log\delta^{-1}}{(\mu(M)-\mu(j))^2} + MK$,so we have 
\begin{align*}
    A_j \le (\beta+1)\frac{32\beta^2 \log\delta^{-1}}{\mu(M)-\mu(j)} 
\end{align*}
Then we yield:
\begin{align*}
     &\sum_{k\le M}(T^{\text{explo}} - T^{\text{explo}}(k))(\mu(k)-\mu(M)\\
     & \le \sum_{j> M}(\beta+1)\frac{32\beta^2 \log\delta^{-1}}{\mu(M)-\mu(j)} 
\end{align*}

Then, we can proof for any suboptimal arm k that:
\begin{align*}
    &(\mu(M) - \mu(k))T^{\text{explo}}(k) \\&\le (\mu(M) - \mu(k))  (\frac{32\beta^2\log\delta^{-1}}{(\mu(M)-\mu(k))^2}+MK)\\
    &\le \frac{32\beta^2\log\delta^{-1}}{(\mu(M)-\mu(k))}+ MK(\mu(M)-\mu(k))
\end{align*}
Above all, we can get the final exploration regret upper bound
\begin{align*}
R^{\text{explo}} 
&= \sum_{k > M} (\mu(M) - \mu(k)) \cdot T^{\text{explo}}(k) 
\\
& + \sum_{k \le M} (\mu(k) - \mu(M)) 
\cdot \left( T^{\text{explo}} - T^{\text{explo}}(k) \right) \\[0.5ex]
&\le \sum_{k > M} 
\frac{32\beta^2 (\beta+2) \log \delta^{-1}}{\mu(M) - \mu(k)} 
+ MK (\mu(M) - \mu(k))s
\end{align*}

\end{proof}

\begin{lemma}
\label{lemma:comm_round}
The total number of communication rounds can be upper bounded by:
\begin{align*}
    \sum_{k\le M}\log_{\beta}(\frac{8\beta)}{(\mu(k)-\mu(M+1))})+\sum_{k>M}\log_{\beta}(\frac{8\beta)}{(\mu(M)-\mu(k))})
\end{align*}
\end{lemma}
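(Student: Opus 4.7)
The plan is to bound the total number of communication rounds by charging each round to the arm whose ECR triggered it, and then bounding the number of such triggers per arm.

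First I would recall the triggering rule: each agent initiates a communication for arm $k$ whenever $\mathrm{ECR}_t(k) = \sqrt{\log(1/\delta)/(2T_t(k))}$ has dropped by a factor of at least $\beta$ compared to its value $\mathrm{ECR}_{\mathrm{last}}(k)$ at the previous communication. Since the ECR is a deterministic decreasing function of the synchronized pull count $T_t(k)$, and starts from the normalized value $\mathrm{ECR}_0(k)=1$, the number of times this ratio condition can fire for arm $k$ before the ECR falls below a threshold $\varepsilon$ is at most $\lceil \log_{\beta}(1/\varepsilon)\rceil$. Hence it suffices to determine when arm $k$ is guaranteed to have been accepted or eliminated, and use the corresponding ECR threshold.

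Next I would invoke Lemma~2 (and its proof), which shows that under the good confidence event, the acceptance/elimination criterion is met once $4(2\beta\,\mathrm{ECR}_t(k)) \le \Delta(k)$, i.e.\ once
\begin{equation*}
\mathrm{ECR}_t(k)\;\le\;\frac{\Delta(k)}{8\beta},
\end{equation*}
where $\Delta(k)=\mu(k)-\mu(M+1)$ for $k\le M$ and $\Delta(k)=\mu(M)-\mu(k)$ for $k>M$. Once arm $k$ is accepted or rejected, it leaves the active set $\mathcal K_t$ and can no longer trigger future communications. Consequently, the number of communication rounds charged to arm $k$ is at most
\begin{equation*}
\Bigl\lceil \log_{\beta}\!\Bigl(\tfrac{1}{\Delta(k)/(8\beta)}\Bigr)\Bigr\rceil
\;=\;\Bigl\lceil \log_{\beta}\!\Bigl(\tfrac{8\beta}{\Delta(k)}\Bigr)\Bigr\rceil.
\end{equation*}

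Finally I would sum this bound over all $K$ arms, splitting the sum at $k=M$ and substituting the two definitions of $\Delta(k)$ to obtain exactly the claimed bound. A subtlety I expect to address is the global versus per-arm nature of the trigger: a single global communication round may resolve multiple arms at once, so the per-arm upper bound is in fact conservative, which is fine for an upper bound. The main (minor) obstacle is justifying that ECRs remain synchronized across agents so that the trigger condition is well defined globally; this follows from the uniform, collision-free exploration schedule ensuring all active agents pull each active arm the same number of times between communication rounds, so that $T_t(k)$ and hence $\mathrm{ECR}_t(k)$ are identical across agents whenever the trigger is checked.
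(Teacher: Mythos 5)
Your proposal is correct and follows essentially the same route as the paper: both arguments use the fact that the ECR starts at $1$ and must shrink by a factor of $\beta$ between consecutive triggers, combine this with the acceptance/elimination threshold $8\beta\,\mathrm{ECR}_{\tau_k}(k)\ge\Delta(k)$ from the earlier lemma to bound the number of triggers per arm by $\log_{\beta}(8\beta/\Delta(k))$, and then sum over optimal and suboptimal arms separately. Your added remarks on the conservativeness of per-arm charging and on ECR synchronization are reasonable clarifications but do not change the argument.
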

\begin{proof}
Let $\tau _k$  be the \text{last} round to pull the Top-M optimal arm k, after which k will be accepted. According to rules that accept arms, when the mean gap between arm k and the first suboptimal arm $M+1$ $\Delta(k)=\mu(k)-\mu(M+1)$, which is at least active until 
 The next round holds that 
\begin{align*}
    4(2\beta\text{CR}_{[0.1]}(T_{\tau_k},\delta)) \ge \Delta(k)
\end{align*}
And we can get:
\begin{align*}
    8\beta\text{ECR}_{\tau_k}(k)\ge \Delta(k)
\end{align*}
Because of the check condition, we can evaluate the times to communicate as follows:
\begin{align*}
    \log_{\beta}(\frac{ECR_1(k)}{ECR_{\tau_k}(k)}) \le  \log_{\beta}(\frac{8\beta}{ \Delta(k)})\le \log_{\beta}(\frac{8\beta}{(\mu(k)-\mu(M+1))})
\end{align*}
So the expected number of communications by optimal arms is at most 
\begin{align*}
    \sum_{k\le M}\log_{\beta}(\frac{8\beta}{(\mu(k)-\mu(M+1))})
\end{align*}
the analysis is same on suboptimal arms $\displaystyle \sum_{k>M}\log_{\beta}(\frac{8\beta}{(\mu(M)-\mu(k))})$ and sum up, the total number of communication rounds overheads is upper bounded by
\begin{align*}
    \sum_{k\le M}\log_{\beta}(\frac{8\beta}{(\mu(k)-\mu(M+1))})+\sum_{k>M}\log_{\beta}(\frac{8\beta}{(\mu(M)-\mu(k))})
\end{align*}
\end{proof}
\begin{lemma}
\label{lemma:quant_bits}
For arm statics, at time t, when communication is triggered, we quantize the means as
$\tilde{\mu}_{t}^{(m)}(k) = \textbf{ceil}(\bar{\mu}_t^{(m)}(k))$, with $\left\lceil 1 + \frac{\log T_t(k)}{2} \right\rceil $ bits, where $T_t(k)$ is the total number of times explored by all agents and we further communicate the truncated version of the difference between estimated mean $\tilde{\mu}_{s}^{(m)}(k)$ from last communication time s. By applying the adapting quantization strategy, in expectation, the ($7+ \log_2(1+\beta+\sqrt{\beta^2M\ln 2/2}$) bits are sufficient to represent the truncated version of 
 $\tilde{\delta}^{(m)}_{t}(k)$
\end{lemma}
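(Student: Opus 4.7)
The plan is to bound $|\tilde{\delta}_t^{(m)}(k)|$ in terms of $1/\sqrt{T_t(k)}$ and then convert that magnitude estimate into a bit count, using the fact that the quantization precision is also of the order $1/\sqrt{T_t(k)}$. The gap between these two scales controls how many leading zeros the truncation can strip off.

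First, I would decompose through the true mean $\mu(k)$ via a four-term triangle inequality:
\begin{align*}
|\tilde{\delta}_t^{(m)}(k)|
&\le |\tilde{\mu}_t^{(m)}(k) - \bar{\mu}_t^{(m)}(k)| + |\bar{\mu}_t^{(m)}(k) - \mu(k)| \\
&\quad + |\mu(k) - \bar{\mu}_s^{(m)}(k)| + |\bar{\mu}_s^{(m)}(k) - \tilde{\mu}_s^{(m)}(k)|,
\end{align*}
where $s$ denotes the previous communication time. The two quantization terms are bounded deterministically by the chosen precisions, namely $2^{-\lceil 1+(\log_2 T_t)/2\rceil} \le 1/(2\sqrt{T_t(k)})$ and symmetrically $1/(2\sqrt{T_s(k)})$. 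For the two empirical-deviation terms I would invoke Hoeffding's inequality: since the round-robin schedule makes exploration uniform, each agent holds $n_t^{(m)}(k) = T_t(k)/M$ independent samples, so $|\bar{\mu}_t^{(m)}(k) - \mu(k)| \le \sqrt{M \ln(2/\delta)/(2 T_t(k))}$ with probability at least $1-\delta$, and analogously at time $s$.

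Next, I would plug in the trigger rule. Adaptive triggering forces the ECR to shrink by a factor $\beta$ between consecutive communications, so $T_t(k) \ge \beta^2 T_s(k)$ and therefore $1/\sqrt{T_s(k)} \le \beta/\sqrt{T_t(k)}$. Substituting collapses the four-term bound to
\[
|\tilde{\delta}_t^{(m)}(k)| \le \frac{1+\beta}{2\sqrt{T_t(k)}} + (1+\beta)\sqrt{\frac{M \ln(2/\delta)}{2\, T_t(k)}},
\]
which, after calibrating $\delta$ as in Theorem~\ref{thm:bandit} and retaining the dominant term at the coarser time $s$, can be written compactly as $|\tilde{\delta}_t^{(m)}(k)| \le \bigl(1 + \beta + \sqrt{\beta^2 M \ln 2 / 2}\bigr)/\sqrt{T_t(k)}$.

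Finally, I would convert this magnitude into a bit count and handle the expectation. The quantized difference lives on a lattice with spacing at most $1/(2\sqrt{T_t(k)})$, so its integer representation has absolute value at most $2\sqrt{T_t(k)}\,|\tilde{\delta}_t^{(m)}(k)| \le 2\bigl(1+\beta+\sqrt{\beta^2 M \ln 2 / 2}\bigr)$. Stripping leading zeros and appending a sign bit leaves $\lceil \log_2(1+\beta+\sqrt{\beta^2 M \ln 2 / 2})\rceil + O(1)$ bits; absorbing the sign bit, the ceiling slack in the precision specification, an end-of-message delimiter, and other constant-size protocol overhead into one additive constant bounded by $7$ yields the stated bound. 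The main obstacle is the \emph{in expectation} qualifier: when the Hoeffding event fails, which happens with probability at most $\delta$, $|\tilde{\delta}|$ may be $O(1)$ and the naive bit count can climb to $O(\log_2 T_t(k))$. I would handle this by choosing $\delta = 1/T^2$ as in the theorem, so the tail contributes at most $O(\log T)/T^2$ bits in expectation, which is absorbed into the additive constant. The delicate residual step is a careful arithmetic check that the various ceiling, delimiter, and rounding constants actually sum to at most $7$.
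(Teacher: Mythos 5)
Your overall architecture (triangle inequality to bound $|\tilde{\delta}_t^{(m)}(k)|$ at the scale $1/\sqrt{T_t(k)}$, then conversion of that magnitude into a retained-bit count after truncation) matches the paper's, and your handling of the two quantization-error terms and of the trigger relation $T_t(k)\ge\beta^2 T_s(k)$ is fine. However, there is a genuine gap in how you obtain the constant $\sqrt{\beta^2 M\ln 2/2}$ and, more importantly, in how you handle the ``in expectation'' qualifier. You bound the empirical deviations $|\bar{\mu}_t^{(m)}(k)-\mu(k)|$ by a single Hoeffding application at confidence $1-\delta$, giving a radius $\sqrt{M\ln(2/\delta)/(2T_t(k))}$. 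With $\delta=1/T^2$ this radius is $\Theta(\sqrt{M\log T/T_t(k)})$, so the number of bits retained after stripping leading zeros is $\tfrac12\log_2\log T + O(1)$, not a $T$-independent constant; the compact form $|\tilde{\delta}_t^{(m)}(k)|\le(1+\beta+\sqrt{\beta^2 M\ln 2/2})/\sqrt{T_t(k)}$ you write down does not follow from that calibration, and the $\ln 2$ in the lemma's constant is not a Hoeffding radius at any fixed confidence level. Conversely, taking $\delta$ constant to kill the $\sqrt{\log(1/\delta)}$ factor makes the failure-event contribution $\delta\cdot O(\log T)$ non-constant. No single choice of $\delta$ closes this.

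The paper avoids this tension by never conditioning on a fixed-confidence event. It decomposes directly through $\bar{\mu}_t^{(m)}(k)-\bar{\mu}_s^{(m)}(k)$ (three terms, not four, skipping the true mean $\mu(k)$), observes that this difference is a centered sub-Gaussian variable with the small parameter $\tfrac12\sqrt{1/n_s(k)-1/n_t(k)}$, and from this derives a tail bound on the message length $L_t^{(m)}(k)$ that decays doubly exponentially in $l$: $\mathbb{P}(L_t^{(m)}(k)>l)\le 2\exp\bigl[-(2^{l-3}-1-\beta)^2/(\beta^2 M)\bigr]$. It then bounds $\mathbb{E}[L_t^{(m)}(k)]=\sum_{l\ge 0}(1-F_L(l))\le b+2$ with $b=\lceil 3+\log_2(1+\beta+\sqrt{\beta^2 M\ln 2/2})\rceil$; the $\ln 2$ emerges as the threshold at which this tail bound becomes nontrivial, not from a confidence calibration. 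To repair your argument you would need to replace the single Hoeffding event by this layered integration of the tail of $L$ over all levels $l$ (or an equivalent peeling over deviation scales), which is the key idea your proposal is missing.
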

\begin{proof}
the quantization leads to a quantization error of at most $\sqrt{\frac{1}{T_{t}(k)}}$ and have the following holds:
\begin{align*}
    \left| \tilde{\mu}_{t}^{(m)}(k)- \bar{\mu}_{t}^{(m)}(k)  \right| \le \sqrt{\frac{1}{T_{t}(k)}}
\end{align*}
We set the truncated version of the difference $\tilde{\delta}^{(m)}_{t}(k)$ and it can be bounded as  
\begin{align*}
\left| \tilde{\delta}^{(m)}_{t}(k) \right| 
&= \left| \tilde{\mu}_{t}^{(m)}(k) - \tilde{\mu}_{s}^{(m)}(k) \right| \\[0.5ex]
&= \Big| \tilde{\mu}_{t}^{(m)}(k) - \bar{\mu}_{t}^{(m)}(k)
- \big( \tilde{\mu}_{s}^{(m)}(k) - \bar{\mu}_{s}^{(m)}(k) \big) \\
&\hspace{3em} + \bar{\mu}_{t}^{(m)}(k) - \bar{\mu}_{s}^{(m)}(k) \Big| \\[0.5ex]
&\le \left| \tilde{\mu}_{t}^{(m)}(k) - \bar{\mu}_{t}^{(m)}(k) \right| 
+ \left| \tilde{\mu}_{s}^{(m)}(k) - \bar{\mu}_{s}^{(m)}(k) \right| \\
&\quad + \left| \bar{\mu}_{t}^{(m)}(k) - \bar{\mu}_{s}^{(m)}(k) \right| \\[0.5ex]
&\le \sqrt{ \frac{1}{T_{t}(k)} } 
+ \sqrt{ \frac{1}{T_{s}(k)} } 
+ \left| \bar{\mu}_{t}^{(m)}(k) - \bar{\mu}_{s}^{(m)}(k) \right|
\end{align*}

Further, we consider $\bar{\mu}_{t}^{(m)}(k)-\bar{\mu}_{s}^{(m)}(k)$,we can rewrite it as 
\begin{align*}
    &\bar{\mu}_{t}^{(m)}(k)-\bar{\mu}_{s}^{(m)}(k)
    =\frac{\sum^{t}_{\text{p=1}}\gamma^{(m)}_{\text{p}}(k)}{n_{t}(k)}- \frac{ \sum^{s}_{\text{p=1}}\gamma^{(m)}_{\text{p}}(k)}{n_{s}(k)}\\
    &=\frac{ \sum^{s}_{\text{p=1}}\gamma^{(m)}_{\text{p}}(k)+\sum^{t}_{\text{p=s}}\gamma_{s}^{(m)}(k)}{n_{t}(k)}- \frac{ \sum^s_{\text{p=1}}\gamma^{(m)}_{\text{p}}(k)}{n_{s}(k)}\\
\end{align*}
which is a $\frac{1}{2}\sqrt{\frac{1}{n_{s}(k)}-\frac{1}{n_{t}(k)}}$ sub-Gussian random variable since the unility samples are independent across time. Thus, we can further derive that, with a dummy variable x,
\begin{align*}
    &\mathbb{P}\left(
        \left| \bar{\mu}_{t}^{(m)}(k) - \bar{\mu}_{s}^{(m)}(k) \right|
        \ge \sqrt{ \frac{x^2}{T_t(k)} }
    \right) \\
    &\le 2 \exp\left[ - \frac{2x^2}{T_t(k)} \cdot 
        \frac{1}{ \frac{1}{n_s(k)} - \frac{1}{n_t(k)} }
    \right] \\[1ex]
    \Rightarrow\quad 
    &\mathbb{P}\left(
        \left| \bar{\mu}_{t}^{(m)}(k) - \bar{\mu}_{s}^{(m)}(k) \right|
        \ge \sqrt{ \frac{x^2}{T_t(k)} }
    \right) 
    \le 2 \exp\left[ - \frac{2x^2}{\beta^2 M} \right] \\[1ex]
    \Rightarrow\quad 
    &\mathbb{P}\left(
        \left| \tilde{\delta}^{(m)}_{t}(k) \right| 
        \ge \sqrt{ \frac{1}{T_t(k)} } + \sqrt{ \frac{1}{T_s(k)} } 
        + \sqrt{ \frac{x^2}{T_t(k)} }
    \right) \\
    &\le 2 \exp\left[ - \frac{2x^2}{\beta^2 M} \right] \\[1ex]
    \overset{(a)}{\Rightarrow}\quad 
    &\mathbb{P}\left(
        L_t^{(m)}(k) 
        > 3 + \frac{\log T_t(k)}{2} 
        + \log_2\left( \frac{1 + \beta + x}{ \sqrt{T_t(k)} } \right)
    \right)\\ 
    &\le 2 \exp\left[ - \frac{2x^2}{\beta^2 M} \right] \\[1ex]
    \Rightarrow\quad 
    &\mathbb{P}\left(
        L_t^{(m)}(k) \le 3 + \log_2(1 + \beta + x)
    \right) 
    \ge 1 - 2 \exp\left[ - \frac{2x^2}{\beta^2 M} \right] \\[1ex]
    \overset{(b)}{\Rightarrow}\quad 
    &\mathbb{P}\left(
        L_t^{(m)}(k) \le l
    \right) 
    \ge 1 - 2 \exp\left[ 
        - \frac{2(2^{l - 3} - 3)^2}{\beta^2 M} 
    \right]
\end{align*}

where $L^r_{k,m}$ in implication (a) is the length of the truncated version $|\tilde{\delta}^r_{k,m}|$ and is upper bounded by 
\begin{align*}
     L_{t}^{(m)}(k)&\leq \lceil 1+\log T_{t}(k)/2\rceil-\lfloor\log_2(1/|\tilde{\delta}_{t}^{(m)}(k)|)\rfloor\\
     &\leq 3 + \frac{\log{T_{t}}(k)}{2}+\log_2(|\tilde{\delta}_{t}^{(m)}(k)|).
 \end{align*}  
 In deriving (b), we substitute the variable $3+\log_2(1+\beta+x)$ with $l$, which satisfies $l\geq 1+\beta+\log_2(3+\sqrt{\beta^2M\ln 2/2})$, and thus equivalently $x = 2^{l-3}-1-\beta$. With the above results and viewing $L_{t}^{(m)}(k)$ as a random variable, we have that its cumulative distribution function (CDF) $F_{L_{t}^{(m)}(k)}(l)$ satisfies the following property: $\forall l > b$, where $b = \lceil3+\log_2(1+\beta+\sqrt{\beta^2M\ln 2/2})\rceil$
\begin{align*}
F_{L_{t}^{(m)}(k)}(l) = \mathbb{P}\left(L_{r}^{(m)}(k) \leq l\right)\geq 1-2\exp\left[-\frac{(2^{l-3}-1-\beta)^2}{\beta^2M}\right].
\end{align*}
Using the property of CDF, we can bound the expectation of $L_{t}^{(m)}(k)$ as
\begin{align*}
    \mathbb{E}\left[L_{t}^{(m)}(k)\right] &= \sum_{l = 0}^{\infty} (1-F_{L^r_{k,m}}(l))\\
    &\leq b+ \sum_{l=b}^{\infty}2\exp\left[-\frac{(2^{l-3}-1-\beta)^2}{\beta^2M}\right]\\
    &\leq b+ \int_{l=b}^{\infty}2\exp\left[-\frac{(2^{l-3}-1-\beta)^2}{\beta^2M}\right]\mathrm{d} l\\
    &\leq b+2\\
    &\leq 6+ \log_2(1+\beta+\sqrt{\beta^2M\ln 2/2})
\end{align*}
Thus, we have that in expectation, the truncated version of $\tilde{\delta}^{(m)}_{t}(k)$ has a length that is less than $6+ \log_2(1+\beta+\sqrt{\beta^2M\ln 2/2}$ bits. In addition, 1 bit of information should also be transmitted to indicate the sign of $\tilde{\delta}^{(m)}_{t}(k)$. As a
summary, in expectation, $7+ \log_2(1+\beta+\sqrt{\beta^2M\ln 2/2}$ bits are sufficient to represent the truncated version of $\tilde{\delta}^{(m)}_{t}(k)$.
\end{proof}

\begin{lemma}
The number of bits required to transmit the first estimated mean between agents in our differential quantization scheme is upper bounded by
\[
    \left\lceil 1 + \frac{1}{2} \log \left( \frac{\log(\delta^{-1})}{2 \beta^{2}} + M K \right) \right\rceil,
\]
which implies an overall communication cost of order \( \mathcal{O}(\log \log T) \).
\end{lemma}

\begin{proof}
In our threshold-based triggering mechanism, we initialize the Estimated Confidence Radius (ECR) to 1. A communication is triggered whenever the current ECR satisfies 
\[
ECR_t \leq \beta \times ECR_{\mathrm{last}},
\]
where \( \beta \in (0,1) \) is a fixed threshold parameter. According to Lemma~\ref{lemma:comm_round}, this guarantees a constant number of communications.

At the time of the first communication, by the definition of ECR, we have
\[
\sqrt{\frac{\log(\delta^{-1})}{2 T_t(k)}} \leq \beta \times 1,
\]
which can be rearranged to give
\[
T_t(k) \leq \frac{\log(\delta^{-1})}{2 \beta^{2}} + M K.
\]

In our differential quantization strategy, the mean reward is quantized using 
\[
\left\lceil 1 + \frac{\log T_t(k)}{2} \right\rceil
\]
bits. Since this is the first communication, there is no previous mean estimate available for differential encoding, so the full quantized mean must be transmitted.

Substituting the bound on \( T_t(k) \), we have that the number of bits for the first communication is bounded by
\[
\left\lceil 1 + \frac{1}{2} \log \left( \frac{\log(\delta^{-1})}{2 \beta^{2}} + M K \right) \right\rceil,
\]
which grows asymptotically as \( \mathcal{O}(\log \log T) \) when \( T \to \infty \).
\end{proof}

Finally, besides the communication cost from the \texttt{Comm} phase, which can be directly obtained by multiplying the bounds given in Lemma~\ref{lemma:comm_round} and Lemma~\ref{lemma:quant_bits}, yielding a constant communication overhead and adding the main cost incurred by the transmission of the first estimated mean, we also need to upper bound the cost incurred during the \texttt{Comm\_A} phase.

Specifically, synchronization may be triggered at most \( K \) times, with each synchronization communication costing at most \( 2M^3 \) bits, resulting in an additional total cost bounded by \( 2M^3 K \).

Putting all components together, the total expected communication cost is upper bounded by
\[
\begin{aligned}
\mathbb{E}[R_C] = O\Bigg(&
2M^3 \left(\sum_{k=1}^K \log_{\beta}\left(\frac{8 \beta}{\Delta(k)}\right)-1\right)
\cdot C \\
& + 2M^3 K \\
& + 2M^3 \cdot \left(
1 + \frac{1}{2} \log \left( \frac{\log(\delta^{-1})}{2 \beta^{2}} + M K \right)
\right)
\Bigg),
\end{aligned}
\]

where

\[
C := 7 + \log_2\left(1 + \beta + \sqrt{\frac{M \ln 2}{2}}\right).
\]

\subsection{Proof for extended algorithm}
\begin{lemma}
For the periodic setting, the problem reduces to a sequence of multiplayer bandit instances, where the number of active agents at each time defines the top arm set size. Let \(k_1^*, k_2^*, \ldots\) denote the smallest top arms at each time point. The goal is to distinguish each arm \(k\) from its corresponding critical arm \(k^*\).
Then the regret lower bound satisfies,
\begin{align*}
\liminf_{T \to \infty} \frac{R_T(\pi,v)}{\log T} \ge \sum_{i: \hat{\Delta}(i) > 0} \frac{1}{\hat{\Delta}(i)},\\
\text{ where }
\hat{\Delta}(k) = \begin{cases}
\infty, & k \leq k_1^* \\
\mu_{k_1^*} - \mu_k, & k_1^* < k \leq k_2^* \\
\vdots & \\
\mu_{M} - \mu_k, & k > M
\end{cases}.
\end{align*}
\end{lemma}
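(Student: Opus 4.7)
The plan is to reduce this statement to a family of classical change-of-measure lower bounds, one for each active-agent count that appears in the periodic schedule, and then to identify, for every suboptimal arm, the tightest such constraint. I would first observe that because activations are periodic with period equal to the LCM of the individual periods, the horizon partitions into congruence classes of time slots, each characterised by a fixed number $I$ of simultaneously active agents. Within any slot of type $I$, the instantaneous oracle pulls the top-$I$ arms, so the problem decomposes into a fixed ensemble of $I$-player MMAB sub-instances that all share the same underlying reward distributions $\nu_1,\ldots,\nu_K$.

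Next I would classify arms by the granularity of their contribution. An arm $k \le k_1^\star$ is optimal in every sub-instance and contributes nothing. Otherwise there is a smallest active-count value for which $k$ falls out of the oracle set; at that level the corresponding critical arm has mean $\mu(k_j^\star)$ for the appropriate $j$, and across all subsequent (larger) counts $k$ remains suboptimal with a strictly larger gap. Hence the minimum gap against which $k$ must be distinguished across the regret-generating slots is exactly the $\hat{\Delta}(k)$ given in the piecewise definition; any alternative distribution with a smaller mean shift would leave $k$ suboptimal in every sub-instance and would not flip the oracle.

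I would then apply the Lai-Robbins / Kaufmann-Cappe-Garivier change-of-measure inequality to the global history. For each arm $k$ with $\hat{\Delta}(k) > 0$, construct an alternative environment $v'$ obtained from $v$ by raising the mean of arm $k$ infinitesimally above $\mu(k^\star)$ for the appropriate critical arm, while leaving all other arms unchanged. Under $v'$, arm $k$ enters the oracle set of the relevant sub-instance, so a uniformly efficient policy must pull it linearly in those slots under $v'$. The standard KL-divergence bound then gives
\[
\liminf_{T \to \infty} \frac{\mathbb{E}_v[N_T(k)]}{\log T} \ge \frac{1}{\mathrm{KL}(\nu_k,\nu'_k)},
\]
where $N_T(k)$ counts the total pulls of arm $k$ across all agents. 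Taking the perturbation magnitude to zero and using the local quadratic behaviour of KL (or its exact Bernoulli form) converts this into the $1/\hat{\Delta}(k)$ coefficient. Multiplying by the per-pull regret contribution $\hat{\Delta}(k)$ and summing over $k$ yields the claimed bound.

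The hard part will be justifying that pulls of arm $k$ made during slots where $k$ happens to be oracle-optimal cannot be credited toward discharging the lower bound in the slots where $k$ is suboptimal. I would handle this by performing the change of measure at the level of the full horizon likelihood ratio (so that every pull of $k$ contributes symmetrically to the log-likelihood, independent of which slot it occurs in) and then lower-bounding the cumulative regret by the contribution from the regret-generating slot type alone. A secondary subtlety is verifying that the chosen critical arm for each $k$ gives the minimal KL divergence; this follows because $\hat{\Delta}(k)$ is explicitly the smallest mean gap across the sub-instances in which $k$ is suboptimal, so any smaller perturbation would fail to flip the oracle in any of them and thus would not trigger the uniform-efficiency constraint.
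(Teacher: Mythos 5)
Your proposal follows essentially the same route as the paper's proof: partition the horizon by active-agent count into a family of MMAB sub-instances, identify for each arm $k$ the minimal gap $\hat{\Delta}(k)$ to its critical arm, lower-bound the total pull count of each suboptimal arm via the standard change-of-measure machinery (divergence decomposition plus the Bretagnolle--Huber inequality, which is exactly what the paper invokes), and specialize the KL divergence to the Gaussian/quadratic form to recover the $1/\hat{\Delta}(k)$ coefficients. The one place you go beyond the paper is in explicitly flagging the credit-assignment subtlety: the information constraint bounds the \emph{total} number of pulls of arm $k$, whereas regret accrues only from pulls made in slot types where $k$ lies outside the oracle set. Your proposed fix (change of measure on the full-horizon likelihood, then restricting the regret accounting to the regret-generating slots) does not actually close this: a policy could discharge the entire $\Omega(\log T)$ total-pull requirement during slots where $k$ is oracle-optimal and pay no regret for it, so multiplying the total-pull bound by $\hat{\Delta}(k)$ is only immediately valid for arms $k>M$, which are suboptimal in every slot type. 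You should know, however, that the paper's own proof has the identical weakness --- its decomposition reduces to $(\mu(k^*)-\mu(k))\,\mathbb{E}[N_k(T)]$ with $N_k(T)$ the total pull count, after asserting without detail that the remaining terms are nonnegative --- so your attempt is faithful to the paper's argument, including its unresolved step.
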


\begin{proof}
We begin by introducing several notations that will be used throughout the proof.

Let $\mathcal{I}$ denote the set of optimal arms. For any given time step with $|I|$ active agents, we denote by $I \subseteq \mathcal{I}$ the subset of optimal arms selected, and define its complement as $L_I = [K] \setminus I$. We use $\mathcal{T}_{(I)}$ to represent the set of time steps during which exactly $|I|$ agents are active.

For each agent $j$ and arm $k$, let $N_k^j(T) := \sum_{t = 1}^{T} \mathbf{1}(A^j(t) = k)$ be the number of times agent $j$ selects arm $k$ up to time $T$. Summing over all agents, we define $N_k(T):= \sum_{m = 1}^{M} N_k^m(T)$ as the total number of times arm $k$ is selected by all agents up to time $T$.

In addition, let $\mathcal{C}_k(T)$ denote the cumulative number of collisions that occur on arm $k$ by time $T$, defined as:
\[
\mathcal{C}_k(T) := \sum_{t = 1}^{T} \sum_{j = 1}^{M} \mathbf{1}(C^j(t)) \cdot \mathbf{1}(A^j(t) = k).
\]

The regret $R_T$ can be decomposed as
\small
\[
\begin{aligned}
&\sum_{I \in \mathcal{I}} \Bigg(
    \sum_{k \in I} \left(\mu(k) - \mu(I^*)\right) \big(\mathcal{T}(I) - E_{\mu}[N_k(\mathcal{T}_{(I)})]\big)\\
    &\quad + \sum_{k \in L_I} \left(\mu(I^*) - \mu(k^*)\right) E_{\mu}[N_k(\mathcal{T}_I)] \\
&\quad + \sum_{k=1}^K \mu_k E_{\mu}[C_k(\mathcal{T}_{(I)})]
\Bigg)
+ \sum_{k \in L_{I_{\mathrm{min}}}} \left(\mu(k^*) - \mu(k)\right) E_{\mu}[N_k(T)] \\
= & \sum_{I \in \mathcal{I}} \Bigg(
    \sum_{k \in I} \left(\mu(k) - \mu(I^*)\right) \big(\mathcal{T}(I) - E_{\mu}[N_k(\mathcal{T}_{(I)})]\big)
    \\&+ \sum_{k=1}^K \mu(k) E_{\mu}[C_k(\mathcal{T}_{(I)})]
\Bigg) \\
& + \sum_{I \in \mathcal{I}} \sum_{k \in L_I} \left(\mu(I^*) - \mu(k^*)\right) E_{\mu}[N_k(\mathcal{T}_I)]\\ 
& + \sum_{k \in L_{I_{\mathrm{min}}}} \left(\mu(k^*) - \mu(k)\right) E_{\mu}[N_k(T)]
\end{aligned}
\]

The first term can be proved to be greater than or equal to 0. So
\[
R_T \ge \sum_{k\in L_{I_{\text{min}}}}(\mu(k^*)-\mu(k))E_{\mu}[N_k(T)] 
\]

Let $\mathcal{M}$ be a set of distributions with finite means, and let $\mu: \mathcal{M} \rightarrow \mathbb{R}$ be the function that maps $P \in \mathcal{M}$ to its mean. Let $\mu(k^*) \in \mathbb{R}$ and $P \in \mathcal{M}$ have $\mu(P)<\mu(k^*)$ and define
\[
d_{\mathrm{inf}}\left(P, \mu(k^*), \mathcal{M}\right)=\inf _{P^{\prime} \in \mathcal{M}}\left\{\mathrm{D}\left(P, P^{\prime}\right): \mu\left(P^{\prime}\right)>\mu(k^*)\right\}
\]

Let $\mathcal{E}=\mathcal{M}_{1} \times \cdots \times \mathcal{M}_{k}$ and $\pi \in \Pi_{\text {cons }}(\mathcal{E})$ be a consistent policy over $\mathcal{E}$. Then, for all $\nu=\left(P_{i}\right)_{i=1}^{k} \in \mathcal{E}$, it holds that
\[
\liminf _{T \rightarrow \infty} \frac{R_{T}}{\log (\sum\omega T)} \geq c^{*}(\nu, \mathcal{E})=\sum_{i: \hat{\Delta}(i)>0} \frac{\hat{\Delta}(i)}{d_{\mathrm{inf}}\left(P_{i}, \mu(k_i^*), \mathcal{M}_{i}\right)}
\]

Let $d_{i}=d_{\mathrm{inf}}\left(P_{i}, \mu(k_i^*), \mathcal{M}_{i}\right)$. The result will follow from Lemma 4.5, and by showing that for any suboptimal arm $i$, it holds that
\[
\liminf _{T \rightarrow \infty} \frac{\mathbb{E}_{\nu \pi}\left[N_{i}(T)\right]}{\log (T)} \geq \frac{1}{d_{i}}
\]
Fix a suboptimal arm $i$, and let $\varepsilon>0$ be arbitrary and $\nu^{\prime} = \left(P_{j}^{\prime}\right)_{j=1}^{k} \in \mathcal{E}$ be a bandit with $P_{j}^{\prime} = P_{j}$ for $j \neq i$ and $P_{i}^{\prime} \in \mathcal{M}_{i}$ be such that $\mathrm{D}\left(P_{i}, P_{i}^{\prime}\right) \leq d_{i}+\varepsilon$ and $\mu\left(P_{i}^{\prime}\right) > \mu^{*}$, which exists by the definition of $d_{i}$. Let $\mu^{\prime} \in \mathbb{R}^{k}$ be the vector of means of distributions of $\nu^{\prime}$. By Lemma 15.1, we have
\[
\mathrm{D}\left(\mathbb{P}_{\nu \pi}, \mathbb{P}_{\nu^{\prime} \pi}\right) \leq \mathbb{E}_{\nu \pi}\left[N_{i}(T)\right]\left(d_{i}+\varepsilon\right),
\]
and by Theorem 14.2, for any event $A$,
\begin{align*}
    \mathbb{P}_{\nu \pi}(A)+\mathbb{P}_{\nu^{\prime} \pi}\left(A^{c}\right) &\geq \frac{1}{2} \exp \left(-\mathrm{D}\left(\mathbb{P}_{\nu \pi}, \mathbb{P}_{\nu^{\prime} \pi}\right)\right) \\
&\geq \frac{1}{2} \exp \left(-\mathbb{E}_{\nu \pi}\left[N_{i}(T)\right]\left(d_{i}+\varepsilon\right)\right).
\end{align*}

Now choose $A=\left\{N_{i}(T)> \sum\omega T / 2\right\}$, and let $R_{T} = R_{T}(\pi, \nu)$ and $R_{T}^{\prime} = R_{T}\left(\pi, \nu^{\prime}\right)$. Then,
\begin{align*}
& R_{T}+R_{T}^{\prime} \geq \frac{\sum\omega T}{2}\left(\mathbb{P}_{\nu \pi}(A) \hat{\Delta}(i) + \mathbb{P}_{\nu^{\prime} \pi}\left(A^{c}\right)\left(\mu_{i}^{\prime}-\mu_{k_i^*} \right)\right) \\
& \geq \frac{\sum\omega T}{2} \min \left\{\hat{\Delta}(i), \mu_{i}^{\prime}-\mu_{k_i^*} \right\}\left(\mathbb{P}_{\nu \pi}(A)+\mathbb{P}_{\nu^{\prime} \pi}\left(A^{c}\right)\right) \\
& \geq \frac{\sum\omega T}{4} \min \left\{\hat{\Delta}(i), \mu_{i}^{\prime}-\mu_{k_i^*} \right\} \exp \left(-\mathbb{E}_{\nu \pi}\left[N_{i}(T)\right]\left(d_{i}+\varepsilon\right)\right)
\end{align*}

Rearranging and taking the limit inferior leads to
\begin{align*}
&\liminf _{T \rightarrow \infty} \frac{\mathbb{E}_{\nu \pi}\left[N_{i}(T)\right]}{\log (T)}\\ & \geq \frac{1}{d_{i}+\varepsilon} \liminf _{T \rightarrow \infty} \frac{\log \left(\frac{\sum\omega T \min \left\{\hat{\Delta}(i), \mu(I)^{\prime}-\mu(k_i^*)\right\}}{4\left(R_{T}+R_{T}^{\prime}\right)}\right)}{\log ( T)} \\
& =\frac{1}{d_{i}+\varepsilon}\left(1-\limsup _{T \rightarrow \infty} \frac{\log \left(R_{T}+R_{T}^{\prime}\right)}{\log ( T)}\right) =\frac{1}{d_{i}+\varepsilon}
\end{align*}

Finally, we get the following regret:
We use the \textit{KL-Divergence} for any two i.i.d. distributions to present the lower bound. We consider the case where each \(P_{i(\ell)}\) follows a normal distribution \(\mathcal{N}(\mu(i),1)\) for \(i \in [M], \ell \in [L]\). Then,
\[
d_{\inf} \left( P_{i}, \mu(i^*), \mathcal{M} \right) = \hat{\Delta}(i)^2
\]

Hence, the first term in Theorem 2 can be rewritten as
\[
\sum_{i: \hat{\Delta}(i)>0} \frac{1}{\hat{\Delta}(i)}
\]

Hence, we can obtain the final lower bound
\[
\liminf _{T \rightarrow \infty} \frac{R_{T}(\pi,v)}{\log (T)} \ge \sum_{i: \hat{\Delta}(i)>0} \frac{1}{\hat{\Delta}(i)}
\]
\end{proof}

\begin{lemma}
    Assume $M$ agents independently sample arms from an i.i.d. reward process with unknown mean $\mu(k)$. Let $n_t^{(m)}(k)$ be the number of samples that agent $m$ has collected for arm $k$ by time $t$, and define $T_t(k) = \sum_{m=1}^{M} n_t^{(m)}(k)$. Let $\mathrm{CR}_{[0,1]}(n, \delta)$ denote the confidence radius under Hoeffding’s inequality, and define $\mathrm{ECR}_t(k) = \mathrm{CR}_{[0,1]}(T_t(k), \delta)$, where $\delta \in (0,1)$ is a non-increasing sequence.  Then, for any $t$, with probability at least $1 -\delta$, the following inequality holds:
\[
\left| \hat{\mu}_t^{(m)}(k) - \mu(k) \right| \leq 2\beta \cdot \mathrm{CR}_{[0,1]}(T_t(k), \delta)
\]
\end{lemma}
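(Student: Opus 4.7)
The plan is to mimic the reasoning already used for Lemma~1 in the synchronous analysis, adapted to the asynchronous estimator form. I would begin by writing $\hat{\mu}_t^{(m)}(k)$ explicitly as
\[
\hat{\mu}_t^{(m)}(k) = \frac{\sum_{m'=1}^{M} n_s^{(m')}(k)\,\tilde{\mu}^{(m')}(k) + X_t^{(m)}(k) - X_s^{(m)}(k)}{T_s(k) + n_t^{(m)}(k) - n_s^{(m)}(k)},
\]
where $s$ is the time of the last communication. Subtracting $\mu(k)$ and adding/subtracting $\sum_{m'} n_s^{(m')}(k)\,\bar{\mu}^{(m')}(k)$ in the numerator gives a clean split of the error into (i) a quantization term measuring $\tilde{\mu}^{(m')}(k) - \bar{\mu}^{(m')}(k)$ aggregated across agents, and (ii) a pure empirical-mean deviation term on the fully merged sample of size $T_s(k) + n_t^{(m)}(k) - n_s^{(m)}(k)$.

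Next I would bound each piece separately. For the quantization term, the $\lceil 1 + \tfrac{1}{2}\log T_s(k)\rceil$-bit rule used in the communication protocol guarantees $|\tilde{\mu}^{(m')}(k) - \bar{\mu}^{(m')}(k)| \le \sqrt{1/T_s(k)}$, and taking a weighted average across $m'$ then contracts to $\sqrt{1/T_t(k)}$. For the statistical term, a standard Hoeffding bound with confidence $\delta$ yields a deviation of at most $\mathrm{CR}_{[0,1]}(T_s(k) + n_t^{(m)}(k) - n_s^{(m)}(k),\delta)$; since $\mathrm{CR}$ is decreasing in its sample-count argument, this is in turn at most $\mathrm{CR}_{[0,1]}(T_s(k),\delta)$.

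The crucial final step is to connect $\mathrm{CR}_{[0,1]}(T_s(k),\delta)$ back to $\mathrm{CR}_{[0,1]}(T_t(k),\delta)$, and this is where the threshold trigger does the heavy lifting: by construction communication is triggered as soon as $\mathrm{ECR}_t \le \beta \cdot \mathrm{ECR}_{\text{last}}$, so between two triggers the confidence radius has not contracted by more than a factor of $\beta$. Therefore $\mathrm{CR}_{[0,1]}(T_s(k),\delta) \le \beta \cdot \mathrm{CR}_{[0,1]}(T_t(k),\delta)$, and combining this with $\sqrt{1/T_t(k)} \le \mathrm{CR}_{[0,1]}(T_t(k),\delta)$ (up to the numerical constants already folded into $\mathrm{CR}$) gives the desired $2\beta \cdot \mathrm{CR}_{[0,1]}(T_t(k),\delta)$ envelope.

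\textbf{Main obstacle.} The only subtle point is the quantization bound. The reward samples inserted between $s$ and $t$ come from agent $m$ alone, whereas the transmitted quantized means aggregate all agents' past observations; one must be careful that the weighting by $n_s^{(m')}(k)/(T_s(k) + n_t^{(m)}(k) - n_s^{(m)}(k))$ does not blow up the per-agent $\sqrt{1/T_s(k)}$ error. I expect to handle this by using that the $\tilde{\mu}^{(m')}$ errors are all bounded by the worst single-agent precision and that their coefficients sum to at most $1$, which keeps the aggregate quantization error at the $\sqrt{1/T_t(k)}$ scale. With that in hand, the triggering inequality absorbs the final constants and produces the $2\beta$ factor, matching the statement verbatim.
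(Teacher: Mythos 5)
Your proposal follows essentially the same route as the paper's proof: the same add-and-subtract decomposition of the estimator into an aggregated quantization term and a Hoeffding deviation term on the merged sample, the same monotonicity of $\mathrm{CR}$ in the sample count, and the same use of the not-yet-triggered communication condition to convert $\mathrm{CR}_{[0,1]}(T_s(k),\delta)$ into $\beta\cdot\mathrm{CR}_{[0,1]}(T_t(k),\delta)$ and absorb the residual $\sqrt{1/T_t(k)}$ into the second $\beta\,\mathrm{CR}$ factor. The paper writes this out as a single chain of inequalities and, like you, asserts the $\sqrt{1/T_t(k)}$ bound on the weighted quantization error without further elaboration, so your "main obstacle" is handled no more carefully there than in your sketch.
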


\begin{proof}
Let $s < t$ denote the most recent communication round. Agent $m$’s updated estimate at time $t$ is:

\begin{align*}
&\left| \hat{\mu}_t^{(m)}(k) - \mu(k) \right| \\
&= \left| 
\frac{ \sum_{m'=1}^{M} n_s^{(m')}(k) \tilde{\mu}^{(m')}(k) + X_t^{(m)}(k) - X_s^{(m)}(k) }
     { T_s(k) + n_t^{(m)}(k) - n_s^{(m)}(k) } 
- \mu(k) \right| \\
&= \Bigg| 
\frac{ \sum_{m'=1}^{M} n_s^{(m')}(k) \left( \tilde{\mu}^{(m')}(k) - \bar{\mu}^{(m')}(k) \right) }{T_s(k) + n_t^{(m)}(k) - n_s^{(m)}(k)} \notag \\
&\quad + \frac{ \sum_{m'=1}^{M} n_s^{(m')}(k) \bar{\mu}^{(m')}(k) + X_t^{(m)}(k) - X_s^{(m)}(k) }{T_s(k) + n_t^{(m)}(k) - n_s^{(m)}(k)} - \mu(k) 
\Bigg| \displaybreak[0] \notag \\
&\leq 
\left| 
\frac{ \sum_{m'=1}^{M} X_s^{(m')}(k) + X_t^{(m)}(k) - X_s^{(m)}(k) }
     { T_s(k) + n_t^{(m)}(k) - n_s^{(m)}(k) } 
- \mu(k) 
\right| \\
&\quad + 
\left| 
\frac{ \sum_{m'=1}^{M} n_s^{(m')}(k) \left( \tilde{\mu}^{(m')}(k) - \bar{\mu}^{(m')}(k) \right) }
     { T_s(k) + n_t^{(m)}(k) - n_s^{(m)}(k) } 
\right| \\
&\leq 
\mathrm{CR}_{[0,1]}\left( T_s(k) + n_t^{(m)}(k) - n_s^{(m)}(k), \delta \right) 
+ \sqrt{ \frac{1}{T_t(k)} } \\
&\stackrel{(a)}{\leq} 
\mathrm{CR}_{[0,1]}\left( T_s(k), \delta \right) 
+ \sqrt{ \frac{1}{T_t(k)} } \\
&\stackrel{(b)}{\leq} 
2\beta \cdot \mathrm{CR}_{[0,1]}\left( T_t(k), \delta \right)
\end{align*}

where (a) uses the fact that $\mathrm{CR}$ increases as the number of samples decreases, and (b) follows from the assumption that the communication condition is not met at time $t$.
\end{proof}

\begin{lemma}
\label{lemma:2_N_e}
 With probability at least $1-\delta$, every optimal arm k is accepted
after at most $\left(\frac{32\beta^2\log\delta^{-1}}{\Delta_\text{min}^2}\right )$pulls during exploration phases, and every sub-optimal arm k is
rejected after at most $\left (\frac{32\beta^2\log\delta^{-1}}{(\mu(M)-\mu(k))^2}\right )$pulls during exploration phases.\\
\end{lemma}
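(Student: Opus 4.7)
The plan is to mirror Lemma 4.3 of the synchronous setting, combining the concentration bound of the previous lemma with the acceptance and rejection rules, after verifying that the periodic round-robin still produces balanced pull counts across simultaneously active arms.

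First I would condition on the good event $\mathcal{E}$ that, for every agent $m$, arm $k$, and round $t$,
\[
\bigl| \hat{\mu}_t^{(m)}(k) - \mu(k) \bigr| \leq 2\beta \cdot \mathrm{CR}_{[0,1]}(T_t(k), \delta).
\]
The previous lemma, together with a standard union bound over time, gives $\Pr[\mathcal{E}] \geq 1 - \delta$, and the remainder of the argument is carried out deterministically on $\mathcal{E}$.

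Next I would treat the acceptance case. Suppose an optimal arm $k \leq M$ is still active at some round $t$. By the acceptance rule, fewer than $K_t - M_t$ active arms $i$ satisfy $\mathrm{LCB}_t^{(m)}(k) \geq \mathrm{UCB}_t^{(m)}(i)$, so there must exist at least one still-active suboptimal arm $j > M$ with $\mathrm{UCB}_t^{(m)}(j) > \mathrm{LCB}_t^{(m)}(k)$. Since the staggered schedule keeps $T_t(j) = T_t(k)$ for any two simultaneously active arms, applying the concentration bound twice yields
\[
\mu(k) - \mu(M+1) \;\leq\; \mu(k) - \mu(j) \;\leq\; 8\beta \cdot \mathrm{CR}_{[0,1]}(T_t(k), \delta),
\]
whence $T_t(k) \leq 32\beta^2 \log(\delta^{-1})/(\mu(k) - \mu(M+1))^2 \leq 32\beta^2 \log(\delta^{-1})/\Delta_{\min}^2$. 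The rejection case is symmetric: if a suboptimal arm $k > M$ is still active, some still-active optimal arm $i$ must satisfy $\mathrm{UCB}_t^{(m)}(k) > \mathrm{LCB}_t^{(m)}(i)$, which gives $\mu(M) - \mu(k) \leq 8\beta \cdot \mathrm{CR}_{[0,1]}(T_t(k), \delta)$ and hence the claimed $T_t(k) \leq 32\beta^2 \log(\delta^{-1})/(\mu(M) - \mu(k))^2$.

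The main obstacle will be justifying the comparison step that replaces $\mathrm{CR}_{[0,1]}(T_t(j), \delta)$ by $\mathrm{CR}_{[0,1]}(T_t(k), \delta)$: this is immediate in the synchronous case, but in the periodic asynchronous algorithm it requires a careful bookkeeping of how the inner round-robin over arms interacts with the outer LCM cycle over activation times. The staggered schedule is designed precisely so that within each inner cycle every active arm is pulled exactly once by every active agent, keeping pull counts across simultaneously active arms aligned up to an additive slack bounded by $L = \text{lcm} \cdot K_t$, which only affects the constants and not the displayed rates.
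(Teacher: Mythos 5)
Your rejection argument and the overall template (condition on the concentration event, take the contrapositive of the elimination rule, invert the confidence radius to get $8\beta\,\mathrm{CR}_{[0,1]}(T_t(k),\delta)\ge\Delta$ and hence $T_t(k)\le 32\beta^2\log\delta^{-1}/\Delta^2$) match the paper's proof. The rejection half is fine.

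The acceptance half has a genuine gap: you are proving the wrong event. You import the \emph{synchronous} acceptance rule and argue that if optimal arm $k$ is not yet accepted, some still-active suboptimal arm $j$ must satisfy $\mathrm{UCB}_t^{(m)}(j)>\mathrm{LCB}_t^{(m)}(k)$, giving a bound in terms of $\mu(k)-\mu(M+1)$, which you then relax to $\Delta_{\min}$. But in the periodic-asynchronous algorithm the number of active agents varies from slot to slot, so an agent activated alongside $I<M$ others must play one of the top-$I$ arms; the termination condition of the exploration loop is therefore ``Top-$M$ arms are fully sorted,'' i.e.\ $\mathrm{LCB}_t(k_i)>\mathrm{UCB}_t(k_j)$ for all $i<j$ \emph{among the optimal arms themselves}. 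This is why the paper's proof says the contrapositive is that two optimal arms are not yet separated, and why the governing quantity is $\Delta_{\min}$, the smallest gap \emph{between optimal arms} --- it is the actual difficulty of ranking, not a loosening of $\mu(k)-\mu(M+1)$. Your chain happens to land on the stated numerical bound only because $\mu(k)-\mu(M+1)\ge\Delta_{\min}$, but the event you control (separation of $k$ from the suboptimal set) does not imply the event the lemma needs (arm $k$ correctly placed in the sorted top-$M$ list), so the downstream uses of this lemma (the $\Delta/\Delta_{\min}$ terms in the group-regret decomposition and the communication-round count) would not be justified by your version. The fix is one line: if $k$ is not yet sorted, there is an optimal pair $i<j$ with $\mathrm{LCB}_t(i)\le\mathrm{UCB}_t(j)$, hence $8\beta\,\mathrm{CR}_{[0,1]}(T_t(k),\delta)\ge\mu(i)-\mu(j)\ge\Delta_{\min}$ in the worst case. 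Your closing concern about aligning pull counts across simultaneously active arms is legitimate; the paper dismisses it with the same one-sentence appeal to the collision-free round-robin that you do.
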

\begin{proof} 
If the estimation of the mean of reward lies in the 
confidence interval, for any active optimal arm k, we have the following:
\begin{align*}
    2(2\beta \mathrm{CR}_{[0,1]}(T_{t}(k),\delta)+ 2(2\beta \mathrm{CR}_{[0,1]}(T_{t}(k),\delta) \ge \Delta(k)
\end{align*}
$\Delta(k) = \Delta_{\text{min}}$ is the smallest gap among all optimal arms. Otherwise, the optimal arms will be correctly sorted and accepted.
Then we have:
    \[4(2\beta\mathrm{CR}_{[0,1]}(T_v(k),\delta) \ge \Delta(k)\]
Since in the algorithm, each active agent pulls arm k once in each round without collision, we can prove that the number of pullings before arm i is accepted is upper  bounded by :  
\begin{align*}
     T_v(k) &\le \frac{32\beta^2\log \delta^{-1}}{\Delta_{\text{min}}^2}
\end{align*}
so the optimal active arm k will be accepted after at most $(\frac{32\beta^2\log \delta^{-1}}{\Delta_{\text{min}}^2} )$ pulls and the analysis about when suboptimal arms will be rejected is similar to the one above and we will have $T_v(k)=\left ( \frac {32\beta^2\log \delta^{-1}}{\Delta(k)^2}\right )$,$\Delta(k)=\mu(M)-\mu(k)$.\\
\end{proof}

To characterize the group regret, we begin by decomposing the difference between the cumulative rewards obtained by the optimal matching and those obtained by the agents' joint decisions. Recall that at each time step \( t \), the agents select a subset \( K(t) \subset [K] \), while the optimal set is denoted by \( \mathcal{I}(t) \). Let \( \mu(k) \) denote the mean reward of arm \( k \). Let \(I = |\mathcal{I}(t)|\) denote the number of active agents. \(T(I)\) is the total number of time steps during which exactly \(I\) agents are active. For each arm \(k \in [K]\), \(N(I,k)\) is the total number of times arm \(k\) is pulled during these time steps.
The group regret can be written as follows:

\begin{align*}
    &\sum_{t=1}^{T}\left(\sum_{k \in [\mathcal{I}(t)]} \mu(k)
    -\sum_{k \in [K(t)]} \mu(k)\right) \\
    &\overset{(a)}{=} \sum_{I\in \mathcal{I}} \Bigg[
        \sum_{k\leq I} (\mu(k)-\mu(I))(T(I)-N(I,k)) \\
    &\hspace{5em} + \sum_{k>I} (\mu(I)-\mu(k))N(I,k)
    \Bigg] \\
    &\overset{(b)}{=} \sum_{I\in \mathcal{I}} \Bigg[
        \underbrace{
            \sum_{k\leq I} (\mu(k)-\mu(I))(T(I)-N(I,k))
        }_{\text{(1)}} \\
    &
        + \underbrace{
            \sum_{I<k\leq M} (\mu(I)-\mu(k))N(I,k)
            + \sum_{k>M} (\mu(I)-\mu(M))N(I,k)
        }_{\text{(2)}}
    \Bigg] \\
    &\quad + \underbrace{
        \sum_{k>M} (\mu(M)-\mu(k))N(k)
    }_{\text{(3)}}
\end{align*}
s
In the above decomposition:
(a): When considering the regret, we still analyse it by reformulating the problem into different MMAB instances. For each instance, which is defined by the number of active agents, we apply a similar decomposition as in the analysis of SynCD. 
(b): We then perform algebraic manipulations to split the expression into three parts, labeled as (1), (2), and (3), which will be analyzed separately in the following sections.
Before starting to prove the upper bound for each part. We first define \(\gamma(I)\), which denotes the frequency at which exactly \(I\) agents are activated during an exploration phase. Then, we introduce \(A(I)\), a coefficient representing the proportion of arm pulls that occur when \(I\) agents are simultaneously active.
\[
A(I) = \frac{\gamma(I)\times M_I}{\sum_{J\in \mathcal{I}}{\gamma(J)\times M_J}}
\] 
What's more, we define the largest gap among the top M arms as $\Delta$, $P$ as the total number of communication rounds, and $C_p$ as the number of exploration phase between the pth and (p-1)th communication phase. We denote $T(I,p)$ as the total times of pulls when $I$ agents activate and $T(p)$ as the total times of pulls at the p-th communication round.
\begin{lemma}
    For part(1), it can be upper bounded as 
    \[
     A(I) \cdot \frac{\Delta}{\Delta_{\min}} \left(\sum_{I< k\le M}\frac{b\log \delta^{-1}}{\Delta_{\text{min}}}+(1+ \beta)  \cdot \sum_{j > M} \frac{c \log \delta^{-1}}{\Delta_j}\right)
    \]
\end{lemma}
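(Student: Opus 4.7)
The argument parallels the exploration-regret analysis of Algorithm~\ref{alg:SynCD}, adapted to the periodic-asynchronous setting where the number of simultaneously active agents varies. Fix an activation pattern with exactly $I$ active agents. The quantity $T(I) - N(I,k)$ counts the rounds in which arm $k \le I$ is \emph{not} pulled while $I$ agents are active. In every exploration phase, an optimal arm $k$ that has not yet been accepted contributes a per-phase shortfall proportional to $I(K_p - M_p)$, since before acceptance each cycle allocates only $M_p$ of its $K_p$ slots to $k$; once $k$ is accepted, the shortfall vanishes. Because only an $A(I)$ fraction of all exploration pulls occurs under this activation pattern, summing over phases gives
\[
T(I) - N(I,k) \;\le\; A(I)\,\sum_{p=1}^{P} N_p\,I\,(K_p - M_p)\,\mathbf{1}_{\hat{t}_k > T(p-1)},
\]
where $\hat{t}_k$ denotes the pull count at acceptance and $N_p$, $T(p)$ follow the SynCD convention.

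Next, I would bound $\mu(k) - \mu(I) \le \Delta$ uniformly for $k \le I \le M$, and invoke the identity $K_p - M_p = \sum_{k'\in (I,M]} \mathbf{1}_{\hat{t}_{k'} > T(p-1)} + \sum_{j>M} \mathbf{1}_{\hat{t}_j > T(p-1)}$ to decompose the shortfall into an \emph{optimal-arm overflow} contribution (arms in $(I, M]$ still being sorted among the top-$M$) and a \emph{suboptimal-arm} contribution (arms $j > M$ not yet rejected). For the optimal-arm overflow, Lemma~\ref{lemma:2_N_e} bounds $\hat{t}_{k'} \le b\log\delta^{-1}/\Delta_{\min}^2$; combining this with $\mu(k) - \mu(I) \le \Delta$ and absorbing one factor of $\Delta_{\min}$ into the per-pull regret yields a contribution of at most $A(I) \cdot \tfrac{\Delta}{\Delta_{\min}}\sum_{I<k\le M} \tfrac{b\log\delta^{-1}}{\Delta_{\min}}$.

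For the suboptimal-arm contribution, I would reuse the telescoping manipulation from the SynCD exploration bound. Define
\[
A_j \;=\; \sum_{p=1}^{P}\sum_{k\le I} N_p\,I\,(\mu(k)-\mu(I))\,\mathbf{1}_{\min(\hat{t}_j,\hat{t}_k) > T(p-1)},
\]
substitute $\mu(k) - \mu(I) \le \Delta \le (\Delta/\Delta_{\min})\sqrt{c\log\delta^{-1}/T(p-1)}$, rewrite $N_p I = T(p) - T(p-1)$, factor $T(p) - T(p-1) = (\sqrt{T(p)}-\sqrt{T(p-1)})(\sqrt{T(p)}+\sqrt{T(p-1)})$, and telescope across communication rounds. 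The triggering condition $\sqrt{T(p)/T(p-1)} \le \beta$ extracts the $(1+\beta)$ prefactor, and Lemma~\ref{lemma:2_N_e} applied to $\hat{t}_j \le c\log\delta^{-1}/\Delta(j)^2$ yields the claimed $(1+\beta)\sum_{j>M}c\log\delta^{-1}/\Delta(j)$.

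The main obstacle is tracking the factor $A(I)$ correctly: because the activation pattern varies across time steps within each cycle, per-phase pull counts must be re-weighted by the empirical activation frequency, and the triggering condition on the aggregate ECR must still yield $T(p)/T(p-1) \le \beta^2$ even when $T(p)$ aggregates pulls from heterogeneous activation sizes. Preserving the telescoping identity under this re-weighting, and ensuring that the constants $b$ and $c$ absorb the resulting activation-dependent factors without introducing additional dependence on $T$, is the principal technical subtlety in adapting the SynCD argument.
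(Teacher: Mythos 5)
Your plan follows essentially the same route as the paper's proof: both decompose the per-phase shortfall $K_p - M_I$ into a top-$M$-but-not-top-$I$ piece (bounded via the acceptance-time lemma, yielding the $\sum_{I<k\le M} b\log\delta^{-1}/\Delta_{\min}$ term) and an unrejected-suboptimal-arm piece (handled by the SynCD telescoping argument, with the $(1+\beta)$ factor extracted from the ECR triggering condition), and both re-weight by $A(I)$ and relax gaps to $\Delta/\Delta_{\min}$. The only cosmetic difference is that the paper counts the first piece via the ratio $\frac{M-M_I}{M_I}N(I,k)$ rather than per-phase indicators, but this leads to the same bound.
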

\begin{proof}
We first divided the regret into two parts.
\begin{align*}
    & \sum_{k \leq I} (\mu(k) - \mu(I)) \bigl(T(I) - N(I,k)\bigr) \\
    &\overset{(a)}{=} \sum_{k \leq I} (\mu(k) - \mu(I)) 
    \sum_{p=1}^P C_p \gamma(I) \cdot \text{lcm} \cdot (K_p - M_I) \\
    &= \sum_{k \leq I} (\mu(k) - \mu(I)) 
    \sum_{p=1}^P C_p \gamma(I) \cdot \text{lcm} \cdot (M - M_I) \\
    &\quad + \sum_{p=1}^P C_p (\mu(k) - \mu(I)) \gamma(I) 
    \cdot \text{lcm} \cdot (K_p - M) \\
    &= \frac{M - M_I}{M_I} 
    \sum_{k \leq I} (\mu(k) - \mu(I)) N(I,k) \\
    &\quad + \sum_{k \leq I} (\mu(k) - \mu(I)) 
    \sum_{p=1}^P C_p \gamma(I) \cdot \text{lcm} \cdot 
    \sum_{j > M} \mathbf{1}_{\hat{t}(j) > T(p-1)} \\
    &= \underbrace{
        A(I) \cdot \frac{M - M_I}{M_I} 
        \sum_{k \leq I} (\mu(k) - \mu(I)) N(k)
    }_{(a)} \\
    &\quad + \underbrace{
        \sum_{k \leq I} (\mu(k) - \mu(I)) 
        \sum_{p=1}^P C_p \gamma(I) \cdot \text{lcm} \cdot 
        \sum_{j > M} \mathbf{1}_{\hat{t}(j) > T(p-1)}
    }_{\text{(b)}}
\end{align*}

From Lemma ~\ref{lemma:2_N_e}, for simplicity, we omit constant factors and denote them by a universal constant \(b\) when necessary.
We can get that (a) can be upper bounded as:
\[
A(I)\frac{M - M_I}{M_I} \sum_{k\le I}\frac{\Delta}{\Delta_{\text{min}}}\frac{b\log \delta^{-1}}{\Delta_{\text{min}}}
\]
because the gaps are uniformly scaled to $\Delta$, it actually equals 
\[
    A(I) \sum_{I< k\le M}\frac{\Delta}{\Delta_{\text{min}}}\frac{b\log \delta^{-1}}{\Delta_{\text{min}}}
\]

Then, for the (b) part, we conduct an analysis similar to that used in the proof of SynCD. 
\begin{align*}
&\sum_{k \leq I} (\mu(k) - \mu(I)) \sum_{p=1}^P C_p \gamma(I) \cdot \text{lcm} \cdot \sum_{j > M} \mathbf{1}_{\hat{t}(j) > T(p-1)} \\
&\leq \sum_{j > M} \sum_{k \leq I} (\mu(k) - \mu(I)) \sum_{p=1}^{P_j} C_p \gamma(I) \cdot \text{lcm} \\
&\leq \sum_{j > M} \sum_{k \leq I} \Delta \sum_{p=1}^{P_j} C_p \gamma(I) \cdot \text{lcm} \\
&= \sum_{j > M} \Delta \cdot \sum_{p=1}^{P_j}C_p \gamma(I) \cdot \text{lcm} \cdot M_I \\
&= \sum_{j > M} \Delta \cdot \sum_{p=1}^{P_j}\left( T(I, p) - T(I, p-1) \right) \\
&= A(I) \cdot \sum_{j > M} \Delta \cdot \sum_{p=1}^{P_j}\left( T(p) - T(p-1) \right) \\
&= A(I) \cdot \sum_{j > M} \sum_{p=1}^{P_j} \frac{\Delta}{\Delta(p)} \cdot \Delta(p) \cdot \left( T(p) - T(p-1) \right) \\
&\leq A(I) \cdot \frac{\Delta}{\Delta_{\min}} \cdot \sum_{j > M} \sum_{p=1}^{P_j} \Delta(p) \cdot \left( T(p) - T(p-1) \right) \\
&\leq A(I) \cdot \frac{\Delta}{\Delta_{\min}} \cdot \sum_{j > M} \sum_{p=1}^{P_j} 
\Delta(p)\\ 
&\cdot \left( \sqrt{T(p)} + \sqrt{T(p-1)} \right) \left( \sqrt{T(p)} - \sqrt{T(p-1)} \right) \\
&= A(I) \cdot \frac{\Delta}{\Delta_{\min}} \cdot \sqrt{c \log \delta^{-1}} \\
&\cdot 
\sum_{j > M} \sum_{p=1}^{N_j} \left( 1 + \sqrt{ \frac{T(p-1)}{T(p)} } \right) \cdot 
\left( \sqrt{T(p)} - \sqrt{T(p-1)} \right) \\
&\leq (1 + \beta) A(I) \cdot \frac{\Delta}{\Delta_{\min}} \cdot \sqrt{c \log \delta^{-1}} \cdot \sum_{j > M} \sqrt{T_{N_j}} \\
&\leq (1 + \beta) A(I) \cdot \frac{\Delta}{\Delta_{\min}} \cdot \sum_{j > M} \frac{c \log \delta^{-1}}{\Delta_j}
\end{align*}
(a)+(b), then we can upper bound (1) as:
\[
      A(I) \cdot \frac{\Delta}{\Delta_{\min}} \left(\sum_{I< k\le M}\frac{b\log \delta^{-1}}{\Delta_{\text{min}}}+(1+ \beta)  \cdot \sum_{j > M} \frac{c \log \delta^{-1}}{\Delta_j}\right)
\]
\end{proof}

\begin{lemma}
    For part (2), it can be upper bounded as
    \[
    A(I) \left( \sum_{I<k\leq M} \frac{\Delta}{\Delta_{\min}} \cdot \frac{c\log(\delta^{-1})}{\Delta_{\min}}
+ \sum_{k>M} \frac{\Delta}{\Delta(k)} \cdot \frac{b\log(\delta^{-1})}{\Delta(k)} \right)
    \]
\end{lemma}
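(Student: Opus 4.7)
The plan is to bound each of the two sums in part (2) separately, in both cases leveraging Lemma~\ref{lemma:2_N_e} to control the total number of pulls of each arm, and then rescaling the per-pull gap by a factor $\Delta/\Delta_{\min}$ or $\Delta/\Delta(k)$ to match the stated form. First, I would relate the arm pulls restricted to $I$-active phases to the total pulls via $N(I,k) \le A(I)\cdot N(k)$, which follows from the staggered round-robin exploration schedule: during each cycle, an arm is pulled by the $I$-agent configuration a fraction $A(I)$ of the times it is pulled overall. This decouples the $I$-dependent rescaling from the total-pull bound that Lemma~\ref{lemma:2_N_e} provides.

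For the first sum over $I < k \le M$, arm $k$ lies in the global top-$M$ but is suboptimal with respect to the $I$-subproblem, so it gets accepted (globally) after at most $c\log(\delta^{-1})/\Delta_{\min}^2$ pulls by Lemma~\ref{lemma:2_N_e} with $c = 32\beta^2$. Since the per-pull regret in this sub-problem satisfies $\mu(I)-\mu(k) \le \Delta$, multiplying yields
\[
(\mu(I)-\mu(k))\,N(I,k) \le A(I)\cdot \Delta \cdot \frac{c\log\delta^{-1}}{\Delta_{\min}^2} = A(I)\cdot \frac{\Delta}{\Delta_{\min}} \cdot \frac{c\log\delta^{-1}}{\Delta_{\min}}.
\]
For the second sum over $k > M$, arm $k$ is truly suboptimal and gets rejected after at most $b\log(\delta^{-1})/\Delta(k)^2$ pulls by Lemma~\ref{lemma:2_N_e}. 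Using $\mu(I)-\mu(M) \le \Delta$,
\[
(\mu(I)-\mu(M))\,N(I,k) \le A(I)\cdot \Delta \cdot \frac{b\log\delta^{-1}}{\Delta(k)^2} = A(I)\cdot \frac{\Delta}{\Delta(k)} \cdot \frac{b\log\delta^{-1}}{\Delta(k)}.
\]
Summing these two bounds over the respective ranges of $k$ gives the claimed upper bound on part (2).

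The main obstacle I anticipate is rigorously justifying $N(I,k) \le A(I)\cdot N(k)$ in the periodic-asynchronous protocol, since unlike the synchronous setting, arms can accumulate pulls unevenly across activation patterns depending on when they enter or leave the candidate set $\mathcal{K}_t$. I expect to handle this by observing that the staggered round-robin schedule during $I$-active phases pulls each active arm uniformly, and then invoking the proportionality between the length of $I$-active phases within a cycle and the coefficient $A(I)$; an arm that is eliminated early simply reduces both $N(I,k)$ and $N(k)$ by the same factor, preserving the relation. The remaining algebra is straightforward rescaling that introduces the factors $\Delta/\Delta_{\min}$ and $\Delta/\Delta(k)$ needed to bring the bound into the desired form.
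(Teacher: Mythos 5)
Your proposal is correct and follows essentially the same route as the paper: bound each per-pull gap by $\Delta$, convert $N(I,k)$ to $A(I)\,N(k)$, and invoke Lemma~\ref{lemma:2_N_e} to bound the total pulls, yielding the stated rescaled form. Your extra care in justifying $N(I,k) \le A(I)\,N(k)$ is a welcome refinement of a step the paper asserts as an identity without comment, but it does not change the argument.
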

\begin{proof}
For simplicity and considering the difficulty of deriving a precise bound for our algorithm, we apply a relatively loose relaxation in our analysis.
\begin{align*}
&\sum_{I<k\leq M} (\mu(I)-\mu(k))N(I,k) + \sum_{k>M} (\mu(I)-\mu(M))N(I,k) \\
\le& \sum_{I<k\leq M}\Delta N(I,k) + \sum_{k>M} \Delta N(I,k) \\
=& A(I) \left( \sum_{I<k\leq M} \frac{\Delta}{\Delta_{\min}} \cdot \Delta_{\min} N(k)
+ \sum_{k> M} \frac{\Delta}{\Delta(k)} \cdot \Delta(k) N(k) \right) \\
\leq &A(I) \left( \sum_{I<k\leq M} \frac{\Delta}{\Delta_{\min}} \cdot \frac{b\log(\delta^{-1})}{\Delta_{\min}}
+ \sum_{k>M} \frac{\Delta}{\Delta(k)} \cdot \frac{c\log(\delta^{-1})}{\Delta(k)} \right) \\
\end{align*}
Here, we use Lemma~\ref{lemma:2_N_e} to bound the total number of pulls of arm $k$,$N(k)$.
\end{proof}

\begin{lemma}
    For part (3), it can be upper bounded as
    \[
    \sum_{k > M} \frac{c \log \delta^{-1}}{\Delta_k}
    \]
\end{lemma}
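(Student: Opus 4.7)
The plan is to bound this term in two short steps. Part (3) has the form $\sum_{k>M}(\mu(M)-\mu(k))\,N(k)$, and since $\mu(M)-\mu(k) = \Delta(k)$ by definition of the suboptimality gap for $k>M$, we can rewrite the sum as $\sum_{k>M}\Delta(k)\,N(k)$. This isolates exactly the quantity we have already controlled for suboptimal arms: the total number of pulls before rejection.

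The key input is the periodic-setting pull bound from Lemma~\ref{lemma:2_N_e}, which states that on the high-probability event where the confidence intervals hold, every suboptimal arm $k>M$ is rejected after at most $\frac{32\beta^2\log\delta^{-1}}{\Delta(k)^2}$ exploration pulls. Write this bound as $\frac{c\log\delta^{-1}}{\Delta(k)^2}$ with a universal constant $c$ absorbing the $32\beta^2$ factor. Once the arm is rejected, the algorithm removes it from $\mathcal{K}_t$, so no further pulls are incurred. Substituting gives
\[
\sum_{k>M}\Delta(k)\,N(k)
\;\le\;
\sum_{k>M}\Delta(k)\cdot \frac{c\log\delta^{-1}}{\Delta(k)^2}
\;=\;
\sum_{k>M}\frac{c\log\delta^{-1}}{\Delta(k)},
\]
which is exactly the claimed bound. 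A small bookkeeping point is that $N(k)$ in Part (3) counts pulls across all activation patterns (all values of $I$), while Lemma~\ref{lemma:2_N_e} bounds the cumulative number of pulls before rejection; these coincide because rejection is a global event triggered after enough total samples are collected, independent of which subset of agents was active at each pull.

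The low-probability event on which the confidence intervals fail is handled by the standard union bound: with the choice $\delta = 1/T^2$ used in Theorem~\ref{thm:bandit}, the contribution from the failure event is at most a constant. The main subtlety, if any, is ensuring that the pull bound still applies in the periodic-asynchronous setting, where different agents accumulate samples at different rates; but since Lemma~\ref{lemma:2_N_e} is stated in terms of the aggregate count $T_t(k)=\sum_m n_t^{(m)}(k)$ and the elimination rule is evaluated on the globally synchronized estimator $\hat{\mu}_t^{(m)}(k)$, the same argument carries over without modification.
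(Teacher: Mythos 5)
Your proposal is correct and matches the paper's approach exactly: the paper's own proof is the single line ``this Lemma can be easily [obtained] by using Lemma~\ref{lemma:2_N_e},'' i.e., bound $N(k)$ by the rejection threshold $\frac{c\log\delta^{-1}}{\Delta(k)^2}$ and multiply by the gap $\Delta(k)$, which is precisely what you do. Your additional remarks on the failure event and on reconciling the aggregate count $N(k)$ with the per-activation counts are details the paper leaves implicit, and they are handled correctly.
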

This Lemma can be easily get by using Lemma~\ref{lemma:2_N_e}.

In conclusion, We sum the 3 parts together and we can get the final regret.
\begin{align*}
R[T]&\leq \sum_{I} A(I) \left( 
    \frac{\Delta}{\Delta_{\min}} \sum_{I < k \leq M} \frac{2b \log \delta^{-1}}{\Delta_{\min}} 
    + \sum_{k > M} \left( \frac{\Delta}{\Delta_k} + \frac{(1+\beta)\Delta}{\Delta_{\min}} \right) \cdot \frac{c \log \delta^{-1}}{\Delta_k} 
\right) \\
&\quad + \sum_{k > M} \frac{c \log \delta^{-1}}{\Delta_k}
\end{align*}

Next, we will prove the communication cost of the extended algorithm,
\begin{lemma}[Communication Rounds Upper Bound]
\label{lemma:comm_round_e}
The total number of communication rounds can be upper bounded by:
\begin{align*}
    \sum_{k\le M}\log_{\beta}(\frac{8\beta}{\Delta_{\text{min}}})+\sum_{k>M}\log_{\beta}(\frac{8\beta}{(\mu(M)-\mu(k))})
\end{align*}
\end{lemma}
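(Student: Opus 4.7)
The plan is to mirror the argument of Lemma~\ref{lemma:comm_round} from the synchronous case, but with the correct definition of the relevant gap in the periodic asynchronous setting. The essential structural fact is that communication is triggered only when an agent's Estimated Confidence Radius for some active arm $k$ shrinks by a multiplicative factor $\beta$ relative to its value at the last communication. Since $\mathrm{ECR}_1(k)=1$ by initialization, the number of triggers per arm $k$ is at most $\log_{\beta}\bigl(1/\mathrm{ECR}_{\tau_k}(k)\bigr)$, where $\tau_k$ denotes the last round before arm $k$ is accepted or rejected and removed from the active set $\mathcal{K}_t$. Summing over all arms then yields the claimed bound.

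First, I would establish the lower bound on $\mathrm{ECR}_{\tau_k}(k)$ for an optimal arm $k\le M$. Unlike the synchronous case, the periodic algorithm must fully sort the top-$M$ arms (since the effective set of optimal arms varies with the number of active agents at each slot). Thus arm $k$ remains active as long as the confidence intervals of all optimal arms pairwise overlap. Combining the acceptance rule with the concentration bound from Lemma 6, at time $\tau_k$ we must still have $4\cdot 2\beta\,\mathrm{CR}_{[0,1]}(T_{\tau_k}(k),\delta)\ge\Delta_{\min}$, i.e., $8\beta\,\mathrm{ECR}_{\tau_k}(k)\ge\Delta_{\min}$. Analogously, for a suboptimal arm $k>M$, the rejection condition forces $8\beta\,\mathrm{ECR}_{\tau_k}(k)\ge\mu(M)-\mu(k)$.

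Next, I would convert these radius bounds into communication-round counts. Since each trigger shrinks the ECR by at least a factor $\beta$ relative to its previous value, the number of communication rounds associated with arm $k$ is at most
\[
\log_{\beta}\!\left(\frac{\mathrm{ECR}_1(k)}{\mathrm{ECR}_{\tau_k}(k)}\right)
\le \log_{\beta}\!\left(\frac{8\beta}{\Delta(k)}\right),
\]
where $\Delta(k)=\Delta_{\min}$ for $k\le M$ and $\Delta(k)=\mu(M)-\mu(k)$ for $k>M$. Summing over arms yields the stated upper bound. The only subtle step is the book-keeping needed because in the asynchronous periodic setting the ECR is updated only at the end of LCM epochs rather than at every pull: I would argue that restricting triggers to LCM boundaries can only decrease, not increase, the number of communication rounds, so the bound derived for the per-pull triggering mechanism remains valid.

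The main obstacle I anticipate is justifying the use of $\Delta_{\min}$ rather than the standard suboptimality gap $\mu(k)-\mu(M+1)$ for optimal arms: in the periodic model, since the number of active players fluctuates, the algorithm must be able to rank the top-$M$ arms rather than merely separate them from the best suboptimal arm, and this requires carefully invoking the instance-dependent version of the elimination rule proven in Lemma~\ref{lemma:2_N_e}. Aside from this, the remainder of the argument is a direct transcription of the synchronous proof.
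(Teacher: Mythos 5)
Your proposal is correct and follows essentially the same route as the paper's own proof: bound $\mathrm{ECR}_{\tau_k}(k)$ from below via the acceptance/rejection conditions (using $\Delta_{\min}$ for optimal arms because the top-$M$ must be fully sorted, and $\mu(M)-\mu(k)$ for suboptimal arms), then count triggers as $\log_{\beta}(\mathrm{ECR}_1(k)/\mathrm{ECR}_{\tau_k}(k))$ and sum over arms. Your additional remark that restricting triggers to LCM boundaries can only reduce the count is a point the paper leaves implicit, but it does not change the argument.
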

\begin{proof}
Let $\tau_k$ be the last round to pull the Top-$M$ optimal arms, after which they will all be accepted. In our algorithm, this round is essentially determined by the minimal gap $\Delta_{\min}$.
\begin{align*}
    4(2\beta\text{CR}_{[0.1]}(T_{\tau_k},\delta)) \ge \Delta_{\text{min}}
\end{align*}
And we can get:
\begin{align*}
    8\beta\text{ECR}_{\tau_k}(k)\ge \Delta_{\text{min}}
\end{align*}
Because of the check condition, we can evaluate the times to communicate as follows:
\begin{align*}
    \log_{\beta}(\frac{ECR_1(k)}{ECR_{\tau_k}(k)}) \le \log_{\beta}(\frac{8\beta}{\Delta_{\text{min}}})
\end{align*}
So the expected number of communications by optimal arms is at most 
\begin{align*}
    \sum_{k\le M}\log_{\beta}(\frac{8\beta}{\Delta_{\text{min}}})
\end{align*}
the analysis is same on suboptimal arms $\displaystyle \sum_{k>M}\log_{\beta}(\frac{8\beta}{(\mu(M)-\mu(k))})$ and sum up, the total number of communication rounds overheads is upper bounded by
\begin{align*}
    \sum_{k\le M}\log_{\beta}(\frac{8\beta}{\Delta_{\text{min}}})+\sum_{k>M}\log_{\beta}(\frac{8\beta}{(\mu(M)-\mu(k))})
\end{align*}
\end{proof}
\begin{lemma}[Bit Complexity for Quantized Message]
    We simplify this expression $ \frac{\omega_{(m)}}{\sum_{i=1}^{M}\omega_i} $ as $\alpha_m$ By applying the adapting quantization strategy, in expectation, the ($7+ \log_2(1+\beta+\sqrt{\alpha_m\ln 2/2}$) bits are sufficient to represent the truncated version of $\tilde{\delta}^{(m)}_{t}(k)$
\end{lemma}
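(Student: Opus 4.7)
The plan is to mirror the proof of Lemma~\ref{lemma:quant_bits} from the synchronous setting, modifying only the step where the sub-Gaussian parameter of the empirical-mean difference is computed, since this is the only place where the \emph{relative} sampling rate of agent $m$ enters the bound. In the synchronous case, all agents pull each active arm once per phase, so agent $m$'s share of pulls satisfies $n_t^{(m)}(k)/T_t(k) = 1/M$, which gave rise to the factor $M$ inside the square root. In the periodic-asynchronous setting, agent $m$ participates at a rate proportional to $\omega_m$, so that share becomes $n_t^{(m)}(k)/T_t(k) \to \alpha_m := \omega_m / \sum_i \omega_i$; this is the single substitution driving the stated bound.

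Concretely, I would first reuse the triangle-inequality decomposition
\[
|\tilde{\delta}_t^{(m)}(k)| \le \sqrt{1/T_t(k)} + \sqrt{1/T_s(k)} + |\bar{\mu}_t^{(m)}(k) - \bar{\mu}_s^{(m)}(k)|,
\]
where $s$ is the previous communication time; the first two terms come from the quantization error and are unchanged from the synchronous proof. For the third term, I would write
\[
\bar{\mu}_t^{(m)}(k) - \bar{\mu}_s^{(m)}(k) = \frac{\sum_{p=1}^{s} \gamma_p^{(m)}(k) + \sum_{p=s+1}^{t} \gamma_p^{(m)}(k)}{n_t^{(m)}(k)} - \frac{\sum_{p=1}^{s} \gamma_p^{(m)}(k)}{n_s^{(m)}(k)},
\]
which is $\tfrac{1}{2}\sqrt{1/n_s^{(m)}(k) - 1/n_t^{(m)}(k)}$-sub-Gaussian by independence of reward samples. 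Since agent $m$'s pull count satisfies $n_t^{(m)}(k) \approx \alpha_m T_t(k)$ under periodic activation, the sub-Gaussian parameter is bounded above by $\tfrac{1}{2}\sqrt{1/(\alpha_m T_t(k))}$ (up to lower order terms controlled by the triggering condition $\mathrm{ECR}_t \le \beta \mathrm{ECR}_s$). This yields the tail bound
\[
\Pr\left(|\bar{\mu}_t^{(m)}(k) - \bar{\mu}_s^{(m)}(k)| \ge \sqrt{x^2/T_t(k)}\right) \le 2\exp(-2 x^2 / \alpha_m),
\]
replacing the synchronous $\beta^2 M$ with $\alpha_m$ (after absorbing $\beta$ factors via the triggering condition).

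Given this tail bound, I would then repeat verbatim the computation in the synchronous proof: translating the tail bound on $|\tilde{\delta}_t^{(m)}(k)|$ into a tail bound on its bit-length $L_t^{(m)}(k) \le 3 + \tfrac{1}{2}\log T_t(k) + \log_2(|\tilde{\delta}_t^{(m)}(k)|)$, substituting $l = 3 + \log_2(1+\beta+x)$, and integrating the resulting CDF to obtain
\[
\mathbb{E}[L_t^{(m)}(k)] \le 6 + \log_2\bigl(1+\beta+\sqrt{\alpha_m \ln 2 / 2}\bigr).
\]
Adding one bit for the sign of $\tilde{\delta}_t^{(m)}(k)$ produces the claimed bound of $7 + \log_2(1+\beta+\sqrt{\alpha_m \ln 2/2})$ bits.

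The main obstacle is justifying the substitution $n_t^{(m)}(k) \approx \alpha_m T_t(k)$ rigorously under periodic activation. Unlike the synchronous case, where this ratio is exactly $1/M$ by construction of the round-robin schedule, the asynchronous schedule only satisfies this equality at multiples of the least common multiple $l$ of the agent periods. Because communication is triggered exclusively at these LCM boundaries (by the algorithm's design), the ratio is in fact exact at every communication time $s$ and $t$, which closes the gap. A brief remark noting this alignment of communication with LCM boundaries is, in my view, the only subtle point that distinguishes the asynchronous proof from its synchronous counterpart.
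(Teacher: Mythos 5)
Your proposal follows the same route as the paper's proof: reuse the synchronous quantization--truncation argument essentially verbatim, replace agent $m$'s pull share $1/M$ by $\alpha_m = \omega_m/\sum_{i}\omega_i$ in the sub-Gaussian parameter of $\bar{\mu}_t^{(m)}(k)-\bar{\mu}_s^{(m)}(k)$, and integrate the bit-length CDF to reach $7+\log_2\left(1+\beta+\sqrt{\alpha_m\ln 2/2}\right)$, with your observation that $n_t^{(m)}(k)/T_t(k)=\alpha_m$ holds exactly at the LCM boundaries where communication occurs being a useful point the paper leaves implicit. The only blemish is the placement of $\alpha_m$ in your tail bound's exponent, $\exp(-2x^2/\alpha_m)$ rather than $\exp(-2x^2\alpha_m/\beta^2)$, which does not follow from your own stated sub-Gaussian parameter; the paper's derivation contains the same inconsistency at the same step, so this does not distinguish your argument from theirs.
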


\begin{proof}
the quantization leads to a quantization error of at most $\sqrt{\frac{1}{T_{t}(k)}}$ and have the following holds:
\begin{align*}
    \left| \tilde{\mu}_{t}^{(m)}(k)- \bar{\mu}_{t}^{(m)}(k)  \right| \le \sqrt{\frac{1}{T_{t}(k)}}
\end{align*}
We set the truncated version of the difference $\tilde{\delta}^{(m)}_{t}(k)$ and it can be bounded as  
\begin{align*}
    \left| \tilde{\delta}^{(m)}_{t}(k) \right| 
    &=  \left| \tilde{\mu}_{t}^{(m)}(k)-\tilde{\mu}_{s}^{(m)}(k)\right|\\
    &= \left| \tilde{\mu}_{t}^{(m)}(k)-\bar{\mu}_{t}^{(m)}(k)
        -(\tilde{\mu}_{s}^{(m)}(k)-\bar{\mu}_{s}^{(m)}(k)) \right.\\
    &\quad \left. + \bar{\mu}_{t}^{(m)}(k)-\bar{\mu}_{s}^{(m)}(k)\right|\\
    &\le \left| \tilde{\mu}_{t}^{(m)}(k)-\bar{\mu}_{t}^{(m)}(k)\right| 
        + \left| \tilde{\mu}_{s}^{(m)}(k)-\bar{\mu}_{s}^{(m)}(k)\right| \\
    &\quad + \left |\bar{\mu}_{t}^{(m)}(k)-\bar{\mu}_{s}^{(m)}(k)\right |\\
    &\le \sqrt{\frac{1}{T_{t}(k)}}+\sqrt{\frac{1}{T_{s}(k)}} 
        + \left |\bar{\mu}_{t}^{(m)}(k)-\bar{\mu}_{s}^{(m)}(k)\right |
\end{align*}

Further, we consider $\bar{\mu}_{t}^{(m)}(k)-\bar{\mu}_{s}^{(m)}(k)$,we can rewrite it as 
\begin{align*}
   & \bar{\mu}_{t}^{(m)}(k)-\bar{\mu}_{s}^{(m)}(k)
    =\frac{\sum^{t}_{\text{p=1}}\gamma^{(m)}_{\text{p}}(k)}{n_{t}(k)}- \frac{ \sum^{s}_{\text{p=1}}\gamma^{(m)}_{\text{p}}(k)}{n_{s}(k)}\\
    &=\frac{ \sum^{s}_{\text{p=1}}\gamma^{(m)}_{\text{p}}(k)+\sum^{t}_{\text{p=s}}\gamma_{s}^{(m)}(k)}{n_{t}(k)}- \frac{ \sum^s_{\text{p=1}}\gamma^{(m)}_{\text{p}}(k)}{n_{s}(k)}\\
    &=\frac{1}{n_{t}(k)}\sum^{t}_{\text{p=s}}\gamma_{s}^{(m)}(k)+(\frac{1}{n_{t}(k)}-\frac{1}{n_{s}(k)})\sum_{\text{p=1}}^{s}\gamma_{s}^{(m)}(k)
\end{align*}
which is a $\frac{1}{2}\sqrt{\frac{1}{n_{s}(k)}-\frac{1}{n_{t}(k)}}$ sub-Gussian random variable since the unility samples are independent across time. Thus, we can further derive that, with a dummy variable x,
\begin{align*}
    &\mathbb{P}\left(
        \left |\bar{\mu}_{t}^{(m)}(k)-\bar{\mu}_{s}^{(m)}(k)\right |
        \ge \sqrt{\frac{x^2}{T_{t}(k)}}
    \right) \\
    &\le 2\exp\left[
        -\frac{2x^2}{T_{t}(k)} \cdot 
        \frac{1}{\frac{1}{n_{s}(k)} - \frac{1}{n_{t}(k)}}
    \right] \\
    \Rightarrow\quad 
    &\mathbb{P}\left(
        \left |\bar{\mu}_{t}^{(m)}(k)-\bar{\mu}_{s}^{(m)}(k)\right |
        \ge \sqrt{\frac{x^2}{T_{t}(k)}}
    \right) \\
    &\le 2\exp\left[
        -2x^2 \cdot \frac{\omega^{(m)}}{
        \sum_{i=1}^{M} \omega_i \cdot \beta^2}
    \right] \\
    \Rightarrow\quad 
    &\mathbb{P}\left( 
        \left| \tilde{\delta}^{(m)}_{t}(k) \right| 
        \ge \sqrt{\frac{1}{T_{t}(k)}} + 
        \sqrt{\frac{1}{T_{s}(k)}} + 
        \sqrt{\frac{x^2}{T_{t}(k)}}
    \right) \\
    &\le 2\exp\left[
        -2x^2 \cdot \frac{\omega^{(m)}}{
        \sum_{i=1}^{M} \omega_i \cdot \beta^2}
    \right] \\
    \overset{(a)}{\Rightarrow}\quad 
    &\mathbb{P}\left(
        L_{t}^{(m)}(k) > 
        3 + \frac{\log T_{t}(k)}{2} + 
        \log_2\left(\frac{1+\beta+x}{\sqrt{T_{t}(k)}}\right)
    \right) \\
    &\le 2\exp\left[
        -2x^2 \cdot \frac{\omega^{(m)}}{
        \sum_{i=1}^{M} \omega_i \cdot \beta^2}
    \right] \\
    \Rightarrow\quad 
    &\mathbb{P}\left(
        L_{t}^{(m)}(k) \le 3 + 
        \log_2(1+\beta+x)
    \right) \\
    &\ge 1 - 2\exp\left[
        -2x^2 \cdot \frac{\omega^{(m)}}{
        \sum_{i=1}^{M} \omega_i \cdot \beta^2}
    \right] \\
    \overset{(b)}{\Rightarrow}\quad 
    &\mathbb{P}\left(
        L_{t}^{(m)}(k) \le l
    \right) \\
    &\ge 1 - 2\exp\left[
        -2x^2 \cdot \frac{\omega^{(m)}}{
        \sum_{i=1}^{M} \omega_i \cdot \beta^2}
    \right]
\end{align*}

where $L^r_{k,m}$ in implication (a) is the length of the truncated version $|\tilde{\delta}^r_{k,m}|$ and is upper bounded by 
\begin{align*}
     L_{t}^{(m)}(k)&\leq \lceil 1+\log T_{t}(k)/2\rceil-\lfloor\log_2(1/|\tilde{\delta}_{t}^{(m)}(k)|)\rfloor\\
     &\leq 3 + \frac{\log{T_{t}}(k)}{2}+\log_2(|\tilde{\delta}_{t}^{(m)}(k)|).
 \end{align*}  
 In deriving (b), we substitute the variable $3+\log_2(1+\beta+x)$ with $l$, which satisfies that $l\geq 1+\beta+\log_2(3+\beta\sqrt{\alpha_m\ln 2/2})$, we simplify this expression $ \frac{\omega^{(m)}}{\sum_{i=1}^{M}\omega_i} $ as $\alpha_m$ and thus equivalently $x = 2^{l-3}-1-\beta$. With the above results and viewing $L_{t}^{(m)}(k)$ as a random variable, we have that its cumulative distribution function (CDF) $F_{L_{t}^{(m)}(k)}(l)$ satisfies the following property: $\forall l > b$,where $b = \lceil3+\log_2(1+\beta+\beta\sqrt{\alpha_m\ln 2/2})\rceil$
\begin{align*}
F_{L_{t}^{(m)}(k)}(l) = \mathbb{P}\left(L_{r}^{(m)}(k) \leq l\right)\geq 1-2\exp\left[-\frac{(2^{l-3}-1-\beta)^2}{\alpha_m\cdot \beta^2}\right].
\end{align*}
Using the property of CDF, we can bound the expectation of $L_{t}^{(m)}(k)$ as
\begin{align*}
    \mathbb{E}\left[L_{t}^{(m)}(k)\right] &= \sum_{l = 0}^{\infty} (1-F_{L^r_{k,m}}(l))\\
    &\leq b+ \sum_{l=b}^{\infty}2\exp\left[-\frac{(2^{l-3}-1-\beta)^2}{\alpha_m\cdot \beta^2}\right]\\
    &\leq b+ \int_{l=b}^{\infty}2\exp\left[-\frac{(2^{l-3}-1-\beta)^2}{\alpha_m\cdot \beta^2}\right]\mathrm{d} l\\
    &\leq b+2\\
    &\leq 6+ \log_2(1+\beta+\beta\sqrt{\alpha_m\ln 2/2})
\end{align*}
Thus, we have that in expectation, the truncated version of $\tilde{\delta}^{(m)}_{t}(k)$ has a length that is less than $ 6+ \log_2(1+\beta+\beta\sqrt{\alpha_m\ln 2/2})$ bits.. In addition, 1-bit information should also be transmitted to indicate the sign of $\tilde{\delta}^{(m)}_{t}(k)$.As a
summary, in expectation, $ 7+ \log_2(1+\beta+\beta\sqrt{\alpha_m\ln 2/2})$ bits is sufficient to represent the truncated version of $\tilde{\delta}^{(m)}_{t}(k)$
\end{proof}
\begin{lemma}
The number of bits required to transmit the first estimated mean between agents in our differential quantization scheme is upper bounded by
\[
    \left\lceil 1 + \frac{1}{2} \log \left( \frac{\log(\delta^{-1})}{2 \beta^{2}} + M K \right) \right\rceil,
\]
which implies an overall communication cost of order \( \mathcal{O}(\log \log T) \).
\end{lemma}

\begin{proof}
In our threshold-based triggering mechanism, we initialize the Estimated Confidence Radius (ECR) to 1. A communication is triggered whenever the current ECR satisfies 
\[
ECR_t \leq \beta \times ECR_{\mathrm{last}},
\]
where \( \beta \in (0,1) \) is a fixed threshold parameter. According to Lemma~\ref{lemma:comm_round}, this guarantees a constant number of communications.

At the time of the first communication, by the definition of ECR, we have
\[
\sqrt{\frac{\log(\delta^{-1})}{2 T_t(k)}} \leq \beta \times 1,
\]
which can be rearranged to give
\[
T_t(k) \leq \frac{\log(\delta^{-1})}{2 \beta^{2}} + M K.
\]

In our differential quantization strategy, the mean reward is quantized using 
\[
\left\lceil 1 + \frac{\log T_t(k)}{2} \right\rceil
\]
bits. Since this is the first communication, there is no previous mean estimate available for differential encoding, so the full quantized mean must be transmitted.

Substituting the bound on \( T_t(k) \), we have that the number of bits for the first communication is bounded by
\[
\left\lceil 1 + \frac{1}{2} \log \left( \frac{\log(\delta^{-1})}{2 \beta^{2}} + M K \right) \right\rceil,
\]
which grows asymptotically as \( \mathcal{O}(\log \log T) \) when \( T \to \infty \).
\end{proof}

Finally, communication only occurs at every least common multiple (LCM) period $l$, when all agents are simultaneously activated. Combining this with the previous results, we obtain the final bound/result as follows:
Let \(\alpha_m = \frac{\omega^{(m)}}{\sum_{i=1}^M \omega_i}\). The expected communication cost is bounded by
\[
\begin{aligned}
\mathbb{E}[R_C] = O\Bigg(&
2M^2 l\sum_{m=1}^{M}\left(\sum_{k=1}^K \log_{\beta}\left(\frac{8 \beta}{\Delta(k)}\right)-1\right)
C \\
& + 2M^3  Kl \\
& + 2M^3 l\cdot \left(
1 + \frac{1}{2} \log \left( \frac{\log(\delta^{-1})}{2 \beta^{2}} + M K \right)
\right)
\Bigg).
\end{aligned}\\
\]
\[\text{where} \quad C := \left[
 7 + \log_2\left(1 + \beta + \beta \sqrt{\frac{\alpha_m \ln 2}{2}}\right)
\right]
\]

\bibliography{ref}

\bibliography{algo}



\end{document}